\documentclass[11pt]{article}

\usepackage{etoolbox}
\newtoggle{colt}
\togglefalse{colt}

\iftoggle{colt}{}{
\usepackage[sort&compress,numbers]{natbib}
}

\usepackage{boxedminipage}
\usepackage{multirow,nicefrac}
\usepackage{makecell,upgreek}
\usepackage{footnote}
\usepackage{longtable}
\usepackage{tablefootnote}
\usepackage[T1]{fontenc}
\usepackage{verbatim} 
\usepackage[utf8]{inputenc}

\usepackage{booktabs}
\usepackage{tabularx}
\usepackage{adjustbox}

\iftoggle{colt}{}{
\usepackage[margin=1in]{geometry}
}
\usepackage{algorithm}
\usepackage{algpseudocode}

\usepackage[dvipsnames]{xcolor}

\usepackage{graphicx}
\usepackage{caption}
\usepackage{subcaption}

\usepackage{colortbl}
\definecolor{lightgray}{gray}{0.9}  %

\usepackage{amsmath}
\usepackage{amsthm}
\iftoggle{colt}{
  
}{
\usepackage[breaklinks=true]{hyperref}
}
\hypersetup{
	colorlinks=true,
	linkcolor=blue,
	citecolor=blue,
	urlcolor=blue
}

\usepackage[capitalise,nameinlink]{cleveref}

\usepackage{amsfonts,amssymb,mathrsfs}
\usepackage{mathtools}

\usepackage{appendix}

\theoremstyle{plain}
	\newtheorem{theorem}{Theorem}

	\newtheorem{lemma}{Lemma}
	
	\newtheorem{corollary}{Corollary}
	\newtheorem{proposition}{Proposition}

	\theoremstyle{definition}

	\newtheorem{definition}{Definition}

  \newtheorem{assumption}{Assumption}[section]

\Crefname{appendix}{Appendix}{Appendices}

\usepackage{autonum}

\usepackage[scaled]{beramono}
\usepackage{listings}
\definecolor{varorange}{RGB}{230,126,34}     %
\definecolor{opblue}{RGB}{52,152,219}        %
\definecolor{ifred}{RGB}{192,57,43}          %
\definecolor{commentgray}{RGB}{150,150,150}  %
\definecolor{codebg}{rgb}{0.98,0.98,0.98}    %

\newsavebox\codebox

\usepackage{authblk}

\newcommand{\norm}[1]{\left\|#1\right\|}
\newcommand{\normop}[1]{\left\|#1\right\|_{\mathrm{op}}}

\newcommand{\inprod}[2]{\left\langle #1, #2 \right\rangle}

\newcommand{\rr}{\mathbb{R}}
\newcommand{\ee}{\mathbb{E}}

\newcommand{\eye}{\mathbf{I}}

\DeclareMathOperator*{\argmin}{argmin}

\DeclareMathOperator{\Cov}{Cov}
\DeclareMathOperator{\trace}{Tr}

\newcommand{\mustar}{\mu^{\star}}
\newcommand{\muhat}{\widehat{\mu}}
\newcommand{\ustar}{u^{\star}}

\DeclareMathOperator{\diag}{diag}
\newcommand{\thetema}[1][t]{\widehat{\theta}_{\mathsf{EMA}, #1}}
\newcommand{\ema}{\textsf{EMA}}
\newcommand{\bema}{\textsf{BEMA}}
\newcommand{\bSigma}{\mathbf{\Sigma}}
\newcommand{\thetahat}{\widehat{\theta}}
\newcommand{\bPdot}{\dot{\mathbf{P}}}

\newcommand{\thetabar}{\overline{\theta}}
\newcommand{\cmin}{c_{\mathrm{min}}}
\newcommand{\thetasgd}{\thetahat^{\mathrm{SGD}}}
\newcommand{\pace}{\textsf{PACE}}

\renewcommand{\epsilon}{\varepsilon}

\def\ddefloop#1{\ifx\ddefloop#1\else\ddef{#1}\expandafter\ddefloop\fi}
\def\ddef#1{\expandafter\def\csname bb#1\endcsname{\ensuremath{\mathbb{#1}}}}
\ddefloop ABCDEFGHIJKLMNOPQRSTUVWXYZ\ddefloop

\def\ddefloop#1{\ifx\ddefloop#1\else\ddef{#1}\expandafter\ddefloop\fi}
\def\ddef#1{\expandafter\def\csname frak#1\endcsname{\ensuremath{\mathfrak{#1}}}}
\ddefloop ABCDEFGHIJKLMNOPQRSTUVWXYZ\ddefloop

\def\ddefloop#1{\ifx\ddefloop#1\else\ddef{#1}\expandafter\ddefloop\fi}
\def\ddef#1{\expandafter\def\csname fr#1\endcsname{\ensuremath{\mathfrak{#1}}}}
\ddefloop ABCDEFGHIJKLMNOPQRSTUVWXYZ\ddefloop

\def\ddefloop#1{\ifx\ddefloop#1\else\ddef{#1}\expandafter\ddefloop\fi}
\def\ddef#1{\expandafter\def\csname eul#1\endcsname{\ensuremath{\EuScript{#1}}}}
\ddefloop ABCDEFGHIJKLMNOPQRSTUVWXYZ\ddefloop

\def\ddefloop#1{\ifx\ddefloop#1\else\ddef{#1}\expandafter\ddefloop\fi}
\def\ddef#1{\expandafter\def\csname scr#1\endcsname{\ensuremath{\mathscr{#1}}}}
\ddefloop ABCDEFGHIJKLMNOPQRSTUVWXYZ\ddefloop

\def\ddefloop#1{\ifx\ddefloop#1\else\ddef{#1}\expandafter\ddefloop\fi}
\def\ddef#1{\expandafter\def\csname b#1\endcsname{\ensuremath{\mathbf{#1}}}}
\ddefloop ABCDEFGHIJKLMNOPQRSTUVWXYZ\ddefloop

\def\ddefloop#1{\ifx\ddefloop#1\else\ddef{#1}\expandafter\ddefloop\fi}
\def\ddef#1{\expandafter\def\csname bhat#1\endcsname{\ensuremath{\hat{\mathbf{#1}}}}}
\ddefloop ABCDEFGHIJKLMNOPQRSTUVWXYZ\ddefloop

\def\ddefloop#1{\ifx\ddefloop#1\else\ddef{#1}\expandafter\ddefloop\fi}
\def\ddef#1{\expandafter\def\csname btil#1\endcsname{\ensuremath{\tilde{\mathbf{#1}}}}}
\ddefloop ABCDEFGHIJKLMNOPQRSTUVWXYZ\ddefloop

\def\ddefloop#1{\ifx\ddefloop#1\else\ddef{#1}\expandafter\ddefloop\fi}
\def\ddef#1{\expandafter\def\csname bst#1\endcsname{\ensuremath{\mathbf{#1}^\star}}}
\ddefloop ABCDEFGHIJKLMNOPQRSTUVWXYZ\ddefloop

\def\ddefloop#1{\ifx\ddefloop#1\else\ddef{#1}\expandafter\ddefloop\fi}
\def\ddef#1{\expandafter\def\csname bst#1\endcsname{\ensuremath{\mathbf{#1}^\star}}}
\ddefloop abcdeghijklmnopqrstuvwxyz\ddefloop

\def\ddefloop#1{\ifx\ddefloop#1\else\ddef{#1}\expandafter\ddefloop\fi}
\def\ddef#1{\expandafter\def\csname bhat#1\endcsname{\ensuremath{\hat{\mathbf{#1}}}}}
\ddefloop abcdefghijklmnopqrstuvwxyz\ddefloop

\def\ddefloop#1{\ifx\ddefloop#1\else\ddef{#1}\expandafter\ddefloop\fi}
\def\ddef#1{\expandafter\def\csname b#1\endcsname{\ensuremath{\mathbf{#1}}}}
\ddefloop abcdeghijklnopqrstuvwxyz\ddefloop

\def\ddefloop#1{\ifx\ddefloop#1\else\ddef{#1}\expandafter\ddefloop\fi}
\def\ddef#1{\expandafter\def\csname barb#1\endcsname{\ensuremath{\bar{\mathbf{#1}}}}}
\ddefloop abcdefghijklmnopqrstuvwxyz\ddefloop

\def\ddef#1{\expandafter\def\csname c#1\endcsname{\ensuremath{\mathcal{#1}}}}
\ddefloop ABCDEFGHIJKLMNOPQRSTUVWXYZ\ddefloop
\def\ddef#1{\expandafter\def\csname h#1\endcsname{\ensuremath{\widehat{#1}}}}
\ddefloop ABCDEFGHIJKLMNOPQRSTUVWXYZ\ddefloop
\def\ddef#1{\expandafter\def\csname hc#1\endcsname{\ensuremath{\widehat{\mathcal{#1}}}}}
\ddefloop ABCDEFGHIJKLMNOPQRSTUVWXYZ\ddefloop
\def\ddef#1{\expandafter\def\csname t#1\endcsname{\ensuremath{\widetilde{#1}}}}
\ddefloop ABCDEFGHIJKLMNOPQRSTUVWXYZ\ddefloop
\def\ddef#1{\expandafter\def\csname tc#1\endcsname{\ensuremath{\widetilde{\mathcal{#1}}}}}
\ddefloop ABCDEFGHIJKLMNOPQRSTUVWXYZ\ddefloop

\usepackage[suppress]{color-edits}
\addauthor{ab}{red}
\addauthor{ka}{green}

\Crefname{assumption}{Assumption}{Assumptions}

\title{Training for the Model You Return: Improving Optimization for Iterate-Averaged Language Models\thanks{The code used to run our evaluations can be found at \url{https://github.com/leonou2010/pace-optimizer}.}}

\usepackage[section]{placeins}

\author[1]{Kwok Chun Au}
\author[1,2]{Adam Block}

\affil[1]{Department of Computer Science, Columbia University}
\affil[2]{Department of Electrical Engineering, Columbia University}

\date{}

\begin{document}

\maketitle

\begin{abstract}
Many modern Language Model (LM) pipelines return an averaged model, such as an exponential moving average of the training iterates, rather than the final iterate itself. This raises a fundamental question: given that we will return an iterate average, how should we change training to improve the performance of this average? We study this question by formulating optimizer design for the iterate-average estimator as an optimal-control problem. In a continuous-time stochastic quadratic model, we solve for the control strategy that minimizes the error of the returned average subject to a penalty on the size of the intervention. A practical approximation to this controller yields \pace, a lightweight wrapper around AdamW that pulls the live weights toward their exponential moving average with a clipped, per-coordinate control strength. We prove that a stylized version of \pace\ converges at the standard stochastic convex optimization rate, up to a factor depending on the averaging rule, while in the quadratic setting it can strictly improve the limiting squared error of the iterate-average estimator and can do so by an arbitrarily large factor on some instances. Empirically, our results suggest that \pace\ improves over AdamW and EMA-evaluated AdamW in supervised fine-tuning of 1-2B parameter LMs and in GPT-2 pretraining on FineWeb for a wide range of learning rates, decay schedules, and other hyperparameters.

\end{abstract}

\section{Introduction}

Inspired by the classical theory of stochastic convex optimization \citep{ruppert1988efficient,polyak1992acceleration}, many Language Model (LM) pipelines use some form of model averaging, such as an exponential moving average (\ema) of training iterates, to stabilize optimization and improve the performance of the returned model \citep{izmailov2018averaging,kaddour2022stop,block2025ema,block2024butterfly,busbridge2023scale,olmo20242,lambert2024tulu,wortsman2022model,smollm2}.  A vast body of work has been devoted to understanding LM training, but the ubiquity of iterate averaging in practice raises a fundamental question: \emph{given that an averaged LM will be returned, how should we change training to improve the performance of this average?}

While prior work has implicitly addressed this question by observing that iterate averaging can allow for more aggressive training through higher learning rates or more limited learning rate decay \citep{defazio2024schedule,block2024butterfly,block2025ema}, in this work, we take a direct approach by asking whether we can design a simple modification to the base optimization algorithm that explicitly controls the training trajectory to improve the performance of the iterate-average estimator.  Our starting point is to follow \citet{block2025ema} and observe that once we commit to returning an average of the training trajectory, the optimization algorithm is no longer just producing a final iterate, but rather a statistic of the entire trajectory: we may thus rephrase our key question by asking how we can best modify the training trajectory so as to make the returned model average maximally close to the optimum.

To answer this question, we take a principled approach to algorithm design by deriving a new optimization algorithm from an optimal-control formulation of a stochastic quadratic optimization problem.  Indeed, while LM training is obviously not quadratic, many recent works have derived practically effective optimization interventions by designing algorithms in analytically tractable settings such as convex, quadratic, or linear problems \citep{vyas2024soap,zhang2019lookahead,gupta2018shampoo,block2025ema,duchi2011adaptive}.  In particular, our derivation begins in a deliberately simplified setting: continuous-time stochastic gradient descent on a quadratic objective with additive noise, where the `modification' is realized as an additive \emph{control}.  
In this model, the uncontrolled dynamics reduce to the familiar Ornstein--Uhlenbeck approximation of noisy gradient descent near a quadratic minimum \citep{mandt2015continuous,block2025ema,kutoyants2004statistical}.  We then add a control input to the dynamics and choose this input to minimize the squared error of the returned iterate average, subject to a quadratic penalty on the size of the intervention.  This gives a linear-quadratic stochastic control problem \citep{georgiou2013separation,yong1999stochastic,kwakernaak1972linear}.  Solving this problem yields an optimal controller with a natural interpretation: the algorithm should pull the live iterate toward an estimate of the optimum and toward consistency with the accumulated average of the trajectory.

\begin{figure}[t]
    \centering
    \includegraphics[width=\textwidth]{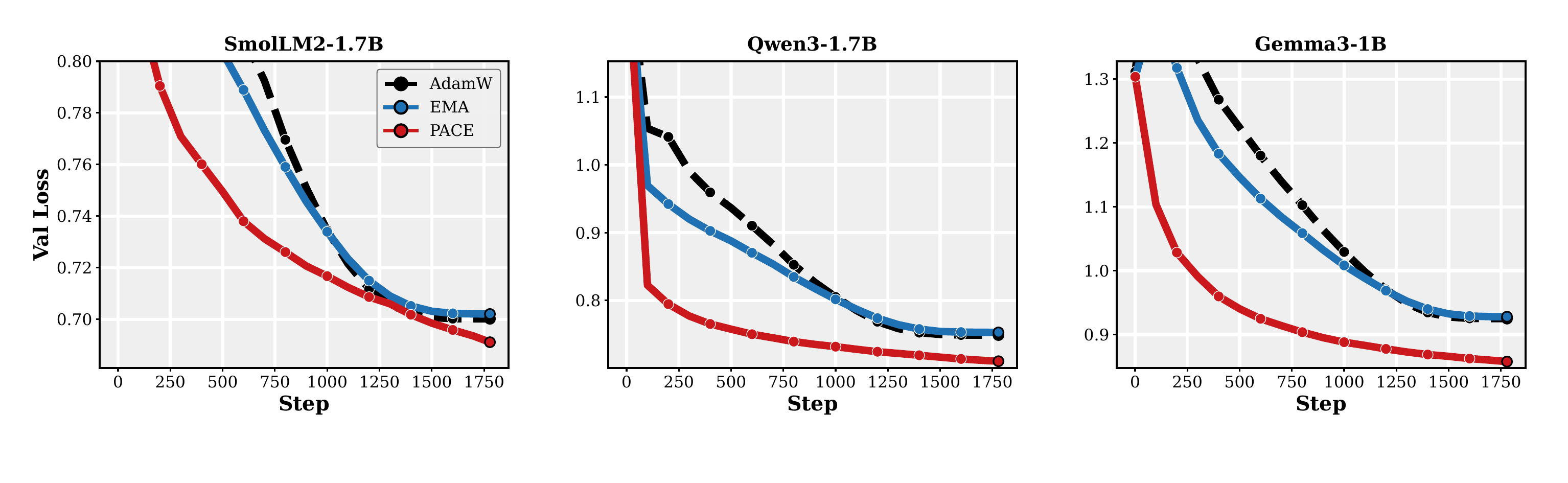}
    \caption{
        \textbf{Performance of \pace\ on fine-tuning.} Validation cross-entropy on \texttt{smol-smoltalk} for SmolLM2-1.7B (\textbf{left}), Qwen3-1.7B (\textbf{middle}), and Gemma3-1B (\textbf{right}).  For each model, \pace\ uses a constant learning rate, while the AdamW and \ema\ baselines use their best learning-rate-decay schedule (cosine or WSD).  \pace\ strictly improves on both baselines on all three models; see \Cref{fig:appendix_multiseed_robust} for a three-seed robustness study.}
    \label{fig:train_curves}
\end{figure}

The exact optimal controller is not directly implementable in modern LM training because it depends on quantities unknown to the learner.  We therefore turn the controller into a practical algorithm through a sequence of heuristic approximations.  We call the resulting method \pace\ (\Cref{alg:controlled-bema}), a lightweight wrapper around AdamW \citep{loshchilov2017fixing} that pays an additional memory cost of model weights in order to better stabilize the training process.  Because \pace\ is derived in such a simplified setting, it is not a priori obvious whether the resulting algorithm has any concrete theoretical guarantees.  We show that a stylized version of the algorithm has convergence guarantees for general convex losses with unbiased stochastic gradients:

\begin{theorem}[Informal version of \cref{thm:convex_convergence}]\label{thm:convex_convergence_informal}
    For any convex loss with unbiased stochastic gradients, the iterate-average estimator returned by \pace\ converges to the minimum at the standard stochastic convex optimization rate, up to a factor depending on the strength of the \ema.
\end{theorem}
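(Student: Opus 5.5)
I will formalize the stylized algorithm as projected (or unconstrained, with bounded iterates) stochastic gradient descent with an additive pull toward the running average. Concretely, with step size $\eta$, averaging weight $\beta\in(0,1)$ and control strength $\lambda\in[0,\lambda_{\max}]$ (the clipping guarantees boundedness), the dynamics are
\begin{align}
\theta_{t+1} &= \theta_t - \eta g_t - \eta\lambda(\theta_t - \bar\theta_t), \qquad \bar\theta_{t+1} = (1-\beta)\bar\theta_t + \beta\theta_{t+1},
\end{align}
where $\ee[g_t\mid\mathcal F_t]=\nabla f(\theta_t)$ and $\ee\|g_t\|^2\le G^2$. The returned estimator is $\bar\theta_T$ (or a uniform average of the $\bar\theta_t$). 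The target bound is $\ee f(\hat\theta_T)-f^\star \lesssim C(\beta)\bigl(D^2/(\eta T)+\eta G^2\bigr)$, optimized to $C(\beta)DG/\sqrt T$.

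The key step is to find a potential that absorbs the control term. I would use the Lyapunov function $\Phi_t = \|\theta_t-\theta^\star\|^2 + c\,\|\bar\theta_t-\theta^\star\|^2$ with $c$ chosen as a function of $\beta$ and $\lambda$. Expanding $\|\theta_{t+1}-\theta^\star\|^2$ gives the usual SGD terms $-2\eta\langle g_t,\theta_t-\theta^\star\rangle+\eta^2\|g_t\|^2$. It also produces the cross term $-2\eta\lambda\langle\theta_t-\bar\theta_t,\theta_t-\theta^\star\rangle$. By the polarization identity this equals $-\eta\lambda(\|\theta_t-\bar\theta_t\|^2+\|\theta_t-\theta^\star\|^2-\|\bar\theta_t-\theta^\star\|^2)$. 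The only bad piece is the $+\eta\lambda\|\bar\theta_t-\theta^\star\|^2$. For the averaging recursion, convexity of the squared norm gives $\|\bar\theta_{t+1}-\theta^\star\|^2\le(1-\beta)\|\bar\theta_t-\theta^\star\|^2+\beta\|\theta_{t+1}-\theta^\star\|^2$. I would pick $c\approx\eta\lambda/\beta$ so that the contraction $-c\beta\|\bar\theta_t-\theta^\star\|^2$ cancels the bad term. The price is an extra $c\beta\|\theta_{t+1}-\theta^\star\|^2=\eta\lambda\|\theta_{t+1}-\theta^\star\|^2$, which is paid by the good term $-\eta\lambda\|\theta_t-\theta^\star\|^2$ up to a one-step shift and $O(\eta^2)$ corrections. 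Taking conditional expectations and using convexity, $\langle\nabla f(\theta_t),\theta_t-\theta^\star\rangle\ge f(\theta_t)-f^\star$. This yields $2\eta\,\ee[f(\theta_t)-f^\star]\le\ee\Phi_t-\ee\Phi_{t+1}+O(\eta^2G^2(1+\lambda/\beta))$.

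Telescoping bounds the average suboptimality of the live iterates by $(1+\eta\lambda/\beta)D^2/(\eta T)+O(\eta G^2)$. To transfer this to the returned average, note that $\bar\theta_T$ is a convex combination of $\theta_0,\dots,\theta_T$ with geometric weights $\beta(1-\beta)^{T-s}$. Jensen's inequality then bounds $f(\bar\theta_T)$ by the weighted suboptimality, but the weights are not uniform. The clean route is to return the uniform average of $\bar\theta_1,\dots,\bar\theta_T$ instead. Each $\theta_s$ enters that average with total weight at most $1/T$ (the geometric weights sum to at most $1$), and Jensen gives a factor depending only on $\beta$ through boundary effects of order $1/(\beta T)$. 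If the returned estimator must be $\bar\theta_T$ itself, I would instead use a last-iterate argument in the style of Shamir--Zhang on geometric windows. That incurs a $1/\beta$-type factor, which is consistent with the phrase ``up to a factor depending on the strength of the \ema''.

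I expect the main obstacle to be the Lyapunov step, specifically the interaction between the one-step index shift and the clipped, possibly coordinate-wise, control. In the per-coordinate case, $\lambda$ becomes a diagonal matrix $\Lambda_t$ with entries in $[0,\lambda_{\max}]$, and the polarization identity must be done coordinatewise. The coordinatewise version is fine if $\Lambda_t$ is $\mathcal F_t$-measurable and the weight $c$ is taken as a diagonal $\Lambda$-independent bound $\eta\lambda_{\max}/\beta$. My first attempt would be to assume $\eta\lambda_{\max}\le\beta$ so that all shifted terms are dominated, and then optimize $\eta$.
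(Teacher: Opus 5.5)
Your route, a joint potential on $\norm{\theta_t-\mustar}^2$ and $\norm{\bar\theta_t-\mustar}^2$ (your $\bar\theta_t$ is the paper's $\thetema[t]$), differs from the paper's, and it can be made to work. The paper instead passes to $x_k=\thetema[k]+\beta\bB^{-1}(\eye-\bC)(\theta_k-\thetema[k])$ with $\bB=\beta\eye+(1-\beta)\bC$. It checks that $x_{k+1}=x_k-\beta\eta\bB^{-1}g_k$ is an exact preconditioned SGD step, and it pays for $\theta_k-x_k=\bB^{-1}\bC(\theta_k-\thetema[k])$ with a geometric bound on the drift $\theta_k-\thetema[k]$.

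\textbf{The step you flag does not close as written.} Write $e_t=\theta_t-\mustar$, $\bar e_t=\bar\theta_t-\mustar$, $d_t=e_t-\bar e_t$, $\ell=\eta\lambda$. After $\norm{\bar e_{t+1}}^2\le(1-\beta)\norm{\bar e_t}^2+\beta\norm{e_{t+1}}^2$ and the full expansion of $\ee_t\norm{e_{t+1}}^2$, the choice $c=\ell/\beta$ gives coefficient $1-\ell^2$ on $\norm{e_t}^2$ and $c+\ell^2$ on $\norm{\bar e_t}^2$. That leaves a per-step residual $\ell^2(\norm{d_t}^2-2\inprod{e_t}{d_t})$. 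The cross part is $O(\eta^2)$ only if the iterates are bounded. The statement does not assume this, and clipping does not supply it, since it bounds the gain, not the iterates. Assuming $\ell\le\beta$ does not help. You also omit $\ell^2\norm{d_t}^2+2\eta\ell\inprod{g_t}{d_t}$ from the expansion; these must be paid from the $-\ell\norm{d_t}^2$ that polarization supplies.

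\textbf{The repair.} Take the weight exactly, $c=\ell/(\beta(1-\ell))$. For diagonal $\bC=\diag(c_i)$, which is what the paper's theorem covers, weight both terms per coordinate:
\[
\Phi_t=\sum_i\Big[(1-c_i)e_{t,i}^2+\tfrac{c_i}{\beta}\bar e_{t,i}^2\Big],\qquad \ee_t\Phi_{t+1}\le\Phi_t-2\eta\big(F(\theta_t)-F(\mustar)\big)+\eta^2\Big(V^2+G^2\max_i\tfrac{c_i}{1-c_i}\Big).
\]
Here the quadratic coefficients match exactly, $\inprod{\nabla F(\theta_t)}{e_t}$ keeps a scalar multiplier so convexity applies, and $2\eta c_i\,\partial_iF\,d_{t,i}$ is absorbed by $-c_i(1-c_i)d_{t,i}^2$ via AM--GM. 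Since $\Phi_0=D_{\bC}^2$, telescoping reproduces the paper's bound with $\frac{2G(1-\beta)K_{\bC}V}{\beta+(1-\beta)\cmin}$ replaced by $G^2\max_i\frac{c_i}{1-c_i}$. This needs no boundedness and no monotone step sizes. It also needs no coupling of the pullback to $\eta$: under your $\eta\lambda\le\beta$ with $\eta\sim T^{-1/2}$ the pullback vanishes, whereas this argument holds for any fixed $c_i<1$.

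\textbf{The coordinatewise fix you propose fails without per-coordinate weights.} With one scalar weight $c=\eta\lambda_{\max}/\beta$, a coordinate with $\ell_i<\ell_{\max}$ leaves $(c\beta(1-\ell_i)-\ell_i)(e_{t,i}^2-\bar e_{t,i}^2)$. Its coefficient is first order in $\eta$ and its sign is indefinite, so it accumulates to $O(\eta T)$ rather than $O(\eta^2T)$. Avoiding that requires bounded iterates plus a drift bound $\ee\norm{\theta_t-\bar\theta_t}\lesssim\eta V/\beta$. A diagonal weight on $\norm{\bar e_t}^2$ alone instead makes the multiplier of $\inprod{\nabla F(\theta_t)}{e_t}$ non-scalar, which breaks the convexity step. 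Genuinely time-varying gains $\Lambda_t$ need a further argument; the paper's theorem does not cover them either.

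\textbf{What each route buys.} The corrected potential gives an overall factor of roughly $(\beta(1-\max_ic_i))^{-1/2}$ instead of $\sqrt{2-\beta}/\beta$ when the $c_i$ stay away from $1$. The paper's change of variables stays bounded up to $c_i=1$ (the Lookahead limit), where $c_i/(1-c_i)$ blows up.

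\textbf{The transfer step is unnecessary.} The paper's formal claim bounds the flat average of the live iterates $\theta_k$, which follows from the telescoped bound by Jensen. Your uniform-average-of-EMAs variant is also valid. The Shamir--Zhang route for $\bar\theta_T$ itself, however, would need a bounded domain and would cost a $\log T$ factor, not merely a $1/\beta$ factor.
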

In essence, \Cref{thm:convex_convergence_informal} shows that our proposed method is never much worse than SGD for convex objectives, even when the quadratic assumption used in the derivation is dropped.  We then show that, in the quadratic setting from which the algorithm was derived, the controlled dynamics can strictly improve the limiting squared error of the iterate-average estimator relative to the uncontrolled dynamics, and that this improvement can be arbitrarily large on certain quadratic instances:
\begin{proposition}[Informal version of \cref{prop:improved_convergence_quadratic,prop:arbitrarily_large_improvement}]\label{prop:quadratic_improvement_informal}
    In the quadratic setting from which \pace\ is derived, the optimal control strategy can strictly improve the limiting squared error of the iterate-average estimator relative to the uncontrolled dynamics, and this improvement can be arbitrarily large on certain quadratic instances.
\end{proposition}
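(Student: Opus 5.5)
We will work in the continuous-time quadratic model that motivates \pace. Take the objective $f(\theta)=\tfrac12(\theta-\theta^\star)^\top \bA(\theta-\theta^\star)$ with $\bA\succ 0$. The live iterate follows the controlled Ornstein--Uhlenbeck dynamics $d\theta_t=-\eta\bA(\theta_t-\theta^\star)\,dt+u_t\,dt+\sigma\,dW_t$. The returned average is the exponential moving average $d\bar\theta_t=\gamma(\theta_t-\bar\theta_t)\,dt$. Both claims can be read off from the stationary covariance of the joint Gaussian process $(\theta_t-\theta^\star,\bar\theta_t-\theta^\star)$.

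\textbf{Strict improvement.} Restrict to linear feedback controls of the form $u_t=-\bK_1(\theta_t-\theta^\star)-\bK_2(\theta_t-\bar\theta_t)$, which is the form the optimal LQ controller takes. The stacked error $z_t$ then satisfies $dz_t=\bM(\bK)z_t\,dt+\bB\,dW_t$. Its stationary covariance $\bSigma(\bK)$ solves the Lyapunov equation $\bM\bSigma+\bSigma\bM^\top+\bB\bB^\top=0$, and the limiting error is $\trace$ of the lower-right block of $\bSigma(\bK)$.

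The optimal controller minimizes the average-error term plus the control penalty, and $\bK=0$ is feasible. Hence its limiting error is at most the uncontrolled one plus a penalty term. That inequality alone gives only weak improvement, so strictness needs a separate argument. We will show that the derivative of the error functional $J(\bK)=\trace[\bSigma_{22}(\bK)]$ at $\bK=0$ is nonzero in some direction. The control cost is quadratic in $\bK$, so its derivative vanishes at $0$. The LQ optimum therefore lies away from $\bK=0$ and strictly lowers the penalized objective.

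To turn this into strict improvement of the error itself, we compare the optimal policy with a small step $\epsilon\bK_0$ along a descent direction. Its error decreases linearly in $\epsilon$ while its penalty grows only quadratically. Combined with optimality, this gives strict improvement of the error for small penalty weight. Alternatively, the claim can be stated for the ``error-only'' part along that direction.

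To compute the derivative, we diagonalize $\bA$ and reduce to scalar eigendirections $a$. Each coordinate gives a $2\times 2$ Lyapunov equation with a closed-form solution. For the uncontrolled system one gets $\mathrm{Var}(\bar\theta)=\frac{\sigma^2\gamma}{2\eta a(\eta a+\gamma)}$. Adding gain $k_1$ replaces $\eta a$ by $\eta a+k_1$, and this is strictly decreasing in $k_1$.

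\textbf{Arbitrarily large improvement.} We use an ill-conditioned direction, meaning a coordinate with $a\to 0$ (or $\eta a\ll\gamma$). There the uncontrolled error scales like $\sigma^2/(2\eta a)$. A controller with bounded gain $k$ and a fixed penalty yields error of order $\sigma^2\gamma/(2k(k+\gamma))$, independent of $a$. The ratio therefore diverges as $a\to 0$.

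We must check that the optimal controller's gain stays bounded below as $a\to0$. We will get this from the scalar Riccati equation, whose solution remains positive because the cost on $\bar\theta$ does not vanish. The controller is not actually allowed to know $\theta^\star$, however; it pulls toward the average. In that case we instead consider the feedback $-k(\theta_t-\bar\theta_t)$ together with the gradient term, and compute the $2\times2$ stationary covariance explicitly to exhibit the divergent ratio.

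\textbf{Main obstacle.} The hardest step is this last one: making the large-factor instance work for whichever controller class the paper's \cref{prop:arbitrarily_large_improvement} actually uses. We would first try the explicit $2\times2$ Lyapunov computation with the tuned gain, letting $a\to0$ or $\gamma\to 0$.
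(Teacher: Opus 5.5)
\textbf{Gap: your arguments use an oracle controller, and the admissible case is left open.} Both of your arguments run through the feedback $-\bK_1(\theta_t-\theta^\star)$, which uses the unknown optimum. That controller is not admissible in the setting the statement refers to. There $\mustar$ is unknown, drawn from a prior, and controls must be progressively measurable with respect to the observed trajectory. It is also not what the summarized results, \cref{prop:improved_convergence_quadratic,prop:arbitrarily_large_improvement}, are about: they concern the update \eqref{eq:stylized_update}, which pulls the iterate toward $\thetema[k]$, not toward $\mustar$. With oracle feedback, the ``improvement'' is just added curvature ($\eta a\mapsto \eta a+k_1$), so it says nothing about \pace. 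You flag this yourself, but the deferred step is the entire content of the statement.

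\textbf{The route you sketch for the deferred step fails.} Solve your $2\times 2$ Lyapunov equation for $u_t=-k(\theta_t-\bar\theta_t)$ with $\alpha=\eta a$. The $(2,2)$ entry forces $\Cov(\theta,\bar\theta)=\mathrm{Var}(\bar\theta)$, and the remaining two entries then give
\[
\mathrm{Var}_k(\bar\theta)=\frac{\sigma^2\gamma}{2\alpha(\alpha+\gamma+k)}, \qquad \frac{\mathrm{Var}_k(\bar\theta)}{\mathrm{Var}_0(\bar\theta)}=\frac{\alpha+\gamma}{\alpha+\gamma+k}.
\]
Strict improvement is therefore immediate for every $k>0$. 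However, the controlled error still scales like $1/(\eta a)$; your claim that it is ``independent of $a$'' holds only for the oracle. With bounded gain, the ratio tends to $\gamma/(\gamma+k)$ as $a\to 0$ and to $\alpha/(\alpha+k)$ as $\gamma\to 0$. Both limits are bounded away from zero, so neither ill-conditioning alone nor slow averaging alone produces a divergent factor. What you need is $k\gg\alpha+\gamma$: both the curvature and the averaging rate must be small relative to the pullback gain.

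\textbf{This is the regime of the paper's proof.} The paper works directly with the discrete update.
\begin{itemize}
\item After reducing to one coordinate, the EMA $\thetahat_k$ is an AR(2) process, and the Yule--Walker equations give its stationary variance $\gamma_0(c)$ in closed form.
\item The sign $\gamma_0'(0)<0$ gives strict improvement over EMA-on-SGD, which in turn beats SGD.
\item For the large factor it takes $c=1$. This is the largest value allowed, since the entries of $\bC$ lie in $[0,1]$, so the gain cannot simply be sent to infinity. It then shows $\gamma_0(1)/\gamma_0(0)\le 8a$ under $\beta<a<1/8$, which forces both $a$ and $\beta$ to shrink.
\end{itemize}
For small $a$ and $\beta$ this ratio is approximately $a+\beta$, matching the continuous-time formula above with $k=1$. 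If you replace the oracle gain with the pullback toward $\bar\theta$ and carry out the computation above, you get a valid continuous-time version of the informal claim. That alone does not yield the discrete-time statements with bounded $\bC$ and the stability constraint $\normop{\eta\bA}<1$.

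\textbf{Minor point.} Your restriction to small penalty weight is unnecessary. Once the directional derivative of the error is nonzero, the chain $E(u^\star)\le J_\lambda(u^\star)\le J_\lambda(\epsilon\bK_0)<E(0)$ holds for every $\lambda>0$.
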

\iftoggle{colt}{}{
To summarize, in theory, \pace\ is never that much worse than SGD for convex objectives, but can be much better when the objective is locally quadratic.
}

We complement our theory with experiments on LM fine-tuning and pretraining, results of the former of which can be seen in \Cref{fig:train_curves}.  In particular, across three different models of size between 1B and 2B parameters, we find that \pace\ strictly improves over both AdamW and an \ema\ thereof.  In \Cref{sec:experiments}, we present a more detailed analysis of the performance of \pace\ across a range of hyperparameters as well as explore the efficacy of \pace\ on a small pretraining setup at the Chinchilla-optimal token budget \citep{hoffmann2022chinchilla}.  We find that \pace\ is competitive with the Schedule-Free algorithm of \citet{defazio2024schedule}, which also uses iterate averaging to stabilize training and remove the need for learning rate decay and that \pace\ remains competitive with AdamW even when the latter is allowed to decay the learning rate to zero, suggesting that \pace\ can stabilize training in a regime that is not subject to learning rate decay.  \iftoggle{colt}{We now briefly summarize some related works, before moving on to the formal problem setup, algorithm derivation, and rigorous guarantees in \Cref{sec:theory}.}{

\paragraph{Related work.} We now briefly discuss some of the key related works, before moving on to the formal problem setup, algorithm derivation, and rigorous convergence guarantees in \Cref{sec:theory}.
}

\paragraph{Iterate averaging in deep learning.} Originally introduced from the theory of stochastic convex optimization \citep{ruppert1988efficient,polyak1992acceleration}, iterate averaging has been shown to be a powerful tool for stabilizing optimization and improving generalization in deep learning \citep{izmailov2018averaging,kaddour2022stop,block2025ema,block2024butterfly,busbridge2023scale}.  Many approaches have been proposed, including stochastic weight averaging \citep{izmailov2018averaging}, exponential moving averages, and even more complex schemes like model soups \citep{wortsman2022model} and virtually all modern open-source LMs used some form of iterate averaging \citep{smollm2,olmo20242,lambert2024tulu}.  As in our work, there has been much literature on using continuous-time analysis to understand the behavior of iterate averaging in deep learning \citep{busbridge2023scale,mandt2015continuous,malladi2022sdes,block2024butterfly}, with \citet{block2025ema} being the most directly relevant to the present paper.  Unlike that work, which uses continuous-time analysis in a noisy quadratic model to propose a new averaging scheme, here we commit to the standard exponential moving average and instead use continuous-time analysis to design a new optimization algorithm that explicitly controls the training trajectory to improve the performance of this standard averaging scheme.

\paragraph{Comparison to similar optimizers.} Several authors have proposed analyzing optimization through the lens of control theory \citep{lessard2016analysis,davtyan2022koala,xia2025koala}, although the former primarily investigates convergence bounds of existing optimizers and the latter two use the Kalman filter to derive an optimization intervention that, while interesting, does not appear to scale to LMs of modern size.  With regard to the final algorithm, several prior optimizers have similar update rules.  \citet{zhang2015deep,pagliardini2024ademamix} also consider averaging during training (instead of in an offline, post-hoc manner).  The former applies averaging to gradients as opposed to iterates and can be thought of as a more sophisticated version of momentum, whereas the latter has a very different update rule from \pace\ and is intended to solve communication bottlenecks in distributed training.
The authors of \citet{zhang2019lookahead} propose a method that takes gradient steps and periodically transports the live weights toward a slow-moving EMA; our approach recovers a similar update when the pullback value is large, although empirically we find that \pace\ improves performance with intermediate pullback values, improving on that prior work.  Of greatest relevance is the schedule-free (SF) update proposed by \citet{defazio2024schedule}, which also uses iterate averaging to stabilize training and remove the need for learning rate decay, although that work proposes a different algorithm.  Our method is competitive with SF empirically, although pays for improved theoretical convergence in the quadratic setting with a slower theoretical guarantee in the general convex setting. Moreover, our na{\"i}ve implementation of \pace\ uses an additional memory cost of model weights, as we incorporate momentum.

\paragraph{Notation.} We use lowercase letters to denote scalars and uppercase bold letters to denote matrices.  We always denote the standard Brownian motion in $\rr^d$ by $W_t$ and $\norm{\cdot}$ refers to the euclidean norm.  We use $\diag(\alpha_1, \dots, \alpha_d)$ to denote the diagonal matrix with entries $\alpha_1, \dots, \alpha_d$ on the diagonal.  For vector functions depending on time, we use a subscript to denote the time dependence (e.g. $\theta_t$), unless we are concerned with a single coordinate, in which case we use parentheses (e.g. $\theta_i(t)$).

\section{Algorithm Derivation and Guarantees}\label{sec:theory}
\iftoggle{colt}{}
{
In this section, we formally introduce the problem in which we are interested as well as derive a rigorous solution in a simplified setting.  We then demonstrate that a simple, heuristic approximation to this solution has convergence guarantees for convex problems and provably outperforms standard SGD in the quadratic setting.
}

\subsection{Formal Problem Setup}\label{ssec:problem_setup}
Formally, we are interested in \emph{stochastic optimization}, where for some loss function $F: \rr^d \to \rr$ with minimum $\mustar = \argmin_{\theta} F(\theta)$, we wish to return some $\thetahat$ that is close to $\mustar$.  As is standard in the stochastic optimization literature \citep{polyak1992acceleration,ruppert1988efficient,defazio2024schedule,block2024butterfly,loshchilov2017fixing}, we will assume only that we have noisy gradient access to $F$, i.e., for any $\theta \in \rr^d$, we can sample some $g$ such that $\ee[g] = \nabla F(\theta)$.

Our starting point is to recall the following empirical observation about the behavior of language model training: iterate averaging schemes like \ema\ and \bema\ are extraordinarily effective at improving the performance of LMs, both through stabilizing the optimization and through improving generalization \citep{izmailov2018averaging,kaddour2022stop,olmo20242}. While there has been a large body of work dedicated to understanding and improving such iterate averaging schemes, we instead focus on the question of how to improve training algorithms under the assumption that we will be returning an iterate average.  In particular, we ask the following fundamental question:
\begin{quotation}
    \emph{Given that we will return an iterate average after training, how should we modify optimization algorithms to minimize the loss of the returned weights?}
\end{quotation} 
We will answer this question theoretically in a simplified setting, and empirically by training large language models with modern optimization algorithms.  In theory, following a long line of work in optimization for deep learning, we will derive an answer to this question by considering a quadratic model \citep{duchi2011adaptive,gupta2018shampoo,zhang2019lookahead,block2025ema,block2024butterfly,busbridge2023scale}, and then prove convergence guarantees for our algorithm in a more general convex setting.  In particular, for our derivation, we will make the following four assumptions. First, the loss function $F$ is a quadratic function of the form \iftoggle{colt}{$F(\theta) = \nicefrac 12 \left( \theta - \mustar \right)^\top \bA \left( \theta - \mustar \right)$}{
    \begin{align}
        F(\theta) = \frac{1}{2} \left( \theta - \mustar \right)^\top \bA \left( \theta - \mustar \right),
    \end{align}
}
for a \emph{diagonal} positive definite matrix $\bA$ (for the general, non-diagonal case, see \Cref{app:stoch_control_derivation}).  Second, in order to further simplify the analysis, we will follow prior work \citep{block2025ema,block2024butterfly,malladi2022sdes,mandt2015continuous,block2020generative} and consider a continuous time limit of the optimization process, where the dynamics of the parameters $\theta_t$ are given by a stochastic differential equation (SDE).  Third, we will assume that we are returning a simple time average of the iterates, as opposed to more sophisticated schemes like \ema\ and \bema. Fourth, we will consider a Bayesian setting, where $\mustar$ is a random variable with a known prior distribution.  Finally, we will focus on the case of vanilla SGD with stationary, homoscedastic noise, i.e. the noise covariance $\bSigma$ is constant and does not depend on $\theta_t$.  We will model the modification to the base optimization algorithm as a \emph{control} input $u_t$ that can be added to the dynamics of $\theta_t$, affecting the trajectory of the optimization process.  Combining these assumptions, we thus return $\thetema[T]$:
\begin{align}\label{eq:sde}
    \thetema[T] = \frac 1T \int_0^T \theta_t d t \quad \text{where} \quad d \theta_t = \left[\bA \left( \mustar - \theta_t \right) + u_t\right] d t + \bSigma^{1/2} d W_t,
\end{align}
and $W_t$ a standard Brownian motion in $\rr^d$ \citep{kutoyants2004statistical,yong1999stochastic}.  Critically, we wish $\thetema[T]$ to be a good estimator of $\mustar$ while exerting as little control effort as possible; one way to formalize this desideratum is to seek a control strategy $u_t$ that minimizes the following objective:
\iftoggle{colt}{
\begin{align}\label{eq:objective}
    J_T(u) = \ee\left[ \|\thetema[T] - \mustar\|^2 + \lambda \cdot \int_0^T \norm{u_t}^2 d t \right],
\end{align}
}{
\begin{align}\label{eq:objective}
    J_T(u) = \ee\left[ \norm{\thetema[T] - \mustar}^2 + \lambda \cdot \int_0^T \norm{u_t}^2 d t \right],
\end{align}
}
where $\lambda > 0$ is a regularization parameter that controls the tradeoff between the quality of the returned estimator $\thetema[T]$ and how far from the base optimization algorithm the control strategy $u_t$ is allowed to deviate.  We will refer to the problem of minimizing $J_T(u)$ over all progressively measurable control strategies $u_t$ as the \emph{optimal control problem} for the iterate-average estimator.  Note that in the case that $u_t = 0$ for all $t$, \eqref{eq:sde} reduces to the well-known Ornstein-Uhlenbeck process \citep{kutoyants2004statistical,mandt2015continuous}, and we recover the setting considered in \citet{block2025ema}.

\subsection{Derivation of the Optimal Control Strategy}\label{ssec:derivation}
We now present the optimal control solution for the iterate-average estimator.
\begin{theorem}\label{thm:optimal_control_solution}
    Suppose that $\mustar$ is drawn from a known prior distribution and $\theta_t$ evolves according to \eqref{eq:sde} for some control strategy $u_t$\footnote{Formally, we assume $u_t$ is \emph{progressively measurable}, i.e. can only include information from observations up to time $t$.} and that $\bA = \diag(\alpha_1, \dots, \alpha_d)$ is a diagonal positive definite matrix.  Let $\muhat_t = \ee\left[ \mustar | \theta_s, s \leq t \right]$.  Then the optimal control strategy $u_t$ that minimizes the objective $J_T(u)$ defined in \eqref{eq:objective} is given for each coordinate $i \in \{1, \dots, d\}$ by
    \begin{align}\label{eq:optimal_control_solution}
        \ustar_i(t) &= \frac{\left( 1 - e^{- \alpha_i (T - t)} \right)^2}{\alpha_i^2 \lambda T^2 + T - t - \nicefrac{2}{ \alpha_i} (1 - e^{- \alpha_i (T - t)}) + \frac{1 - e^{- 2 \alpha_i (T - t)}}{2 \alpha_i}} (\muhat_i(t) - \theta_i(t)) \\
        &\quad + \frac{\alpha_i(1 - e^{- \alpha_i (T - t)})}{\alpha_i^2 \lambda T^2 + T - t - \nicefrac{2}{ \alpha_i} (1 - e^{- \alpha_i (T - t)}) + \frac{1 - e^{- 2 \alpha_i (T - t)}}{2 \alpha_i}} \left( t \cdot \muhat_i(t) - \int_0^t \theta_i(s) d s \right).
    \end{align}
\end{theorem}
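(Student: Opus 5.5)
The idea is to reduce the problem, coordinate by coordinate, to a scalar linear--quadratic problem with full information, using a certainty-equivalence (separation) argument. Since $\bA$ is diagonal and both the cost and the drift are sums over coordinates, I fix one coordinate $i$ and drop the index, writing $\alpha=\alpha_i$ and $\sigma^2=\bSigma_{ii}$. The coordinates may still be coupled through the prior on $\mustar$ and through the noise, but only via the filtration; this is harmless because the controller $\ustar_i(t)$ is allowed to depend on $\muhat_i(t)$, which conditions on the whole observed path.

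\textbf{Step 1: a terminal-error state.} Let $I_t=\int_0^t\theta_s\,ds$, let $\tau=T-t$, and let $\varphi(\tau)=(1-e^{-\alpha\tau})/\alpha$. Define
\begin{align}
\xi_t \;=\; I_t-t\mustar+\varphi(T-t)\,(\theta_t-\mustar).
\end{align}
This is the quantity $I_T-T\mustar$ would equal if no further control or noise were applied after time $t$. Using $\frac{d}{dt}\varphi(T-t)=-e^{-\alpha(T-t)}$ and the identity $1-e^{-\alpha\tau}-\alpha\varphi(\tau)=0$, It\^o's formula gives
\begin{align}
d\xi_t=\varphi(T-t)\left(u_t\,dt+\sigma\,dW_t\right), \qquad \xi_T=I_T-T\mustar=T\left(\thetema[T]-\mustar\right).
\end{align}
The objective for this coordinate therefore becomes $\ee\left[\xi_T^2/T^2+\lambda\int_0^T u_t^2\,dt\right]$.

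\textbf{Step 2: separation.} The state $\xi_t$ is not observed, because it involves $\mustar$. I will work instead with $\hat\xi_t=\ee[\xi_t\mid\cF_t]=I_t+\varphi\,\theta_t-(t+\varphi)\muhat_t$. Then $\ee[\xi_T^2]=\ee[\hat\xi_T^2]+T^2\,\ee[\mathrm{Var}(\mustar\mid\cF_T)]$, and I claim the second term does not depend on the control. The argument is that $u$ is adapted, so the path $\theta$ under control $u$ is a known, invertible adapted transformation of the uncontrolled observation process. Concretely, the likelihood of $\mustar$ given the path (via Girsanov) involves only $d\theta_t-u_t\,dt$, so the posterior of $\mustar$, and hence its conditional variance, is the same as in the uncontrolled problem. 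Moreover $\hat\xi$ has dynamics $d\hat\xi_t=\varphi(T-t)\,u_t\,dt+dM_t$, where $M$ is an $\cF_t$-martingale (the innovation term) whose quadratic variation also does not depend on $u$. I expect this step to be the main obstacle. The prior is arbitrary rather than Gaussian, so I will avoid Kalman machinery and argue directly from the innovation representation and Girsanov, checking the integrability needed for the martingale claims.

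\textbf{Step 3: Riccati and verification.} For the scalar problem with state $\hat\xi$, I try a value function $V(t,x)=P(t)x^2+c(t)$. Minimizing the HJB expression over $u$ gives $u=-\varphi(T-t)P(t)\hat\xi_t/\lambda$, with $\dot P=\varphi^2P^2/\lambda$ and $P(T)=1/T^2$. Hence
\begin{align}
\frac{1}{P(t)}=T^2+\frac1\lambda\int_0^{T-t}\varphi(s)^2\,ds ,
\end{align}
and the integral evaluates to $\alpha^{-2}\left[\tau-\tfrac{2}{\alpha}(1-e^{-\alpha\tau})+\tfrac{1-e^{-2\alpha\tau}}{2\alpha}\right]$ with $\tau=T-t$. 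The function $c(t)$ absorbs the $d\langle M\rangle$ term. A standard verification argument (It\^o's formula applied to $V(t,\hat\xi_t)$ plus completing the square) then shows this feedback law is optimal among progressively measurable controls.

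\textbf{Step 4: matching the stated formula.} Substituting gives $u^\star=\alpha(1-e^{-\alpha\tau})\,(-\hat\xi_t)/D$, where $D=\alpha^2\lambda T^2+\tau-\tfrac{2}{\alpha}(1-e^{-\alpha\tau})+\tfrac{1-e^{-2\alpha\tau}}{2\alpha}$. Writing $-\hat\xi_t=\varphi(\muhat_t-\theta_t)+(t\muhat_t-I_t)$ and using $\alpha\varphi=1-e^{-\alpha\tau}$ recovers exactly the two terms of \eqref{eq:optimal_control_solution}.
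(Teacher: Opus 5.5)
Your proposal is correct and produces exactly the stated formula, but it takes a genuinely different route from the paper. The paper augments the state to $z_t=(x_t,y_t)$ with $x_t=\theta_t-\mustar$ and $y_t=\int_0^t(\theta_s-\mustar)\,ds$. It then writes the block Riccati system for $(M_t,N_t,S_t)$, guesses and checks $M=h^2/D$, $N=h/D$, $S=1/D$ (with $h=\varphi$ and $D=T^2+g(\tau)/\lambda$), and finally replaces $\mustar$ by $\muhat_t$ by citing the Georgiou--Lindquist separation theorem.

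Your $\xi_t=y_t+\varphi(T-t)x_t$ is exactly the direction that solution lives on. Per coordinate, the paper's Riccati matrix is the rank-one $D^{-1}vv^\top$ with $v=(h,1)^\top$, so $z_t^\top\bP_t z_t=S_t\xi_t^2$ and your $P$ is the paper's $S_t$.

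Collapsing to $\xi$ makes the Riccati equation linear in $1/P$, so $D$ is derived by a single integration rather than guessed. Your evaluation of $\int_0^\tau\varphi^2$ is correct, whereas the paper's displayed $g$ has a stray factor $2$ in the denominator of its middle term. The same reduction also gives a closed form for non-diagonal $\bA$, where the paper stops at the coupled ODE system: with $H(s)=\int_0^s e^{-\bA r}\,dr$ one gets $d\xi_t=H(T-t)(u_t\,dt+\bSigma^{1/2}dW_t)$ and $\bP_t^{-1}=T^2\eye+\lambda^{-1}\int_0^{T-t}H(s)H(s)^\top ds$.

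Your direct separation argument is also more self-contained than the paper's appeal to a citation. The paper does not check that theorem's hypotheses for a non-Gaussian prior, and its claim that $\bSigma$ may be taken diagonal ``by rotational invariance'' is not valid while keeping $\bA$ diagonal. You correctly observe that coupling through the prior and the noise enters only via the filtration, and the completing-the-square identity decouples across coordinates regardless.

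Two refinements:

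\textbf{Invertibility is not needed.} You do not need the controlled path to be an invertible transformation of the uncontrolled one, which can fail for merely progressively measurable feedbacks. By the Kallianpur--Striebel formula, the posterior of $\mustar$ given $\cF_t$ is a fixed function of $Y_t=\theta_t-\theta_0-\int_0^t u_s\,ds+\bA\int_0^t\theta_s\,ds=\bA\mustar t+\bSigma^{1/2}W_t$. Since the joint law of $Y$ and $\mustar$ is control-free, and $M$ is likewise a fixed functional of $Y$, both control-independence claims of your Step 2 follow (in law).

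\textbf{The verification is not really an HJB.} For a non-Gaussian prior, $d\langle M\rangle_t$ is random, so it cannot be absorbed by a deterministic $c(t)$, and $\hat\xi$ alone is not a Markov state. What actually carries the proof is the identity $J(u)=\mathrm{const}+\lambda\,\ee\int_0^T\bigl(u_t+\lambda^{-1}\varphi(T-t)P(t)\hat\xi_t\bigr)^2dt$. Its constant involves $P(0)\ee[\hat\xi_0^2]$, $\ee\int_0^T P(t)\,d\langle M\rangle_t$ and $\ee[\mathrm{Var}(\mustar\mid\cF_T)]$, all of which are independent of $u$.
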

We defer a complete proof of this result to \Cref{app:stoch_control_derivation}, as well as generalizing beyond the diagonal $\bA$ setting.  After some algebra and augmenting the state, the analysis rests on classical results from the theory of linear quadratic stochastic control \citep{georgiou2013separation,kwakernaak1972linear} including solving the Riccati equation and applying the principle of certainty equivalence.  While \eqref{eq:optimal_control_solution} is attractive in that it is a closed-form expression for the optimal control strategy, it is not easily implementable in practice, as it depends on $\muhat_t$, the posterior mean of $\mustar$ given the trajectory of $\theta_s$ up to time $t$.  Even in the case of a Gaussian prior, which would imply that $\muhat_t$ is given by a Kalman filter \citep{kutoyants2004statistical,georgiou2013separation,yong1999stochastic}, the resulting control strategy would be difficult to implement in practice, especially at the scale of language models.  Thus, we simplify \eqref{eq:optimal_control_solution} to obtain a more practical control strategy by making the following approximations: (i) we replace $\muhat_t \approx \thetema[t]$ and (ii) we suppose that $t \ll T$.  The first assumption is connected with the fundamental motivation of our work: we wish to return $\thetema[T]$ as an estimate of $\mustar$ at the end of training, so it is natural to use $\thetema[t]$ as an estimate of $\mustar$ at time $t$.  The second assumption is motivated primarily by a desire for simplicity and the observation that the optimal control strategy is most impactful in the early stages of training, when the optimization trajectory is far from the optimum and thus there is more room for improvement.   Assuming $T \gg1$, we approximate $T -t \approx T$ and $\exp(-\alpha_i (T - t)) \approx 0$. With these approximations, we obtain the following practical control:
\begin{align}\label{eq:practical_control_strategy}
    u_t = T^{-1} \left( \eye + \lambda \bA^2 T\right)^{-1} \left( \thetema[t] - \theta_t\right).
\end{align}

While \eqref{eq:practical_control_strategy} is a valid approximation to the optimal control strategy, it is not yet ready for implementation, as it remains in continuous time.  Furthermore, the modification to the base optimization algorithm is a function of $\bA$, which is unknown in practice.  Moreover, practical iterate averaging schemes in deep learning tend to use exponential instead of uniform weighting.  We thus discretize \eqref{eq:practical_control_strategy} and replace $\bA$ with a diagonal matrix $\bC$ to arrive at the following update strategy: given current parameters $\theta_k$ at iteration $k$, stochastic gradient $g_k$, and learning rate $\eta$, parameters are updated according to
\begin{align}\label{eq:stylized_update}
    \theta_{k+1} = \theta_k - \eta \cdot g_k +  \bC \left( \thetema[k] - \theta_k \right), \quad \thetema[k+1] = (1 - \beta) \cdot \thetema[k] + \beta \cdot \theta_{k+1},
\end{align}
where $\beta$ is the averaging parameter. Thus we are left with \eqref{eq:stylized_update}, which is a simple modification to the base optimization algorithm that can be implemented with minimal computational overhead. Indeed, $\bC$ is a diagonal matrix and thus the matmul can be accomplished coordinate wise, at roughly the same cost as the preconditioning step in Adam or AdamW \citep{loshchilov2017fixing,kingma2014adam}.   While there still exist several additional tweaks to \eqref{eq:stylized_update} required to make it work well in practice (\Cref{sec:method}), we have now derived a simple, practical modification to the base optimization algorithm that is motivated by the optimal control solution to the iterate-average estimator in a simplified setting.

\begin{algorithm}[t]
\caption{\textbf{\pace}: \textbf Pullback \textbf Averaging \textbf Control for \textbf Efficient Optimization}
\label{alg:controlled-bema}
\begin{algorithmic}[1]
\Require Learning rate $\eta$, pullback strength $c$, EMA power $\kappa$, scale $\varepsilon$, update frequency $\mathrm{uf}$.
\State $\thetema[0] \gets \theta_0$
\For{$t = 1, 2, \dots, T$}
    \State $\theta_t \gets \mathrm{AdamW.step}(\theta_{t-1};\, \eta)$ and $\hat v_t \gets \mathrm{AdamW.preconditioner}(\theta_t')$.
    \If{$t \mod \mathrm{uf} = 0$}
    \State $\lambda_{t,i} \gets \min\!\bigl(\eta\, c\, (1+t)^{-\kappa} / (\sqrt{\hat v_{t,i}} + \varepsilon),\ 1\bigr)$ \Comment{Clipped per-coordinate pullback gain}
    \State $\theta_t \gets \theta_t + \lambda_t \odot (\theta_{t-1}^{\mathrm{EMA}} - \theta_{t-1})$ 
    \State $\thetema \gets (1 - \beta_t)\, \thetema[t-1] + \beta_t\, \theta_t,\quad \beta_t = (1+t)^{-\kappa}$ \Comment{EMA update}
    \EndIf
\EndFor
\State \Return $\thetema[T]$
\end{algorithmic}
\end{algorithm}

\subsection{Convergence Guarantee in the Convex Setting}\label{ssec:convex_guarantees}

We derived \eqref{eq:stylized_update} above as a practical approximation to the optimal control strategy for the iterate-average estimator in a simplified setting through heuristics: while we provided a rigorous derivation of the optimal control strategy in the quadratic setting, we then made use of several approximations and discretization in order to arrive at the update rule.  In this section we provide formal guarantees for this update.  We first show that for arbitrary convex loss functions, the iterate-average estimator returned by \eqref{eq:stylized_update} converges at the same rate as SGD, up to a constant factor depending on $\bC$ and $\beta$.
\begin{theorem}\label{thm:convex_convergence}
    Suppose that $F: \rr^d \to \rr$ is a convex, $G$-Lipschitz function and suppose for each $k$ we have access to a stochastic gradient $g_k$ such that $\ee[g_k] = \nabla F(\theta_k)$ and $\ee[\norm{g_k - \nabla F(\theta_k)}^2] \leq \sigma^2$.  Suppose that $\theta_k$ evolves according to \eqref{eq:stylized_update} for some diagonal matrix $\bC$ with entries in $[0, 1]$ and some $\beta \in (0, 1)$. Let $\thetabar_T = T^{-1} \sum_{k=0}^{T-1} \theta_k$ be the flat average. Then it holds that for optimally tuned learning rate $\eta$,
    \iftoggle{colt}{$\ee\left[ F(\thetabar_T) - F(\mustar) \right] \leq \nicefrac{\sqrt{2 - \beta}}{\beta} \cdot \norm{\theta_1 - \mustar} \cdot \sqrt{\nicefrac{(G^2 + \sigma^2)}{T}}$.}{
    \begin{align}
        \ee\left[ F(\thetabar_T) - F(\mustar) \right] \leq \frac{\sqrt{2 - \beta}}{\beta} \cdot \norm{\theta_1 - \mustar} \cdot \sqrt{\frac{G^2 + \sigma^2}{T}}.
    \end{align}
    }
\end{theorem}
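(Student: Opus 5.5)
The plan is to treat \eqref{eq:stylized_update} as a perturbed SGD recursion and run the standard online-to-batch argument. First I will bound the deviation between the live iterate and its EMA using the contraction supplied by $\beta$ and by the fact that $\bC$ has entries in $[0,1]$. Then I will charge the perturbation term to this deviation bound.

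Write $x_k = \theta_k - \mustar$, $y_k = \thetema[k] - \mustar$ and $d_k = \thetema[k] - \theta_k$. The update becomes
\begin{align}
x_{k+1} = x_k + \bC d_k - \eta g_k, \qquad d_{k+1} = (1-\beta)\bigl( (\eye - \bC) d_k + \eta g_k \bigr),
\end{align}
where the second identity uses $\thetema[k+1] - \theta_{k+1} = (1-\beta)(\thetema[k] - \theta_{k+1})$. With the initialization $d_0 = 0$ and $\norm{\eye - \bC}_{\mathrm{op}} \le 1$, unrolling gives
\begin{align}
d_k = \eta \sum_{j<k} (1-\beta)^{k-j} \Bigl(\prod (\eye-\bC)\Bigr) g_j .
\end{align}
Hence Minkowski and Jensen give $\ee\norm{d_k}^2 \le \eta^2 (G^2+\sigma^2)\bigl(\sum_{j\ge 1}(1-\beta)^j\bigr)^2 \lesssim \eta^2(G^2+\sigma^2)/\beta^2$. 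Here I use $\ee\norm{g_j}^2 \le G^2 + \sigma^2$, which follows from unbiasedness and the Lipschitz bound on $\nabla F$.

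Next I expand the squared distance:
\begin{align}
\norm{x_{k+1}}^2 = \norm{x_k}^2 + 2\inprod{x_k}{\bC d_k} - 2\eta \inprod{x_k}{g_k} + \norm{\bC d_k - \eta g_k}^2 .
\end{align}
Conditioning on the past and using convexity gives $\ee\inprod{x_k}{g_k} \ge \ee[F(\theta_k) - F(\mustar)]$. Summing over $k$ and dividing by $2\eta T$, Jensen then yields
\begin{align}
\ee[F(\thetabar_T) - F(\mustar)] \le \frac{\norm{x_0}^2}{2\eta T} + \frac{1}{\eta T}\sum_k \ee\inprod{x_k}{\bC d_k} + \frac{1}{2\eta T}\sum_k \ee\norm{\bC d_k - \eta g_k}^2 .
\end{align}
The last sum is $O(\eta^2 T (G^2+\sigma^2)(1 + 1/\beta)^2)$ by the deviation bound.

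The main obstacle is the cross term $\inprod{x_k}{\bC d_k}$, since $\norm{x_k}$ is not a priori bounded. My first attempt avoids bounding it directly by changing the potential. I will use $\Phi_k = \norm{x_k}^2 + w\inprod{\cdot}{\cdot}$-type corrections, or more concretely track the shifted variable $z_k = x_k + \bM d_k$ for a diagonal $\bM$ chosen so that $z_{k+1} = z_k - \eta \bN g_k$ is an exact (preconditioned) SGD step. Matching coefficients requires $\bC + \bM(1-\beta)(\bI-\bC) - \bM = 0$, so $\bM = \bC(\beta\eye + (1-\beta)\bC)^{-1}$, which is diagonal with entries at most $1/\beta$. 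Then $\bN = \eye - (1-\beta)\bM$ has entries in $[1,\, 1/\beta]$ up to constants.

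With this change of variables, $\norm{z_{k+1}}^2 = \norm{z_k}^2 - 2\eta\inprod{z_k}{\bN g_k} + \eta^2\norm{\bN g_k}^2$. I split $\inprod{z_k}{\bN g_k} = \inprod{x_k}{\bN g_k} + \inprod{\bM d_k}{\bN g_k}$. The second piece is bounded by $\eta(G^2+\sigma^2)/\beta^{2}$ via Cauchy–Schwarz and the deviation bound. The first piece still carries the preconditioner $\bN$, and I would handle it either by restricting to the regime where it acts as a scalar or by bounding $\inprod{x_k}{(\bN-\eye)g_k}$ through an analogous deviation-type estimate. Since $z_0 = x_0$, telescoping gives $\sum_k \ee[F(\theta_k) - F(\mustar)] \lesssim \norm{x_0}^2/(2\eta) + \eta T (G^2+\sigma^2)(2-\beta)/\beta^2$, with the factor $(2-\beta)/\beta^2$ being what I expect after careful bookkeeping of $\bM$ and $\bN$. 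Choosing $\eta = \beta\norm{x_0}/\sqrt{(2-\beta)(G^2+\sigma^2)T}$ then produces $\frac{\sqrt{2-\beta}}{\beta}\norm{\theta_1-\mustar}\sqrt{(G^2+\sigma^2)/T}$.

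If the exact constant does not fall out cleanly, I would settle for tracking the geometric sums $\sum_j (1-\beta)^j = (1-\beta)/\beta$ more tightly rather than using crude $1/\beta$ bounds.
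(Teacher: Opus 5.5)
Your change of variables is the right one and coincides with the paper's. With $\bB = \beta\eye + (1-\beta)\bC$ you get $\bM = \bC\bB^{-1}$ and $\bN = \eye - (1-\beta)\bM = \beta\bB^{-1}$. So $z_k$ is exactly the paper's auxiliary sequence (shifted by $\mustar$), and $z_{k+1} = z_k - \beta\eta\bB^{-1}g_k$. Two small corrections: $\bN$ has entries $\beta/(\beta+(1-\beta)c_i)\in[\beta,1]$, not $[1,1/\beta]$, and $\normop{\bM}=\max_i c_i/(\beta+(1-\beta)c_i)\le 1$.

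\textbf{The gap.} You expand the \emph{unweighted} square $\norm{z_{k+1}}^2$, which leaves the cross term $\inprod{x_k}{\bN\nabla F(\theta_k)}$. When $\bN$ is not a multiple of the identity (i.e.\ the $c_i$ are not all equal), convexity gives no lower bound on this term in terms of $F(\theta_k)-F(\mustar)$. Already for a quadratic with non-diagonal Hessian $\bH$, the symmetric part of $\bN\bH$ can be indefinite, so the term can be negative.

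Neither of your fallbacks repairs this:
\begin{itemize}
\item Restricting to scalar $\bN$ proves a special case, not the theorem.
\item The term $\inprod{x_k}{(\bN-\eye)g_k} = -(1-\beta)\inprod{x_k}{\bM g_k}$ is not a deviation-type quantity. Here $x_k$ is the full distance to the optimum, which you yourself noted is not a priori bounded, and $\eye-\bN$ has entries as large as $1-\beta$. This is the original obstacle $\inprod{x_k}{\bC d_k}$ in disguise.
\end{itemize}

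\textbf{The missing idea.} Measure $z_k$ in the norm in which the preconditioned step becomes an ordinary gradient step. That is, use the potential $\Phi_k = z_k^\top\bN^{-1}z_k = \beta^{-1}z_k^\top\bB z_k$, which is what the paper does. Then
\begin{align}
    \Phi_{k+1} = \Phi_k - 2\eta\inprod{z_k}{g_k} + \eta^2 g_k^\top\bN g_k,
\end{align}
and the preconditioner disappears from the cross term. Since $z_k = x_k + \bM d_k$ with $\normop{\bM}\le 1$, convexity and $\norm{\nabla F}\le G$ give
\begin{align}
    \inprod{z_k}{\nabla F(\theta_k)} \ge F(\theta_k)-F(\mustar)-G\norm{d_k}.
\end{align}

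The cost of the weighting appears only at initialization, because $d_0=0$:
\begin{align}
    \Phi_0 = x_0^\top\bigl(\eye+\tfrac{1-\beta}{\beta}\bC\bigr)x_0 \le \norm{x_0}^2/\beta .
\end{align}
Meanwhile $g_k^\top\bN g_k\le\norm{g_k}^2$. Now use the first-moment version of your deviation bound, $\ee\norm{d_k}\le\eta V(1-\beta)/\beta$ with $V^2=G^2+\sigma^2$. Telescoping then gives
\begin{align}
    \ee\left[F(\thetabar_T)-F(\mustar)\right] \le \frac{\norm{x_0}^2/\beta + \eta^2 T V^2(2-\beta)/\beta}{2\eta T}.
\end{align}
The minimum over $\eta$ is exactly $\frac{\sqrt{2-\beta}}{\beta}\norm{x_0}V/\sqrt{T}$.

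Note how the $\beta$-dependence splits: $1/\beta$ on the initial term and $(2-\beta)/\beta$ on the variance term. The allocation you anticipated, $\norm{x_0}^2$ against $(2-\beta)/\beta^2$, is not what any valid potential produces. With your stated $\eta$ it would also be off by a factor $3/2$, though that is secondary to the structural issue.
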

The proof of \Cref{thm:convex_convergence} is given in \Cref{app:convex_proofs} and follows by a standard analysis of SGD coupled with a recursive analysis to ensure that the update does not drift too far away from SGD itself.  In particular, \Cref{thm:convex_convergence} ensures that the iterate-average estimator returned by \eqref{eq:stylized_update} enjoys the same $O(1/\sqrt{T})$ and is worse only by a multiplicative factor depending on the strength of the \ema\ update.  Unfortunately, this issue is intrinsic to the analysis and likely cannot be improved\iftoggle{colt}{.}{ without additional assumptions.}

While the previous result shows that \eqref{eq:stylized_update} is never that much worse than SGD for convex functions, in the special case of quadratics, it can be quite a lot better.  As stated above, locally approximating the loss function of a LM by a quadratic is a common heuristic in optimization for deep learning \citep{block2025ema,block2024butterfly,zhang2019lookahead,zhang2019algorithmic}, where the Hessian of the loss is used as a proxy for local curvature.  In this setting, we show that \eqref{eq:stylized_update} enjoys improved convergence compared to SGD.
\begin{proposition}\label{prop:improved_convergence_quadratic}
    Suppose that $F(\theta) = \nicefrac 12 (\theta - \mustar)^\top \bA (\theta - \mustar)$ for some diagonal positive definite matrix $\bA$ and that $\theta_k$ evolves according to \eqref{eq:stylized_update} for some diagonal matrix $\bC$ with entries in $[0, 1]$ and some $\beta \in (0, 1)$.  Suppose further that the stochastic gradients are given by $g_k = \bA (\theta_k - \mustar) + \xi_k$, where $\xi_k$ is a zero-mean noise term with covariance $\sigma^2 \eye$.  Then for any learning rate $\eta$ such that $\normop{\eta \bA} < 1$ and choice of $\beta$, there is some $\bC$ such that $\lim_{T \to \infty} \ee\left[ \|\thetema[T] - \mustar\|^2 \right]$ is strictly smaller than the limiting squared error of the iterate-average estimator returned by SGD.
\end{proposition}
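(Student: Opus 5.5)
Since $\bA$, $\bC$ and the noise covariance are all diagonal, the recursion \eqref{eq:stylized_update} decouples across coordinates. I will therefore work with a single coordinate $i$, with curvature $a=\alpha_i$, pullback $c=C_{ii}\in[0,1]$ and effective step $h=\eta a\in(0,1)$. I will also allow $\bC$ to differ from zero in only one coordinate, so the remaining coordinates contribute exactly the SGD error.

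Write $x_k=\theta_i(k)-\mustar_i$ and $y_k=\thetema[k]_i-\mustar_i$. The pair $z_k=(x_k,y_k)$ is then a linear stochastic system,
\begin{align}
x_{k+1} &= (1-h-c)\,x_k + c\,y_k - \eta\,\xi_k,\\
y_{k+1} &= (1-\beta)\,y_k + \beta\,x_{k+1}.
\end{align}
This can be written as $z_{k+1}=\bM_c z_k + b\,\eta\xi_k$ with $b=(-1,-\beta)^\top$. The eigenvalues of $\bM_c$ lie strictly inside the unit disk for $c=0$, because $|1-h|<1$ and $|1-\beta|<1$. By continuity they stay inside for all small $c\ge 0$. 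So for small $c$ the limit $\lim_{T\to\infty}\ee[y_T^2]$ exists: it is the $(2,2)$ entry of the stationary covariance $\bP_c$, which solves the discrete Lyapunov equation
\begin{align}
\bP_c=\bM_c\bP_c\bM_c^\top+\eta^2\sigma^2 bb^\top .
\end{align}
The deterministic part of the error decays geometrically and does not affect the limit. The SGD baseline is exactly $c=0$, where $\bP_0$ is the stationary covariance of an AR(1) process filtered by an EMA and can be computed in closed form.

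The statement only asks for some $\bC$ that gives a strict improvement, so a perturbative argument should suffice. I will set $V(c)=[\bP_c]_{22}$ and show $V'(0)<0$. Differentiating the Lyapunov equation at $c=0$ gives
\begin{align}
\bPdot=\bM_0\bPdot\bM_0^\top+\dot{\bM}\bP_0\bM_0^\top+\bM_0\bP_0\dot{\bM}^\top .
\end{align}
Here $\dot{\bM}$ is computed from $x_{k+1}$ gaining the term $c(y_k-x_k)$, which passes into $y_{k+1}$ with a factor $\beta$. The solution is $\bPdot=\sum_{j\ge0}\bM_0^j\bigl(\dot{\bM}\bP_0\bM_0^\top+\bM_0\bP_0\dot{\bM}^\top\bigr)(\bM_0^\top)^j$, which is a finite $2\times2$ linear system. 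The intuition for the sign is this. In stationarity the live iterate has variance $\ee[x^2]$ strictly larger than the average's variance $\ee[y^2]$, and their covariance $\ee[xy]$ is smaller than $\ee[x^2]$. A pull of $x$ toward $y$ therefore reduces the variance injected into $y$, at second order in $c$ only in the feedback. Concretely, I expect the dominant first-order contribution to be $2\beta\,\ee[x_{k+1}\,(y_k-x_k)]$-type terms, which equal $2\beta(1-h)(\ee[xy]-\ee[x^2])<0$, amplified by the geometric sum.

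The main obstacle is carrying out this sign computation cleanly for all $h\in(0,1)$ and $\beta\in(0,1)$, including negative-feedback cases. I would first try the variant with a solved Lyapunov system, since it is only three unknowns. That means writing $\bP_c$ explicitly as a rational function of $c$ and checking $V(c)<V(0)$ for small $c>0$, that is, the sign of the numerator's linear coefficient.

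If the general sign proves messy, there is a fallback. I would instead exhibit an explicit $c\in(0,1]$, for example $c$ matching $1-h$ so that $x_{k+1}$ no longer depends on $x_k$ except through $y_k$. I would compute both stationary variances in closed form and compare them directly. Either way, the strict improvement in one coordinate, combined with equality in the others, gives the strict inequality for $\lim_T\ee\|\thetema[T]-\mustar\|^2$.
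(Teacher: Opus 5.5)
Your main plan is the paper's proof, so the route is correct. The paper also reduces to one coordinate and gets stability for small $c$ by continuity from the eigenvalues $1-h$ and $1-\beta$ at $c=0$. It then computes the stationary variance of the average as a rational function of $c$ and shows its derivative at $0$ is negative. The only difference in tooling is how that variance is computed: the paper eliminates the live iterate to get an AR(2) recursion for the average alone, with $\phi_1=2-h-\beta-c(1-\beta)$ and $\phi_2=-(1-\beta)(1-h-c)$, and solves the Yule--Walker equations. That is equivalent to your three-unknown Lyapunov system.

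The sign you flag as the main obstacle does close for all $h,\beta\in(0,1)$. Since $1-\phi_1-\phi_2=h\beta$ does not depend on $c$, set $s=c(1-\beta)$ and $q=1-(1-h)(1-\beta)\in(0,1)$. Then
\[
V(c)=\frac{\beta\eta^2\sigma^2\,(2-q-s)}{h\,(q+s)\,\bigl((2-h)(2-\beta)-2s\bigr)}.
\]
The $s$-derivative of $\log V$ at $0$ is $2/((2-h)(2-\beta))-2/(q(2-q))$. This is negative iff $(2-q)^2>h\beta$, which holds because $q<1$.

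Your Lyapunov framework also gives the sign with no algebra. Write $\dot{\bM}=v\delta^\top$ with $v=(1,\beta)^\top$ and $\delta=(-1,1)^\top$. Then $[\bPdot]_{22}=2\sum_{j\ge 0}(e_2^\top\bM_0^j v)(e_2^\top\bM_0^j w)$ with $w=\bM_0\bP_0\delta$. Two facts finish it:
\begin{itemize}
\item $\bM_0$ is lower triangular and entrywise nonnegative, so $e_2^\top\bM_0^j v>0$ for every $j$.
\item At $c=0$ one has $\ee[y^2]<\ee[xy]<\ee[x^2]$, with ratios $(2-q)/(2-\beta)$ and $\beta/q$. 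Hence $w$ is entrywise negative, and so is $e_2^\top\bM_0^j w$.
\end{itemize}
So every term of the sum is negative.

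Your heuristic used only $\ee[xy]<\ee[x^2]$, which is not enough. The $(1-\beta)y_k$ part of $y_{k+1}$ also multiplies the perturbation, so you need $\ee[y^2]<\ee[xy]$ as well. Also, the geometric sum propagates the whole rank-two source $vw^\top+wv^\top$, not just its $(2,2)$ entry.

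Your fallback $c=1-h$ is also correct, and it gives something the paper's small-$c$ argument does not. At $c=1-h$ the live iterate is $x_{k+1}=(1-h)y_k-\eta\xi_k$. The average is then an AR(1) with coefficient $1-h\beta$, so $V(1-h)=\beta\eta^2\sigma^2/(h(2-h\beta))$. A direct comparison gives $V(0)-V(1-h)$ equal to a positive factor times $(1-h)(1-\beta)(2-q-h\beta)>0$. This yields the explicit choice $\bC=\eye-\eta\bA$, whose entries lie in $(0,1)$ because $\normop{\eta\bA}<1$. The gain is a fixed, non-infinitesimal one. The paper's proof only guarantees improvement for some unspecified sufficiently small $c$.
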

The proof of \Cref{prop:improved_convergence_quadratic} is given in \Cref{app:convex_proofs} and demonstrates that \eqref{eq:stylized_update} can rigorously improve the convergence of the iterate-average estimator in the quadratic setting.  
Finally, we show that the improvement over SGD can be arbitrarily large for certain choices of $\bC$ and $\beta$ for some problems.
\begin{proposition}\label{prop:arbitrarily_large_improvement}
    For any $\epsilon > 0$, there is some quadratic problem and some choice of $\bC$ and $\beta$ such that the limiting squared error of the iterate-average estimator returned by \eqref{eq:stylized_update} is at least a factor of $\epsilon$ smaller than the limiting squared error of the iterate-average estimator returned by SGD.
\end{proposition}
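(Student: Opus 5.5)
We reduce to a one-dimensional computation. Because $\bA$, $\bC$ and the noise covariance $\sigma^2\eye$ are all diagonal, \eqref{eq:stylized_update} decouples across coordinates. It therefore suffices to exhibit a scalar quadratic $F(\theta)=\tfrac a2(\theta-\mustar)^2$ with $g_k=a(\theta_k-\mustar)+\xi_k$, $\mathrm{Var}(\xi_k)=\sigma^2$, together with a scalar $c\in[0,1]$ and a $\beta$, for which the ratio of limiting errors is at most $\epsilon$.

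Write $x_k=\theta_k-\mustar$ and $m_k=\thetema[k]-\mustar$. The pair $z_k=(x_k,m_k)$ then follows a linear recursion $z_{k+1}=\bM z_k+b\,\xi_k$, with
\[
x_{k+1}=(1-\eta a-c)x_k+c\,m_k-\eta\xi_k,\qquad m_{k+1}=(1-\beta)m_k+\beta x_{k+1}.
\]
Whenever the spectral radius of $\bM$ is below one, the mean decays and $\lim_k\ee[m_k^2]$ is the $(2,2)$ entry of the unique solution $\bP$ of the discrete Lyapunov equation $\bP=\bM\bP\bM^\top+\sigma^2 bb^\top$.

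First we compute the SGD baseline ($c=0$). There $x_k$ is AR(1) with coefficient $1-\eta a$, and $m_k$ is a first-order filter of it. Solving the $2\times 2$ Lyapunov system explicitly gives
\[
V_{\mathrm{SGD}}=\frac{\eta^2\sigma^2\,\beta\,(1+(1-\beta)(1-\eta a))}{(2-\eta a)\,\eta a\,(2-\beta)\,(1-(1-\beta)(1-\eta a))}.
\]
This is a routine but explicit computation. In the regime $\eta a\ll\beta\ll1$ it is of order $\eta\sigma^2/(2a)$: the EMA barely reduces the stationary variance of SGD, because its window $1/\beta$ is much shorter than the relaxation time $1/(\eta a)$.

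Next we take the controlled dynamics with full pullback $c=1$. Then $x_{k+1}=m_k-\eta a x_k-\eta\xi_k$ and $m_{k+1}=m_k-\beta\eta a x_k-\beta\eta\xi_k$. Heuristically $x_k\approx m_{k-1}$ up to $O(\eta)$ noise, so $m$ behaves like an AR(1) process with contraction $1-\beta\eta a$ and innovation $\beta\eta\xi$. Its variance is then about $\beta^2\eta^2\sigma^2/(2\beta\eta a)=\beta\eta\sigma^2/(2a)$, which is a factor $\beta$ below the baseline. We make this rigorous by solving the Lyapunov equation for the explicit $2\times2$ matrix
\[
\bM=\begin{pmatrix}-\eta a&1\\-\beta\eta a&1\end{pmatrix}.
\]
We check that its eigenvalues lie inside the unit disk: the trace is $1-\eta a$ and the determinant is $\eta a(\beta-1)$, which are small perturbations of the eigenvalues $0$ and $1$. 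We then bound $V_{\mathrm{ctrl}}\le C\beta\eta\sigma^2/a+C\eta^2\sigma^2$.

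Finally we choose the instance. Fix $\eta$ and $\sigma$, take $\beta=\epsilon/(4C)$, and take $a$ small enough that $\eta a\le\beta^2$ and $\eta a\ll 1$. Then
\[
\frac{V_{\mathrm{ctrl}}}{V_{\mathrm{SGD}}}\lesssim \beta+\eta a\le\epsilon.
\]

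The main obstacle is the controlled Lyapunov computation: we need a clean upper bound on $P_{22}$ uniformly in the regime $\eta a\ll\beta\ll 1$. The eigenvalue near $1$, namely $1-\beta\eta a+O(\beta\eta^2a^2)$, creates near-singular denominators that must be tracked exactly. Our first attempt is to solve the three linear equations for $(P_{11},P_{12},P_{22})$ symbolically and expand to leading order in $\eta a$. If that becomes messy, we will instead use the transfer-function representation $m=H(z)\xi$ and compute $\tfrac{\sigma^2}{2\pi}\int|H(e^{i\omega})|^2d\omega$ by residues.
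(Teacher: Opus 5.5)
Your plan is correct and is essentially the paper's proof. You take $\bC=\eye$, decouple to one coordinate, and compute both limiting variances exactly; your Lyapunov equation is equivalent to the paper's Yule--Walker computation for the AR(2) recursion $m_{k+1}=(1-\eta a)m_k+\eta a(1-\beta)m_{k-1}-\beta\eta\xi_k$, obtained by eliminating $x_k$. The ratio is then roughly $\eta a+\beta$, and you send it to zero with $\eta a\le\beta^2$ while the paper uses $\beta<\eta a<1/8$; either choice works.

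The step you flag as the main obstacle closes at once in that AR(2) form. With $\phi_1=1-\eta a$ and $\phi_2=\eta a(1-\beta)$ one has $1-\phi_1-\phi_2=\eta a\beta$, which cancels the near-singularity against the innovation variance $\beta^2\eta^2\sigma^2$. This gives $V_{\mathrm{ctrl}}=\beta\eta\sigma^2(1-\eta a+\eta a\beta)/\bigl(a(1+\eta a-\eta a\beta)(2-2\eta a+\eta a\beta)\bigr)\le\beta\eta\sigma^2/a$ for $\eta a\le 1/2$, so your claimed bound holds without the additive $C\eta^2\sigma^2$ term.
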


\section{Introducing \pace: Pullback Averaging Control for Efficient Optimization}
\label{sec:method}

\begin{figure}[t]
    \centering
    \includegraphics[width=\textwidth]{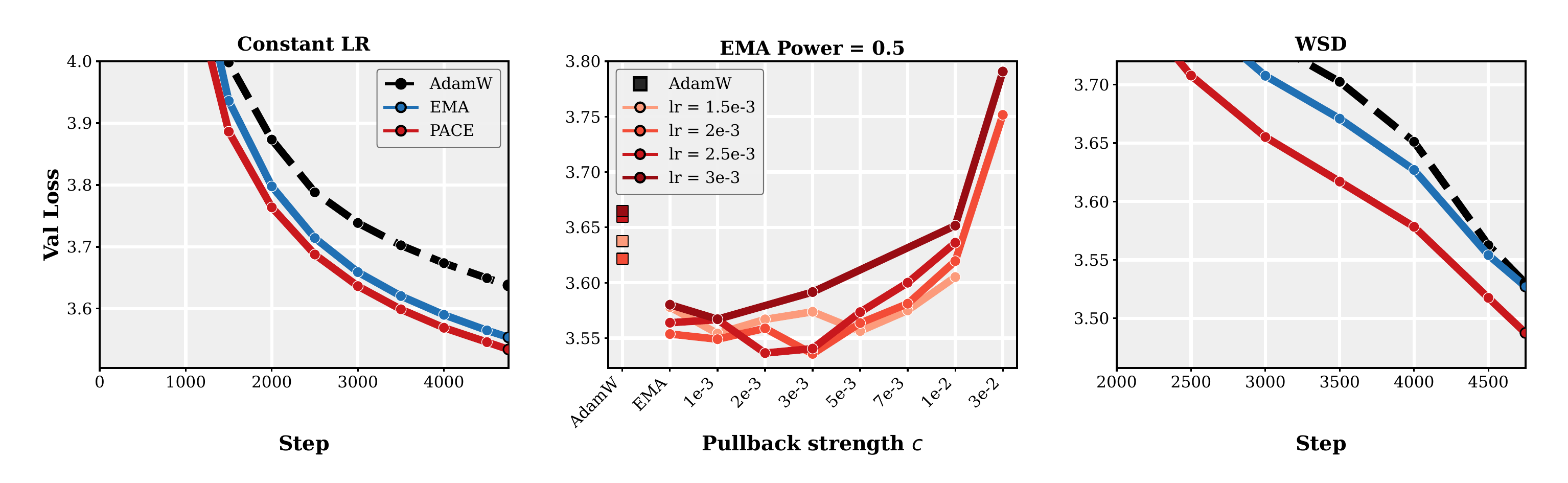}
    \caption{
        \textbf{Performance of \pace\ on pretraining} of GPT-2 (124M) on FineWeb at the Chinchilla-optimal token budget. \textbf{Left:} validation cross-entropy trajectories at a constant learning rate, with \ema\ and \pace\ at the same EMA power ($\kappa=0.5$) so that they differ only in the pullback. \textbf{Middle:} effect of pullback strength $c$ on validation cross-entropy at $\kappa=0.5$. \textbf{Right:} the analogous comparison under WSD, using linear learning rate decay to zero over the last $20\%$ of steps. \pace\ outperforms both the AdamW and \ema\ baselines under each schedule.
    }
    \label{fig:pretrain}
\end{figure}

While we derived a simple modification to the base optimization algorithm in \eqref{eq:stylized_update}, several additional tweaks make the proposed method work in practice.  The first key modification is that we will replace the SGD update with an AdamW update \citep{loshchilov2017fixing}, which is a standard optimizer choice for language models; in particular, we will use both momentum and preconditioning in the vanilla optimizer step.  The second question is how to choose the matrix $\bC$ that preconditions the pullback toward the iterate average.  According to the theory in \eqref{eq:practical_control_strategy}, the optimal choice of $\bC$ scales inversely with $\bA$ and $T$.  While in practice language models are obviously not quadratic, locally they can be approximated thereby, with $\bA$ corresponding to the Hessian \citep{zhang2019algorithmic,zhang2019lookahead}; following the standard noisy-quadratic-model identification of Adam's second-moment estimate $v_t$ as a Hessian-diagonal proxy, we will thus set $\bC$ to be a diagonal matrix with entries given by the Adam preconditioner \citep{kingma2014adam}.  One point where our empirical implementation deviates from the theory is that the latter suggests preconditioning should be scale like $\nicefrac 1{v_t}$, but for stability reasons we find that the standard Adam preconditioner of $\nicefrac 1{\sqrt{v_t} + \varepsilon}$ works better in practice.  

Following standard praxis in deep learning, in order to account for the nonstationarity of the optimization landscape, we replace a fixed $\beta$ in the EMA with a decaying $\beta_t = (1+t)^{-\kappa}$, where $\kappa \in (0, 1)$ is a hyperparameter \citep{busbridge2023scale,block2025ema,block2024butterfly,kaddour2022stop}; in effect this makes the iterate average more responsive to recent iterates, with $\kappa = 1$ recovering uniform averaging.  For the same reason, we replace the $T^{-1}$ scaling in \eqref{eq:practical_control_strategy} with a slower decaying factor that allows the pullback to remain significant even late in training.  We incorporate clipping to ensure that our updates are always convex combinations of the iterate averaged point and the na{\"i}ve optimizer update step. The resulting control becomes
\begin{align}\label{eq:practical_control_strategy_empirical}
    u_{t} = \min\left( (1 + t)^{- \kappa} \cdot \nicefrac{\eta c}{\left( \sqrt{v_{t}} + \epsilon \right)}, 1 \right)\left( \thetema[t] - \theta_t \right),
\end{align}
where the operations are applied coordinate-wise, $\eta$ is the learning rate, $c \geq 0$ is a single scalar setting the controller strength, and $v_t$ is the Adam second-moment estimate.  Note that in the limit as $c \uparrow \infty$, we recover the Lookahead optimizer with an inner loop of a single step \citep{zhang2019lookahead}, while at $c=  0$, we recover AdamW with \ema.  Finally, we allow for the update to only occur at a specified frequency, $\mathrm{uf}$, as is common in practical implementations of \ema. The result, \pace, is detailed in \cref{alg:controlled-bema}.  

\paragraph{Implementation Concerns.} In practice, there are two concerns with implementing \pace\ at scale.  First, we have introduced a new tuning parameter $c$ that controls the strength of the pullback toward the iterate average, as well as the update frequency $\mathrm{uf}$; on the other hand, the update frequency is already a parameter in standard implementations of \ema, and our experiments suggest that the optimal pullback strength is relatively robust and transfers across hyperparameters like learning rate (cf. \Cref{fig:135m_grid}).
Second, and perhaps more importantly, \pace\ requires maintaining an additional copy of model weights during training, which is a nontrivial memory overhead.  One mitigating factor is that this additional pressure is less relevant at large scale due to the use of large batch sizes, but exploring further mitigations of the memory overhead of \pace\ is an important direction for future work.

\begin{figure}[t]
    \centering
    \includegraphics[width=\textwidth]{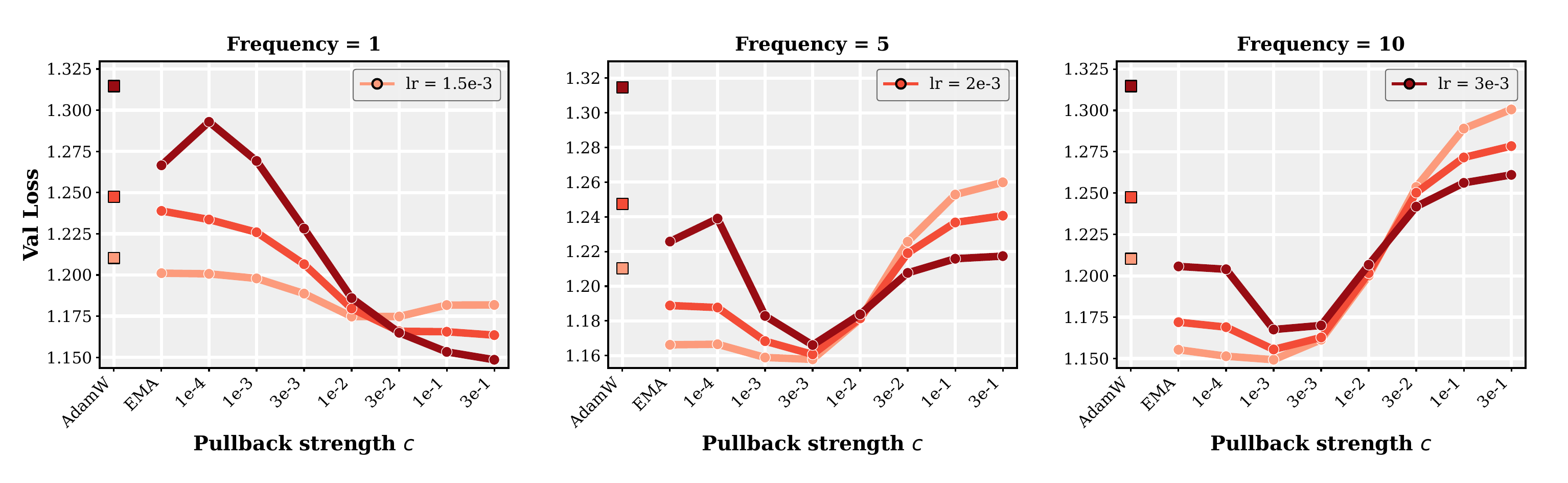}
    \caption{\textbf{Effect of update frequency and pullback strength.} Validation cross-entropy of SmolLM2-135M for different learning rates, pullback strengths, and update frequencies with $\kappa{=}0.3$.  Optimal pullback strength remains relatively robust to learning rate and update frequency, with improvement over the EMA baseline across a wide range of settings.}
    \label{fig:135m_grid}
\end{figure}

\section{Empirical Investigation of \pace}
\label{sec:experiments}

\begin{figure}[t]
    \centering
    \includegraphics[width=\textwidth]{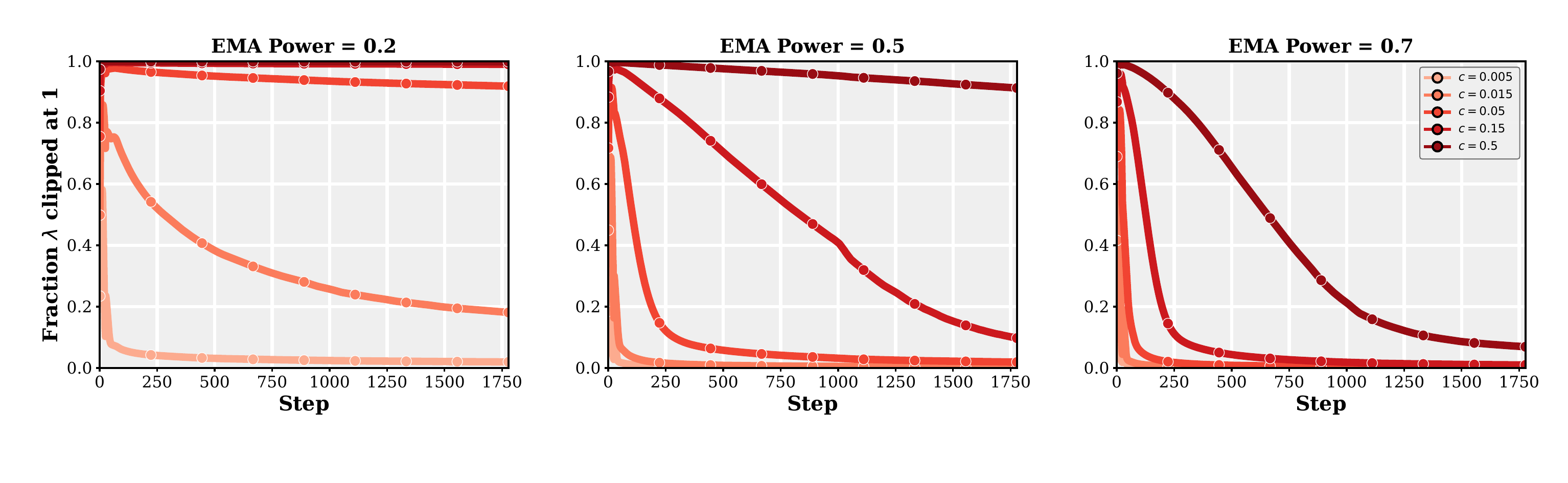}
    \caption{
        \textbf{Fraction of clipped updates} throughout fine-tuning of SmolLM2-1.7B at a fixed learning rate of $\eta=3{\times}10^{-4}$ for different pullback strengths $c$ and EMA powers $\kappa$.  For large $c$ and small $\kappa$, a substantial fraction of coordinates are clipped at $1$ throughout training, meaning that the pullback fully transports those coordinates to the EMA point, especially early in training.
        }
    \label{fig:clip_frac_body}
\end{figure}

Our proposed algorithm, \pace, was derived in a simplified quadratic model and it is thus not \emph{a priori} clear that it will be effective in practice on the non-convex, non-quadratic problems that arise in language model training.  In this section, we empirically evaluate \pace\ on two standard language-modeling regimes: supervised fine-tuning of openweight $1$--$2$B-parameter language models, and from-scratch pretraining of a $124$M-parameter model at the Chinchilla-optimal token budget.

Our experiments are primarily organized around three questions: (i) does \pace\ improve on vanilla AdamW and an EMA thereof across model families and learning rates? (ii) is the improvement robust to the optimizer's three new hyperparameters (pullback strength $c$, EMA power $\kappa$, and update frequency $\mathrm{uf}$)? and (iii) how does \pace\ compare to alternative baselines like learning rate decay and Schedule-Free \citep{defazio2024schedule}? Further ablations are deferred to \cref{sec:exp-details}.

\subsection{Empirical Setup}
\label{sec:exp-setup}

For \emph{post-training}, we finetune three pre-trained language models of similar scale, \textbf{SmolLM2-1.7B} \citep{smollm2}, \textbf{Qwen3-1.7B} \citep{qwen3}, and \textbf{Gemma3-1B} \citep{gemma3}, on the \texttt{smol-smoltalk} dataset \citep{smoltalk} for one epoch on the training split, plus a small-model dense ablation on \textbf{SmolLM2-135M}. For \emph{pretraining}, we train \textbf{GPT-2 (124M)} \citep{radford2019gpt2} on \texttt{FineWeb} \citep{penedo2024fineweb} at the Chinchilla-optimal token budget of $2.5$B tokens \citep{hoffmann2022chinchilla}. Fine-tuning runs use a single NVIDIA RTX PRO 6000 GPU; the GPT-2 pretraining sweep runs on a Google Cloud TPU v6e (Trillium) pod, with training hyperparameters and resource costs summarized in and around \cref{tab:models} of \cref{sec:exp-details}. We report final validation cross-entropy on a held-out \texttt{smol-smoltalk} split for fine-tuning and on a fixed FineWeb evaluation shard for pretraining.

\subsection{Main Results}
\label{sec:exp-main}

In \Cref{fig:train_curves}, we show the training curves of \pace, \ema, and vanilla AdamW on the fine-tuning task outlined above.  We see that \pace\ substantially improves on \ema\ at the same per-model learning rate.  Note that the comparison to AdamW without EMA is also included as an illustration.
 In \Cref{fig:pretrain}, we show the \iftoggle{colt}{}{same comparison for the} pretraining regime, where we again see a clear improvement of \pace\ over \ema.  
 In \Cref{fig:appendix_tulu_qwen} (deferred to the appendix), we repeat these experiments for Qwen on the Tulu dataset \citep{lambert2024tulu}, with similar results.
In all cases, we compare the best configurations of \ema\ and vanilla training to \pace, where we swept across learning rate, decay schedule, EMA power $\kappa$, and pullback strength $c$ to find the best configuration for each baseline.  Thus, these results suggest that \pace\ can significantly improve on the performance of the iterate-average estimator in practice, and that this improvement is robust across model families and datasets.

\subsection{Further Empirical Results}
\label{sec:exp-further}

In addition to our main results, we also conduct a number of ablations and further experiments to better understand the behavior of \pace\ and its sensitivity to its hyperparameters.  While many results are deferred to \cref{sec:exp-details} in the interest of space, we briefly summarize some of the key findings.

\paragraph{Effect of pullback strength $c$, EMA power $\kappa$, and update frequency.}  In each of our settings, we conduct a dense sweep over the pullback strength $c$ and EMA power $\kappa$ hyperparameters of \pace\ for several learning rates and update frequencies in order to understand the sensitivity of \pace\ to these hyperparameters and to verify that the improvement over EMA-evaluated AdamW is robust.  An example from our fine-tuning experiments is shown in \Cref{fig:135m_grid}, which is broadly replicated across models and settings (\Cref{fig:c_kappa,fig:appendix_135m_full_grid}).  In particular, we see that relatively small values of $c$ (e.g. $c=3 \times 10^{-3}$) can substantially improve on EMA-evaluated AdamW, and that the improvement is robust across a range of learning rates and update frequencies.

\paragraph{Effect of clipping $\lambda$.}  In \eqref{eq:practical_control_strategy_empirical} and \Cref{alg:controlled-bema}, we clip the pullback gain $\lambda$ to ensure that the update is always a convex combination of the iterate average and the na{\"i}ve optimizer step.  When the clipping is instantiated, \pace\ transports the relevant coordinate of the current iterate $\theta_k$ to exactly equal that of the EMA point $\thetema[k]$, recovering a special case of Lookahead \citep{zhang2019lookahead}.  In \cref{fig:clip_frac_body}, we show how the fraction of clipped coordinates varies with different hyperparameters throughout training on SmolLM2-1.7B at $\eta=3{\times}10^{-4}$, observing that in extreme cases, essentially all coordinates are clipped (especially early in training), but generally clipping is not enforced for most coordinates.  \iftoggle{colt}{}{
Further results are in \cref{fig:appendix_clip_frac,fig:appendix_clip_frac_vs_c}.
}

\paragraph{Effect of LR Decay.} 
While our theory assumed constant learning rate, decay schedules are a standard part of practical training pipelines.  We experiment with three decay schedules: (i) constant learning rate, (ii) linear decay to zero over the last $20\%$ of steps, and (iii) cosine decay to zero.  In  \Cref{fig:pretrain} (right) we compare \pace\ to AdamW with linear decay to zero over the last $20\%$ of steps in pretraining, finding that \pace\ strictly improves even in this setting.  Exhaustive results for all three schedules can be found in \Cref{sec:exp-details,sec:app-post}; in summary, \pace\ provides improvement over the baseline across all schedules, and the improvement is robust to the choice of schedule.

\paragraph{Comparing to the schedule-free baseline.} We also compare \pace\ to the most prominent schedule-free optimizer (SF) \citep{defazio2024schedule}, which also uses iterate averaging to stabilize training and remove the need for learning rate decay.  In \Cref{fig:bema_sf}, we compare \pace\ to Schedule-Free and observe competitive performance. While Schedule-Free requires one fewer copy of model weights than our na{\"i}ve \pace\ implementation, it must switch between `fast' and `slow' moving iterate averages during training, leading to slower training (12 vs 10 hours on a v6e TPU for Qwen3-1.7B in our experiments).  Finally, in \Cref{fig:bema_sf} (right), we show that \pace\ applied with a constant learning rate outperforms AdamW with WSD at every token budget, suggesting that \pace\ can be an effective alternative to learning rate decay.  Similar results hold for pretraining, as shown in \Cref{fig:appendix_pretrain_river}, providing further evidence that \pace\ can be an effective alternative to learning rate decay, although we emphasize that improved performance is typically achieved by combining \pace\ with a decay schedule.

\begin{figure}[t]
    \centering
    \includegraphics[width=\textwidth]{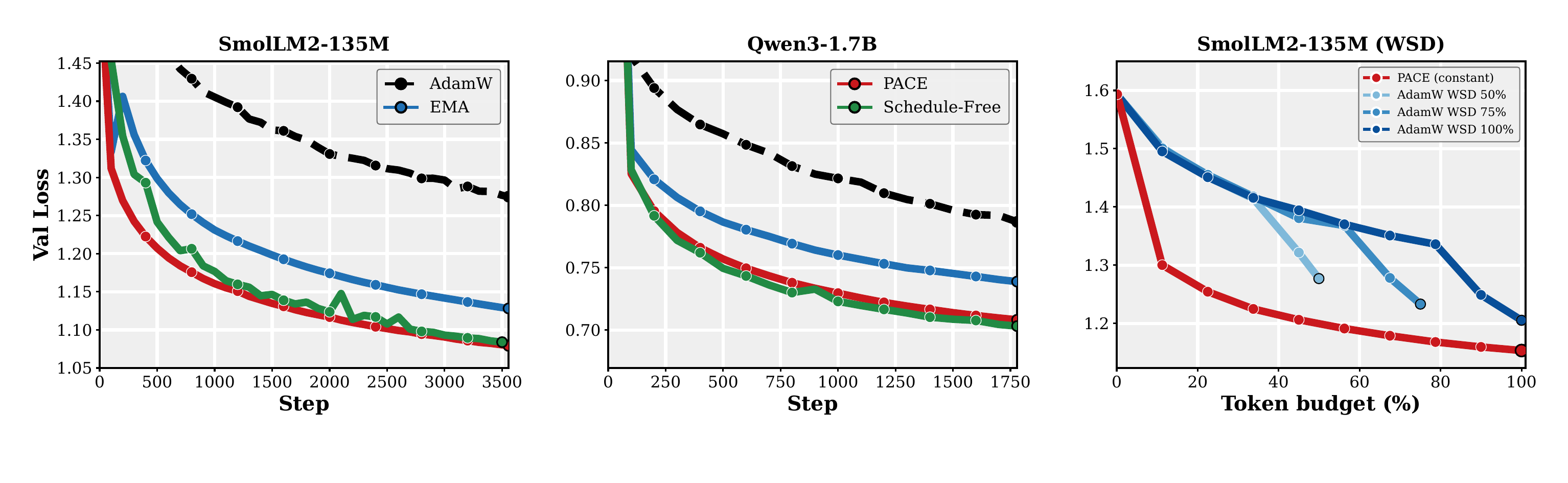}
    \caption{
        \textbf{Comparison of \pace\ to Schedule-Free \citep{defazio2024schedule}, AdamW, and EMA on fine-tuning.} Validation cross-entropy on \texttt{smol-smoltalk} for SmolLM2-135M (\textbf{left}), Qwen3-1.7B (\textbf{middle}), and a token-budget comparison on SmolLM2-135M (\textbf{right}): AdamW with WSD decays the learning rate to zero over the last $20\%$ of token budgets of $50\%/75\%/100\%$ of the run, while \pace\ trains at a constant learning rate over the full budget. \pace\ improves on AdamW and EMA-only and is competitive with Schedule-Free; in the right panel, every WSD branch ends above the \pace\ trajectory at every budget.
        }
    \label{fig:bema_sf}
\end{figure}

\section{Discussion}\label{sec:discussion}

In this work, we introduced \pace, a simple modification to existing LM optimization algorithms, formally derived in an optimal control formulation, that explicitly controls the training trajectory to improve the performance of the iterate-average estimator.  
We then showed empirically that \pace\ can improve training performance in both pre- and post-training settings.

\paragraph{Limitations.}  A theoretical limitation is the restrictiveness of the assumptions under which \pace\ was derived.  This is partly mitigated by the convergence guarantee in the convex setting, but rigorously incorporating momentum and adaptive preconditioning into the analysis of the optimizer is an important direction for future work.  
\iftoggle{colt}{}{We emphasize that in all experiments, \pace\ is implemented with both momentum and adaptive preconditioning.  }
There are several empirical limitations of the work as well.  First, due to resource constraints, we were unable to verify the efficacy of \pace\ at model scales larger than $2$B parameters for post-training and $124$M parameters for pretraining.  
Second, \pace\ requires maintaining an additional copy of model weights during training and a more detailed exploration of the memory overhead and mitigations thereof will be important for practical implications.  Finally, consistent with literature in LM optimization \citep{vyas2024soap,liu2023sophia,defazio2024schedule}, our headline runs use a single training seed for compute reasons; the multi-seed robustness studies in \Cref{fig:appendix_multiseed_robust} and \Cref{fig:appendix_pretrain_csweep} confirm that the method ordering is stable across seeds.

\paragraph{Future Work.} In addition to the directions for future work mentioned above, there are at least two additional questions raised by our work.  First, because our analysis is on SGD, our convergence guarantees do not require momentum and it is natural to explore the extent to which the pullback term alone can compensate for the lack of momentum; this would be a significant practical benefit of the method, as it would reduce the memory overhead of \pace\ by one copy of model weights.  Second, throughout our empirical study, we used AdamW as a base optimizer, but modern LM pretraining pipelines increasingly use second-order methods such as SOAP \citep{vyas2024soap} and MuON \citep{jordan2024muon}; exploring the extent to which \pace\ can be adapted to these optimizers and whether it provides similar benefits in that setting is an important direction for future work.  Moreover, it is natural to ask if these second-order preconditioners could be applied to the pullback term itself, which would be a more direct implementation of the optimal-control solution in \eqref{eq:practical_control_strategy}.

\section*{Acknowledgements}

We acknowledge the Google TPU Research Cloud for providing some of the compute resources used in this work.

\bibliographystyle{plainnat}
\bibliography{refs}

\clearpage
\tableofcontents

\newpage

\appendix

\section{Further Details on Empirical Setup}
\label{sec:exp-details}

In this appendix we collect the additional information needed to reproduce the experiments in \cref{sec:experiments}: training data, models, optimizer hyperparameters, per-experiment grids, hardware and compute, evaluation protocol, and the supplementary figures referenced from the body.

\paragraph{Training data.} We use \texttt{smol-smoltalk} \citep{smoltalk} as our default fine-tuning corpus, using the fixed train/validation split prepared for these experiments and packing examples to sequence length $1280$. Some supplementary sweeps use \texttt{tulu-3} \citep{lambert2024tulu} with the same supervised fine-tuning pipeline. For pretraining, we train GPT-2-124M on \texttt{FineWeb} \citep{penedo2024fineweb}.

\paragraph{Models.} \cref{tab:models} summarises the five model families we consider. The fine-tuning models are initialized from the official Hugging Face checkpoints. SmolLM2 and Gemma3 use their native tokenizers.

\begin{table}[t]
\centering
\caption{Per-run training budgets. Fine-tuning runs are one epoch unless noted; pretraining uses $2.5$B-tokens.}
\small
\begin{tabular}{lrlcc}
\toprule
Model & Params & Tokenizer & Seq.\ len & Eff.\ batch / steps \\
\midrule
SmolLM2-135M       & 135M  & SmolLM2          & 1280 & 128 / 889    \\
SmolLM2-1.7B       & 1.7B  & SmolLM2          & 1280 & 256 / 1{,}778 \\
Qwen3-1.7B         & 1.7B  & Qwen BPE         & 1280 & 256 / 1{,}779 \\
Gemma3-1B          & 1.0B  & Gemma3           & 1280 & 256 / 1{,}779 \\
GPT-2-124M         & 124M  & GPT-2 BPE        & 1024 & 512 / 4{,}750 \\
\bottomrule
\end{tabular}

\vspace{2pt}
\footnotesize The SmolLM2-135M method-comparison pool used in Fig.~\ref{fig:bema_sf} uses a long-budget rerun with 3{,}557 steps.

\label{tab:models}
\end{table}

\paragraph{Training.} Unless stated otherwise for the Schedule-Free and WSD baselines, AdamW-family runs use the standard defaults $(\beta_1, \beta_2) = (0.9, 0.999)$, $\varepsilon = 10^{-8}$, weight decay $\omega = 10^{-2}$, gradient clipping at $1.0$, and a $50$-step linear warmup followed by a constant learning rate (no decay). Training uses $\texttt{bfloat16}$ mixed precision. When needed, we use gradient accumulation to reach the reported effective batch size. Headline runs use seed $42$. The learning rate $\eta$ varies per experiment and is reported in \cref{tab:hp_per_experiment}.

\paragraph{Optimizer hyperparameters.} \pace\ adds three tunable hyperparameters on top of standard AdamW: the pullback strength $c$, the EMA power $\kappa$, and the update frequency $\mathrm{uf}$ (per-experiment values in \cref{tab:hp_per_experiment}). We use $\gamma=1.0$, $\rho=0$, and clamp the per-parameter (per-coordinate) pullback gain $\lambda_{t,i}$ at $1$, matching the implementation defaults. For Schedule-Free \citep{defazio2024schedule} we use the authors' reference implementation under their published NanoGPT recipe ($\beta_1{=}0.98$, $\beta_2{=}0.95$, weight decay $0.05$, no gradient clipping); learning rate is matched per model.

\begin{table}[h]
\centering
\caption{Sweep space used by the body and appendix figures.}
\footnotesize
\renewcommand{\arraystretch}{1.15}
\setlength{\tabcolsep}{5pt}
\begin{tabular}{@{}l l p{0.53\textwidth}@{}}
\toprule
\textbf{Figures} & \textbf{Cell} & \textbf{Sweep space} \\
\midrule
\ref{fig:train_curves}, \ref{fig:c_kappa}, \ref{fig:appendix_full_train_curves}
  & 1B fine-tuning
  & SmolLM2-1.7B: $\eta\!\in\!\{3,7,10\}{\times}10^{-4}$\newline
    Qwen3-1.7B: $\eta\!\in\!\{3,5\}{\times}10^{-4}$\newline
    Gemma3-1B: $\eta\!\in\!\{10^{-3},5{\times}10^{-3}\}$\newline
    Plotted $c$ bins $\in\!\{10^{-4},5{\times}10^{-4},10^{-3},3{\times}10^{-3},$\newline
    $\phantom{\text{Plotted }c\text{ bins }\in\!\{}10^{-2},3{\times}10^{-2},10^{-1},3{\times}10^{-1}\}$\newline
    $\kappa\!\in\!\{0.2,0.5,0.7\}$, $\mathrm{uf}{=}1$
  \\
\midrule
\ref{fig:appendix_tulu_qwen}
  & Qwen3-1.7B Tulu
  & $\eta\!\in\!\{3,5\}{\times}10^{-4}$, $\kappa\!\in\!\{0.2,0.5,0.7\}$\newline
    $c\!\in\!\{5{\times}10^{-2},10^{-1},1.5{\times}10^{-1},2.5{\times}10^{-1}\}$
  \\
\midrule
\ref{fig:clip_frac_body}, \ref{fig:appendix_clip_frac}, \ref{fig:appendix_clip_frac_vs_c}
  & SmolLM2-1.7B clipping
  & Same SmolLM2-1.7B fine-tuning runs as above; clipping telemetry is reported for $\eta\!\in\!\{3{\times}10^{-4},10^{-3}\}$, $\kappa\!\in\!\{0.2,0.5,0.7\}$, and $c\!\in\!\{5{\times}10^{-3},1.5{\times}10^{-2},5{\times}10^{-2},1.5{\times}10^{-1},5{\times}10^{-1}\}$.
  \\
\midrule
\ref{fig:135m_grid}, \ref{fig:appendix_135m_full_grid}
  & SmolLM2-135M
  & Dense sweep: $\eta\!\in\!\{1,\;1.5,\;2,\;3,\;5\}{\times}10^{-3}$\newline
    $c\!\in\!\{0,\;10^{-4},\;10^{-3},\;3{\times}10^{-3},\;10^{-2},$\newline
    $\phantom{c\!\in\!\{}3{\times}10^{-2},\;10^{-1},\;3{\times}10^{-1}\}$\newline
    $\kappa\!\in\!\{0.3,\,0.5,\,0.7\}$, $\mathrm{uf}\!\in\!\{1,\,5,\,10\}$
  \\
\midrule
\ref{fig:bema_sf}, \ref{fig:appendix_train_curves}, \ref{fig:appendix_wsd_val_curves}, \ref{fig:appendix_wsd_finetune}
  & Method and WSD comparisons
  & SmolLM2-135M: AdamW/EMA/\pace\ at $\eta\!\in\!\{10^{-3},5{\times}10^{-3}\}$;\newline
    Schedule-Free: $\eta\!\in\!\{10^{-3},\;3{\times}10^{-3},\;5{\times}10^{-3},\;10^{-2}\}$\newline
    WSD baselines use $\eta\!\in\!\{1,1.5,2,3,5\}{\times}10^{-3}$\newline
    Qwen3-1.7B: AdamW/EMA/\pace\ at $\eta\!\in\!\{3,5\}{\times}10^{-4}$; Schedule-Free at $\eta\!\in\!\{3{\times}10^{-4},10^{-3}\}$.
  \\
\midrule
\ref{fig:pretrain}, \ref{fig:appendix_pretrain_kappa_sweep}, \ref{fig:appendix_pretrain_csweep}
  & GPT-2-124M
  & $\eta\!\in\!\{3{\times}10^{-4},\; 10^{-3},\; 1.5{\times}10^{-3},\; 2{\times}10^{-3},$\newline
    $\phantom{\eta\!\in\!\{}2.5{\times}10^{-3},\; 3{\times}10^{-3},\; 10^{-2}\}$\newline
    $c$: coarse grid $10^{-4}$--$3{\times}10^{-1}$ plus local grid $1{\times}10^{-3}$--$10^{-2}$\newline
    $\kappa\!\in\!\{0.2,\,0.5,\,0.7\}$, $\mathrm{uf}{=}1$
  \\
\midrule
\ref{fig:appendix_fig1_all_schedules}, \ref{fig:appendix_multiseed_robust}
  & 1B schedules and seeds
  & Schedules $\in\!\{$constant, cosine, WSD$\}$ for AdamW/\ema\ at each model's headline $\eta$ ($10^{-3}$ for SmolLM2-1.7B and Gemma3-1B, $5{\times}10^{-4}$ for Qwen3-1.7B); \pace\ at a constant $\eta$.\newline
    Multi-seed: SmolLM2-1.7B at $\eta{=}10^{-3}$, seeds $\{42,1337,2024\}$.
  \\
\midrule
\ref{fig:bema_sf}
  & SmolLM2-135M budgets
  & AdamW WSD at token budgets $\{444,667,889\}$ steps,\newline
    $\eta\!\in\!\{3,5\}{\times}10^{-3}$; \pace\ at a constant learning rate over the full budget ($c{=}10^{-1}$, $\kappa{=}0.3$).
  \\
\midrule
\ref{fig:pretrain}, \ref{fig:appendix_pretrain_csweep}, \ref{fig:appendix_pretrain_csweep_kappa}
  & GPT-2-124M schedules
  & Shared grid per schedule (constant, cosine, WSD) at $\eta{=}2{\times}10^{-3}$:\newline
    $c\!\in\!\{3{\times}10^{-4},10^{-3},2{\times}10^{-3},3{\times}10^{-3},5{\times}10^{-3},7{\times}10^{-3},10^{-2}\}$\newline
    $\kappa\!\in\!\{0.1,0.2,0.3,0.5\}$, plus AdamW and \ema, each trained at three seeds $\{0,1,42\}$.
  \\
\midrule
\ref{fig:appendix_pretrain_river}
  & GPT-2-124M budgets
  & WSD at token budgets $\{2375,3563,4750\}$ steps\newline
    ($\eta{=}2{\times}10^{-3}$, $\kappa{=}0.5$, $c{=}3{\times}10^{-3}$); the token-budget overlay uses \pace\ at a constant learning rate ($c{=}2{\times}10^{-3}$, $\kappa{=}0.5$).
  \\
\bottomrule
\end{tabular}

\label{tab:hp_per_experiment}
\end{table}

\paragraph{Hardware and compute.} Fine-tuning runs used NVIDIA RTX PRO 6000 Blackwell (98\,GB) cards; the GPT-2 pretraining sweep used Google Cloud TPU v6e (Trillium) chips. Per-run budgets are listed in \cref{tab:models}.

\paragraph{Evaluation.} For supervised fine-tuning we report held-out cross-entropy on the fixed validation split for each dataset. For pretraining we report cross-entropy on a fixed FineWeb validation shard. EMA-family methods are evaluated using $\theta_t^{\mathrm{EMA}}$ rather than the live iterate (we swap weights at evaluation time and restore them afterwards); plain AdamW reports its live-weight value; Schedule-Free reports its native vanilla evaluation since SF does not maintain a separate estimator.

\paragraph{Reproducibility.} We will release the code and configuration files needed to reproduce the reported sweeps.

\subsection{Supplementary figures}
\label{sec:exp-details-figs}

This subsection collects the full hyperparameter-sweep and figures
referenced from \cref{sec:experiments} and the result appendices; takeaways are
stated in the captions and discussed where each figure is cited.

\begin{figure}[t]
  \centering
  \includegraphics[width=\textwidth]{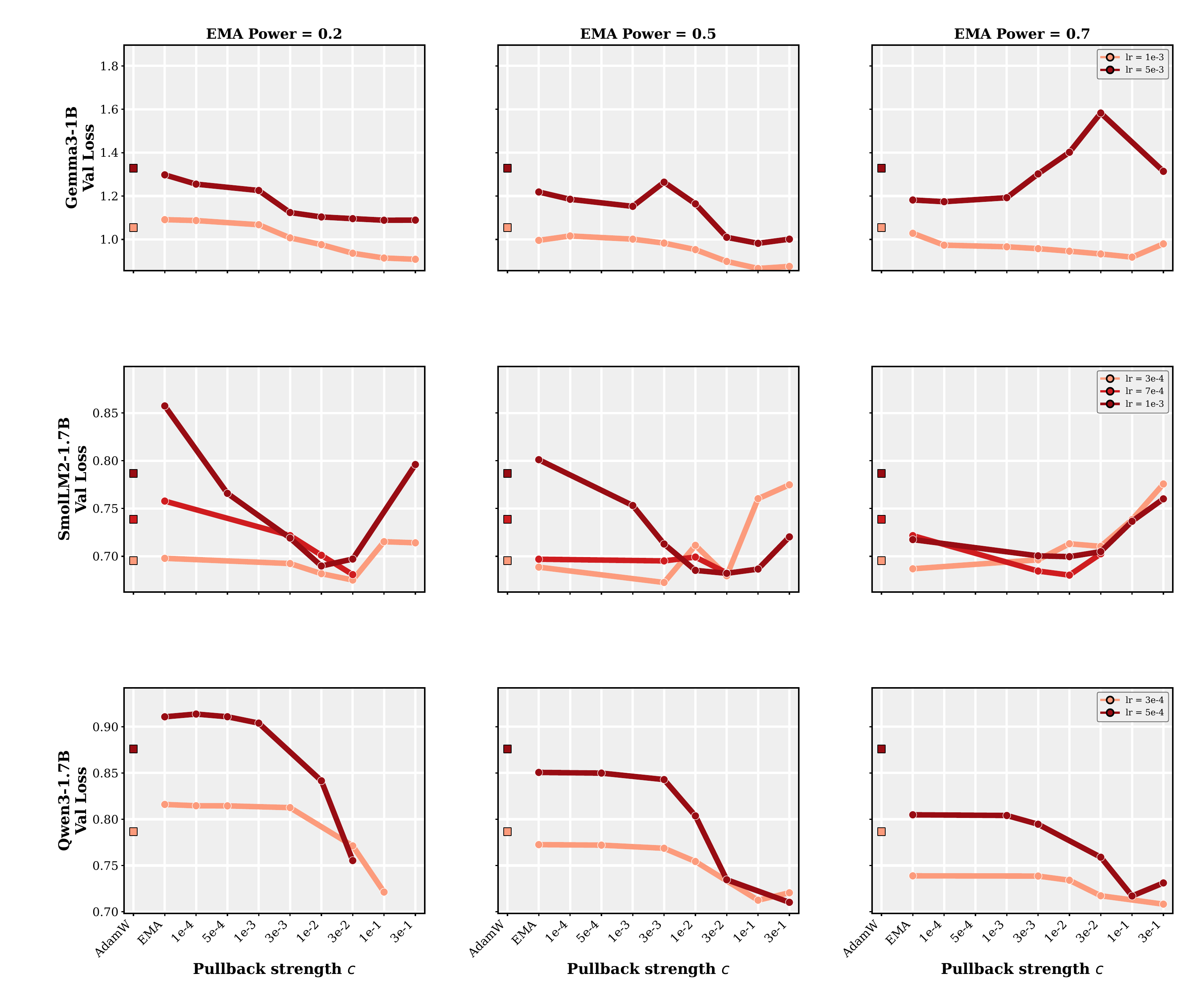}
  \captionof{figure}{\textbf{Effect of EMA power and pullback strength on SmolLM2-1.7B, Qwen3-1.7B, and Gemma3-1B.} Validation cross-entropy across learning rates, pullback strengths, and EMA powers. Optimal pullback strength remains relatively robust to learning rate and EMA power, with improvement over the EMA baseline across a wide range of settings and models.}
  \label{fig:c_kappa}
\end{figure}

\begin{figure}[ht]
  \centering
  \includegraphics[width=0.667\textwidth]{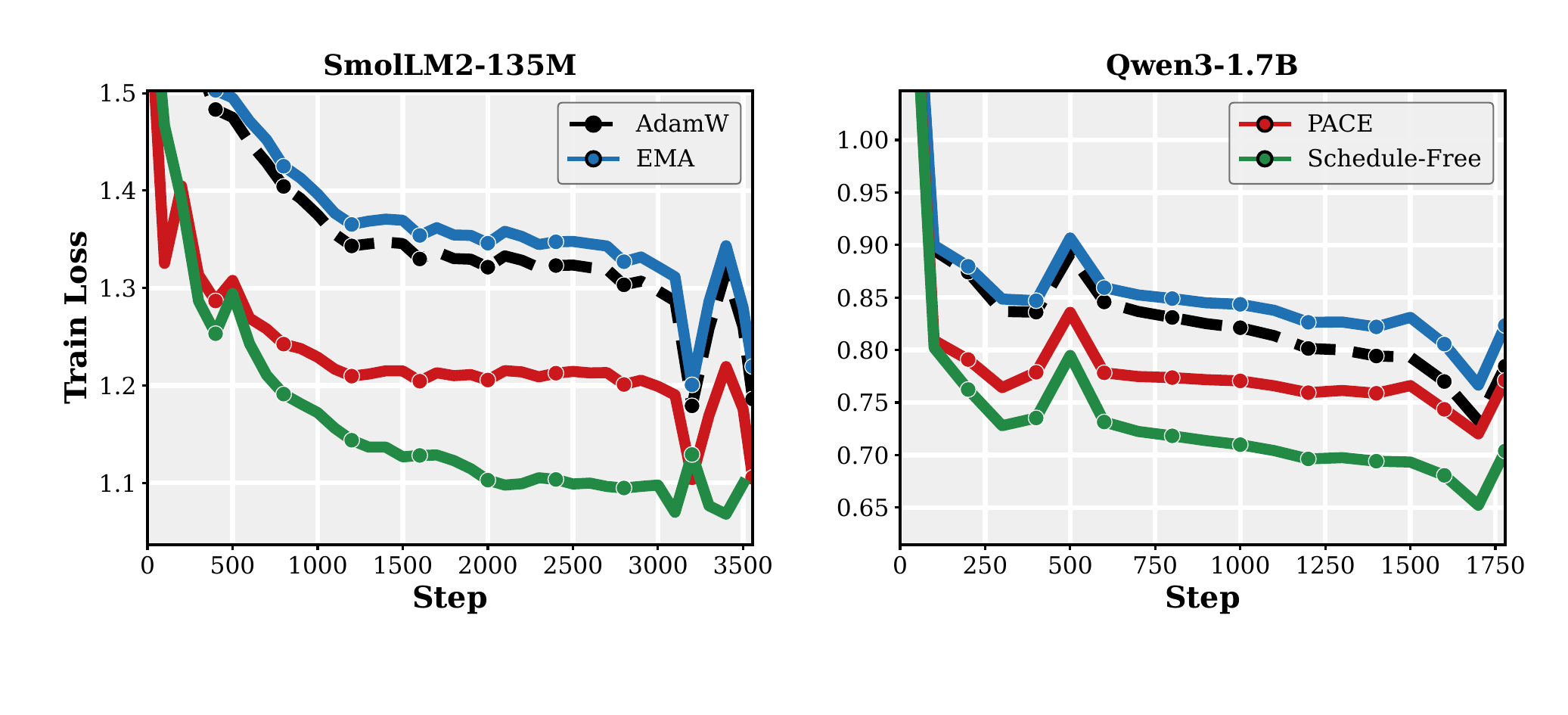}
  \caption{\textbf{Training-loss comparison of \pace\ to Schedule-Free \citep{defazio2024schedule}, AdamW, and EMA on fine-tuning.} Smoothed training loss on the same model selections as Fig.~\ref{fig:bema_sf}. Although \pace\ has higher training loss than Schedule-Free in this view, its validation loss is competitive in Fig.~\ref{fig:bema_sf}.}
  \label{fig:appendix_train_curves}
\end{figure}

\begin{figure}[ht]
  \centering
  \includegraphics[width=\textwidth]{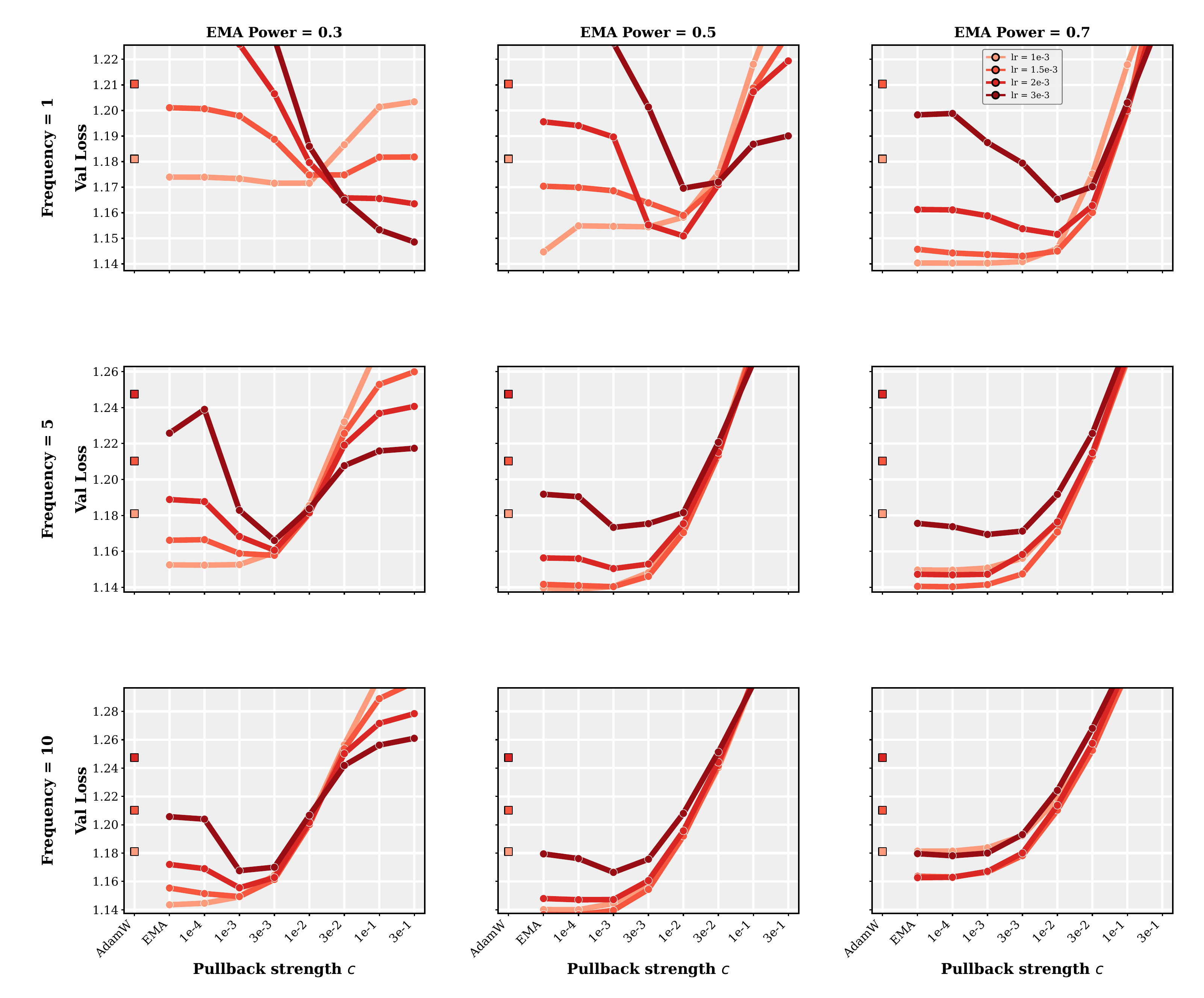}
  \caption{\textbf{Effect of update frequency, EMA power, and pullback strength.} Validation cross-entropy of SmolLM2-135M for different learning rates, pullback strengths, EMA powers, and update frequencies. Optimal pullback strength remains relatively robust to learning rate and update frequency, with improvement over the EMA baseline across a wide range of settings.}
  \label{fig:appendix_135m_full_grid}
\end{figure}

\begin{figure}[ht]
  \centering
  \includegraphics[width=\textwidth]{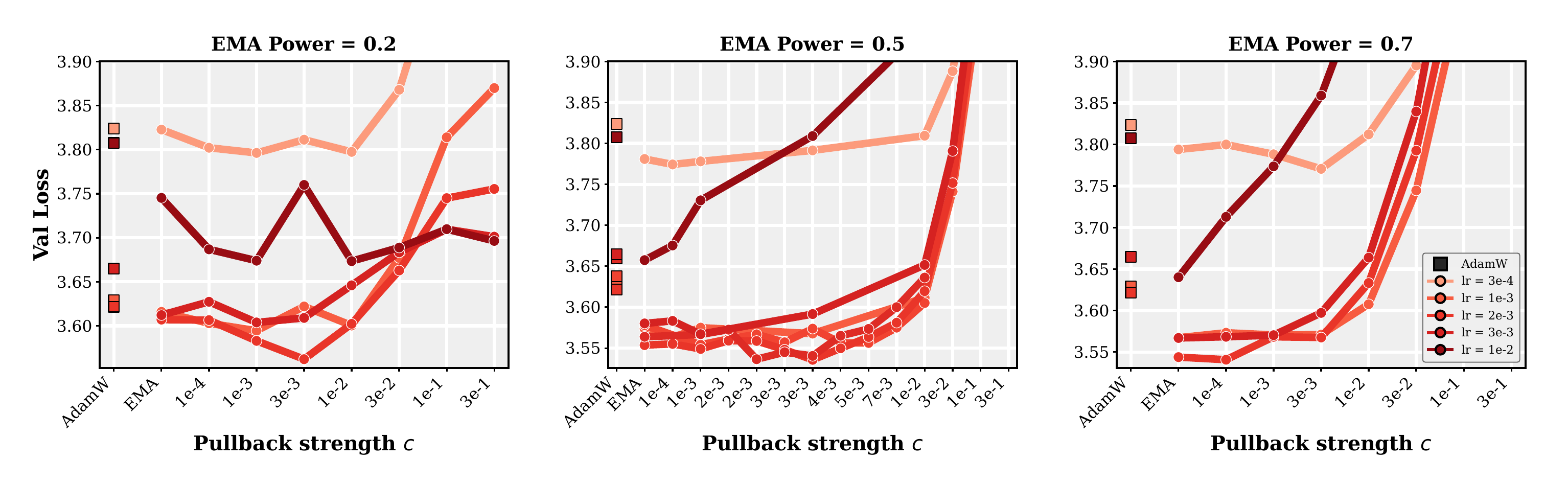}
  \caption{\textbf{Effect of EMA power and pullback strength on pretraining.} Validation cross-entropy on GPT-2-124M trained on FineWeb for different learning rates, pullback strengths, and EMA powers. Optimal pullback strength remains robust, with improvement over the EMA baseline across a wide range of settings.}
  \label{fig:appendix_pretrain_kappa_sweep}
\end{figure}

\begin{figure}[ht]
  \centering
  \includegraphics[width=\textwidth]{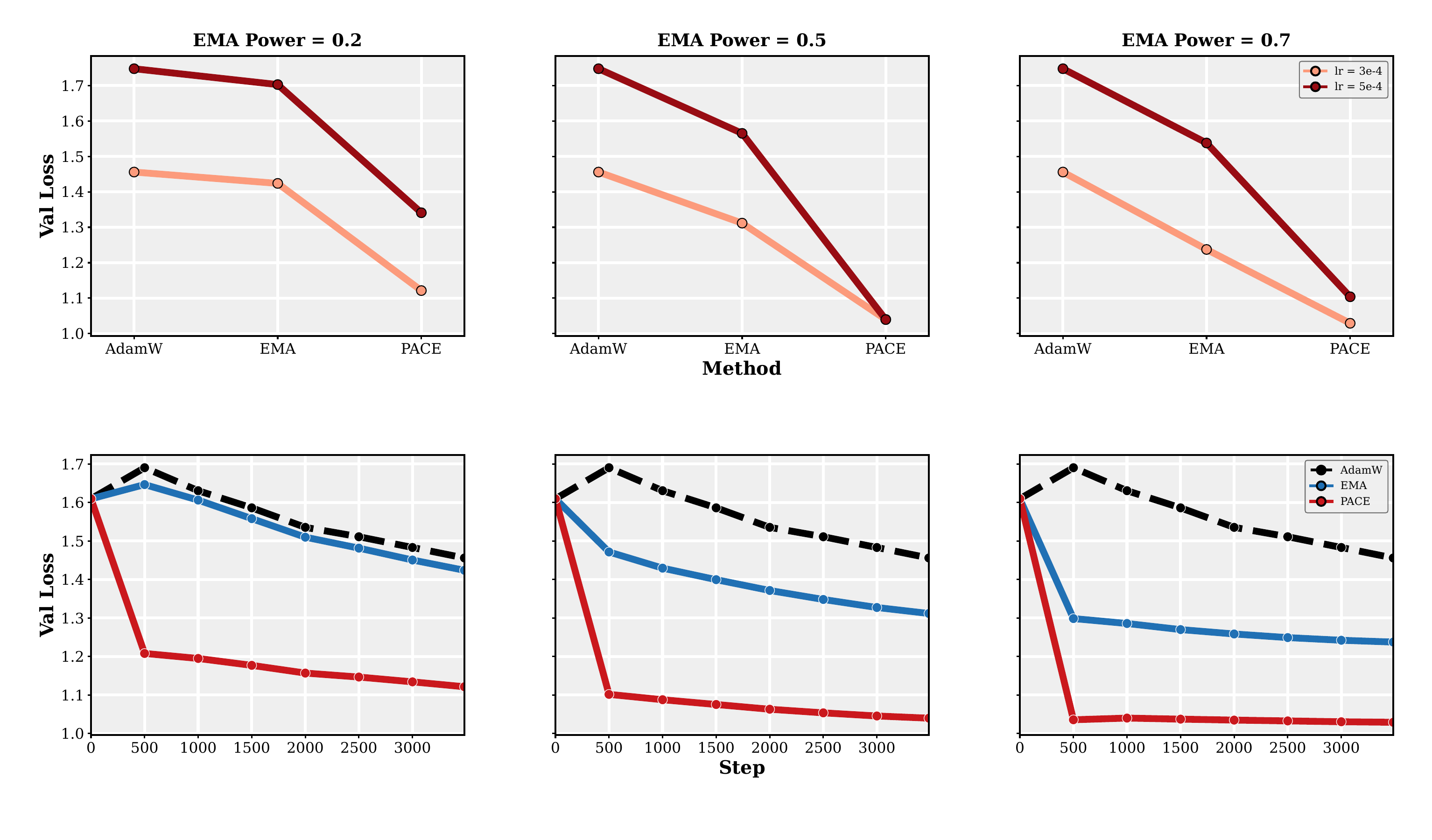}
  \captionof{figure}{\textbf{Effect of EMA power and pullback strength on Qwen3-1.7B Tulu-3 fine-tuning.} Validation cross-entropy across learning rates and EMA powers. \textbf{Top:} final validation loss for AdamW, EMA, and \pace. \textbf{Bottom:} validation trajectories at $\eta=3{\times}10^{-4}$. \pace\ improves over the matched AdamW and EMA baselines across the reported settings.}
  \label{fig:appendix_tulu_qwen}
\end{figure}

\begin{figure}[ht]
  \centering
  \includegraphics[width=\textwidth]{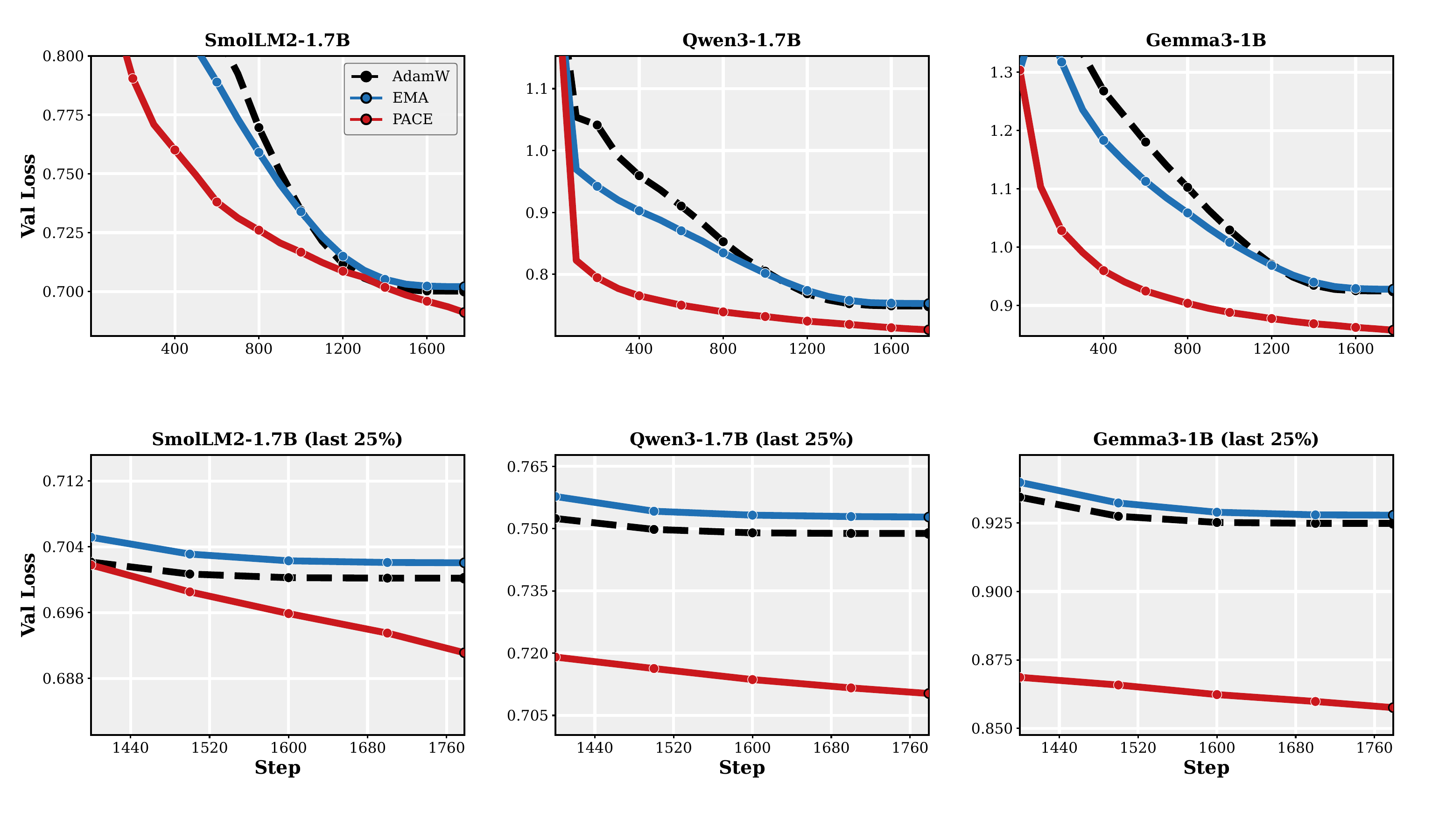}
  \caption{\textbf{Last 25\% of the cross-entropy trajectories from Fig.~\ref{fig:train_curves}.} The top row shows the full validation trajectories, and the bottom row zooms into the final quarter of training. \pace\ shows a clear improvement on SmolLM2-1.7B, Qwen3-1.7B, and Gemma3-1B.}
  \label{fig:appendix_full_train_curves}
\end{figure}

\begin{figure}[ht]
  \centering
  \includegraphics[width=\textwidth]{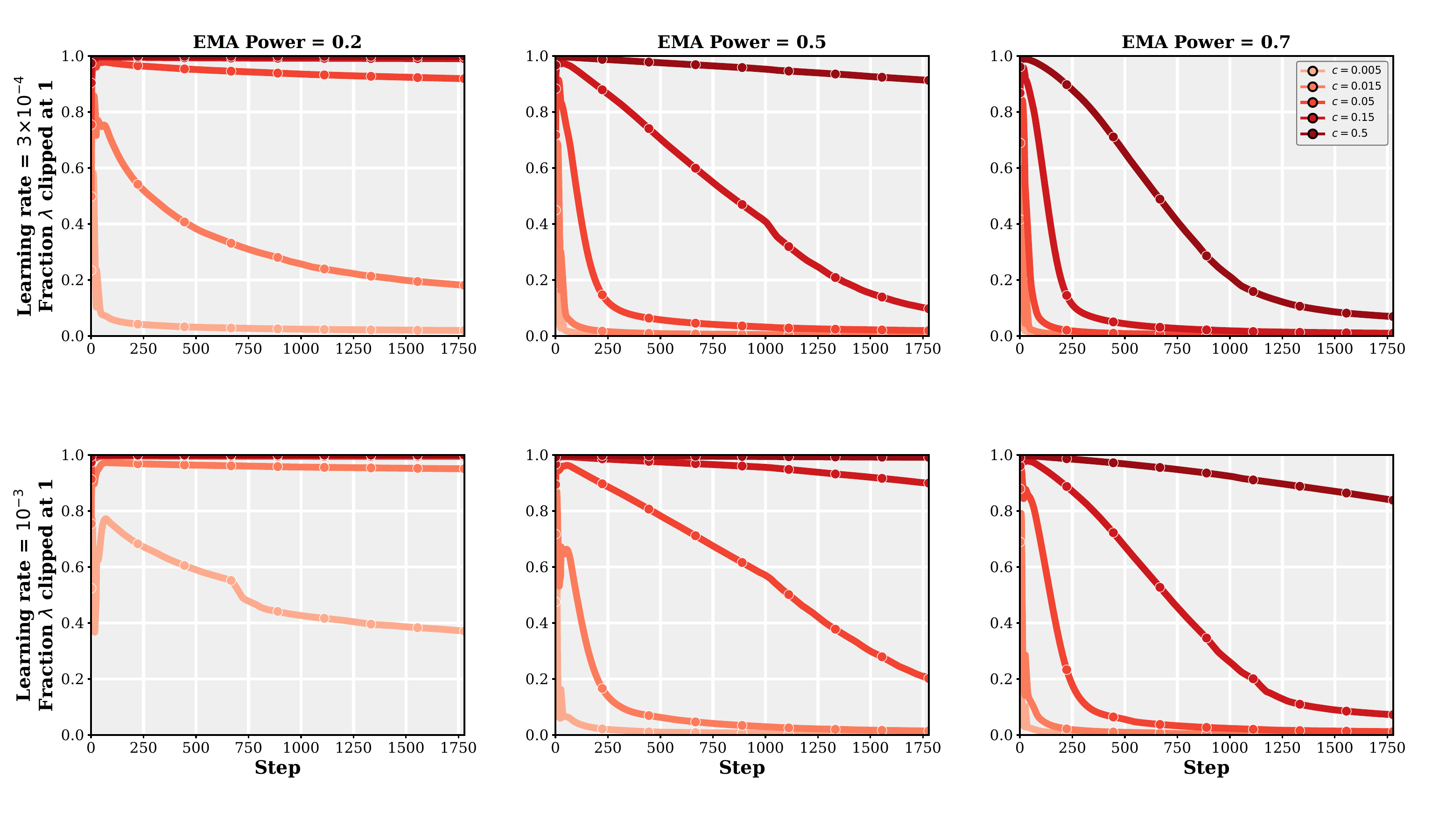}
  \caption{\textbf{Fraction of clipped updates over training.} Fraction of parameters with saturated pullback gain $\lambda_{t,i}=1$ on SmolLM2-1.7B, across learning rate, EMA power, and pullback strength. Saturation increases with pullback strength and is most pronounced for smaller EMA power.}
  \label{fig:appendix_clip_frac}
\end{figure}

\begin{figure}[ht]
  \centering
  \includegraphics[width=\textwidth]{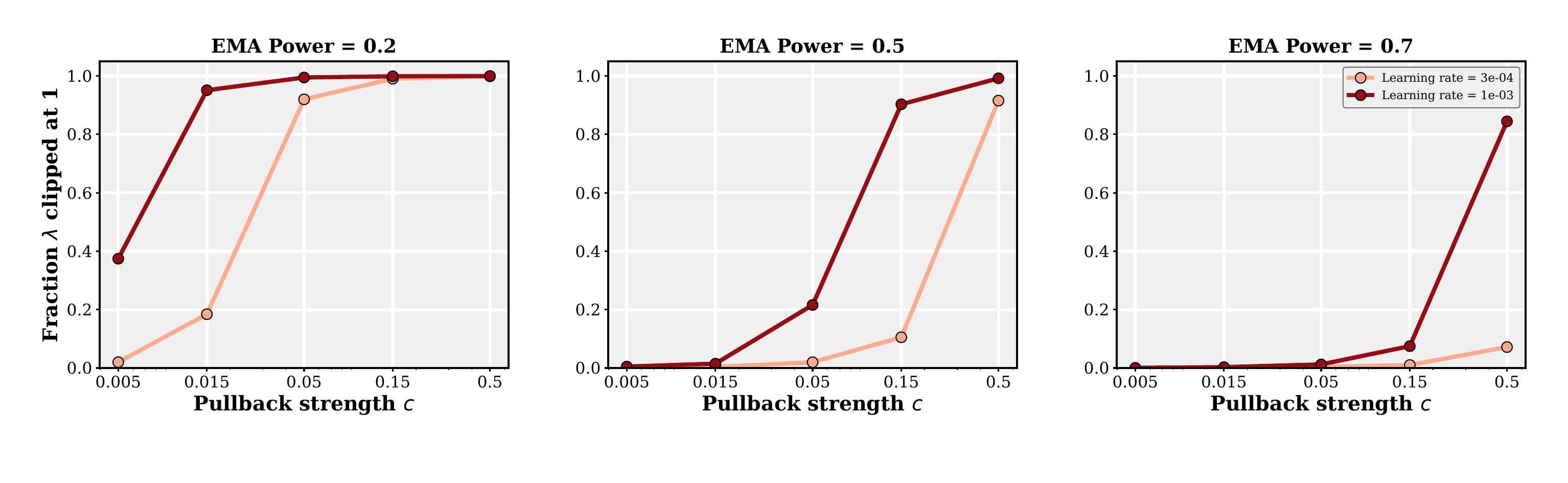}
  \caption{\textbf{End-of-training fraction of clipped updates.} Fraction of parameters with saturated pullback gain $\lambda_{t,i}=1$ on SmolLM2-1.7B as a function of pullback strength and learning rate. Smaller EMA power reaches saturation at smaller pullback strengths.}
  \label{fig:appendix_clip_frac_vs_c}
\end{figure}

\begin{figure}[ht]
  \centering
  \includegraphics[width=\textwidth]{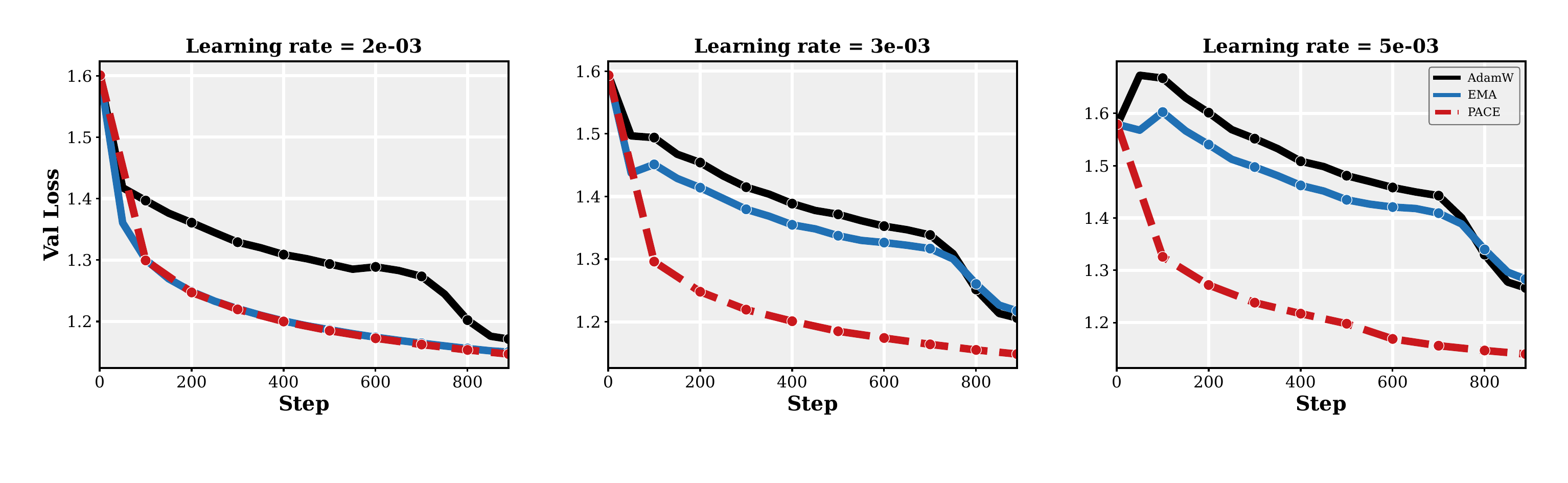}
  \caption{\textbf{\pace\ against AdamW and EMA under fine-tuning.} Validation cross-entropy on SmolLM2-135M at three learning rates, with the baselines given their best decay schedule and \pace\ held at a constant learning rate. \pace\ ends below both baselines at every learning rate, and its margin over them grows sharply as the learning rate increases.}
  \label{fig:appendix_wsd_val_curves}
\end{figure}

\section{Additional Results}
\label{sec:app-results}

\subsection{Fine-tuning}
\label{sec:app-post}

The advantage of \pace\ over AdamW and \ema\ is robust: it holds whatever
learning-rate schedule the baselines are given
(\Cref{fig:appendix_fig1_all_schedules}) and across three random seeds
(\Cref{fig:appendix_multiseed_robust}).

\paragraph{Baselines across all learning-rate schedules.}
Even when AdamW and \ema\ are given their best schedule---constant, cosine, or
WSD---\pace\ at a constant learning rate stays below both, on all three models
(\Cref{fig:appendix_fig1_all_schedules}).

\begin{figure}[htb]
  \centering
  \includegraphics[width=0.676\textwidth]{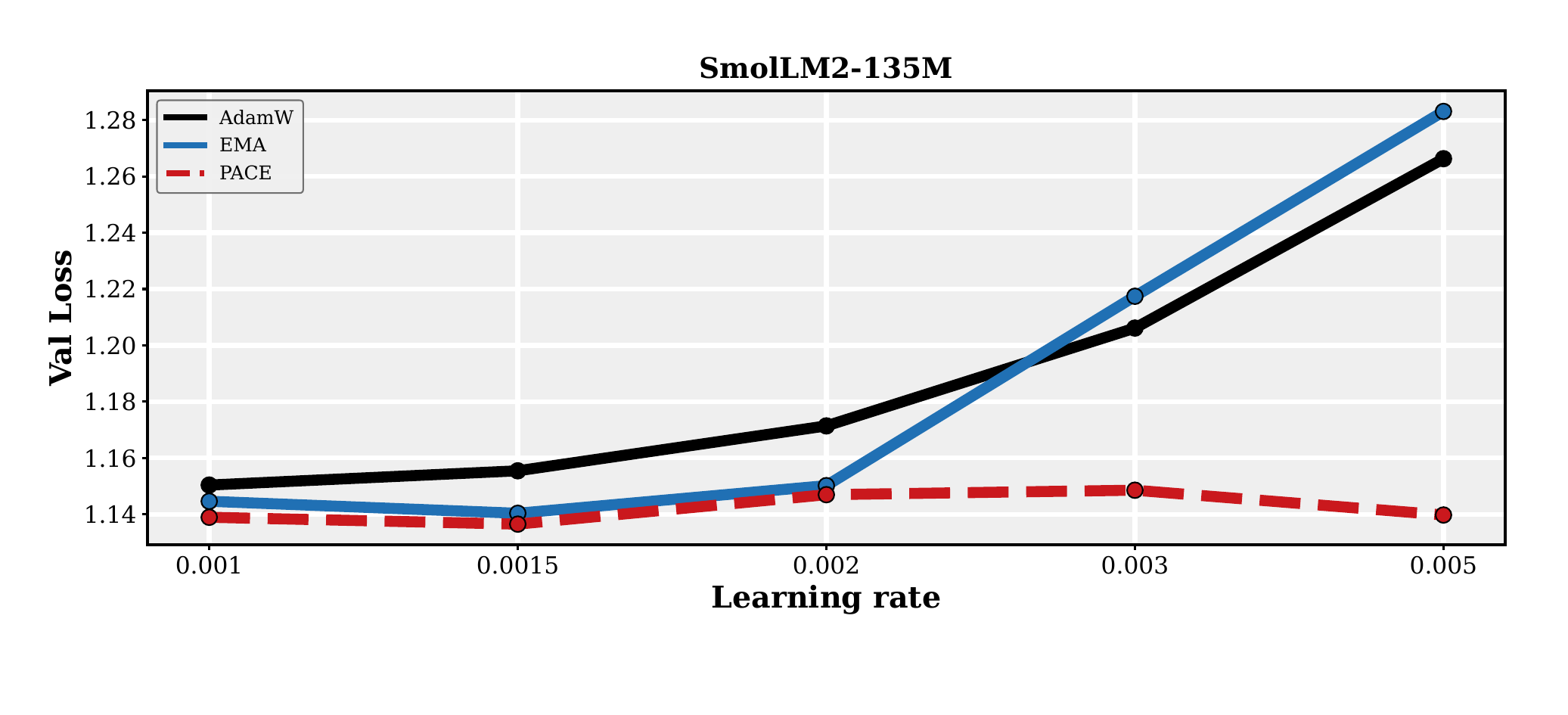}
  \caption{\textbf{Fine-tuning across learning rates.} Best validation cross-entropy on SmolLM2-135M for AdamW, EMA, and \pace, with the baselines given their best decay schedule and \pace\ held at a constant learning rate. \pace\ attains the lowest loss at every learning rate, by a margin that widens as the learning rate increases and the baselines degrade.}
  \label{fig:appendix_wsd_finetune}
\end{figure}

\begin{figure}[ht]
  \centering
  \includegraphics[width=\textwidth]{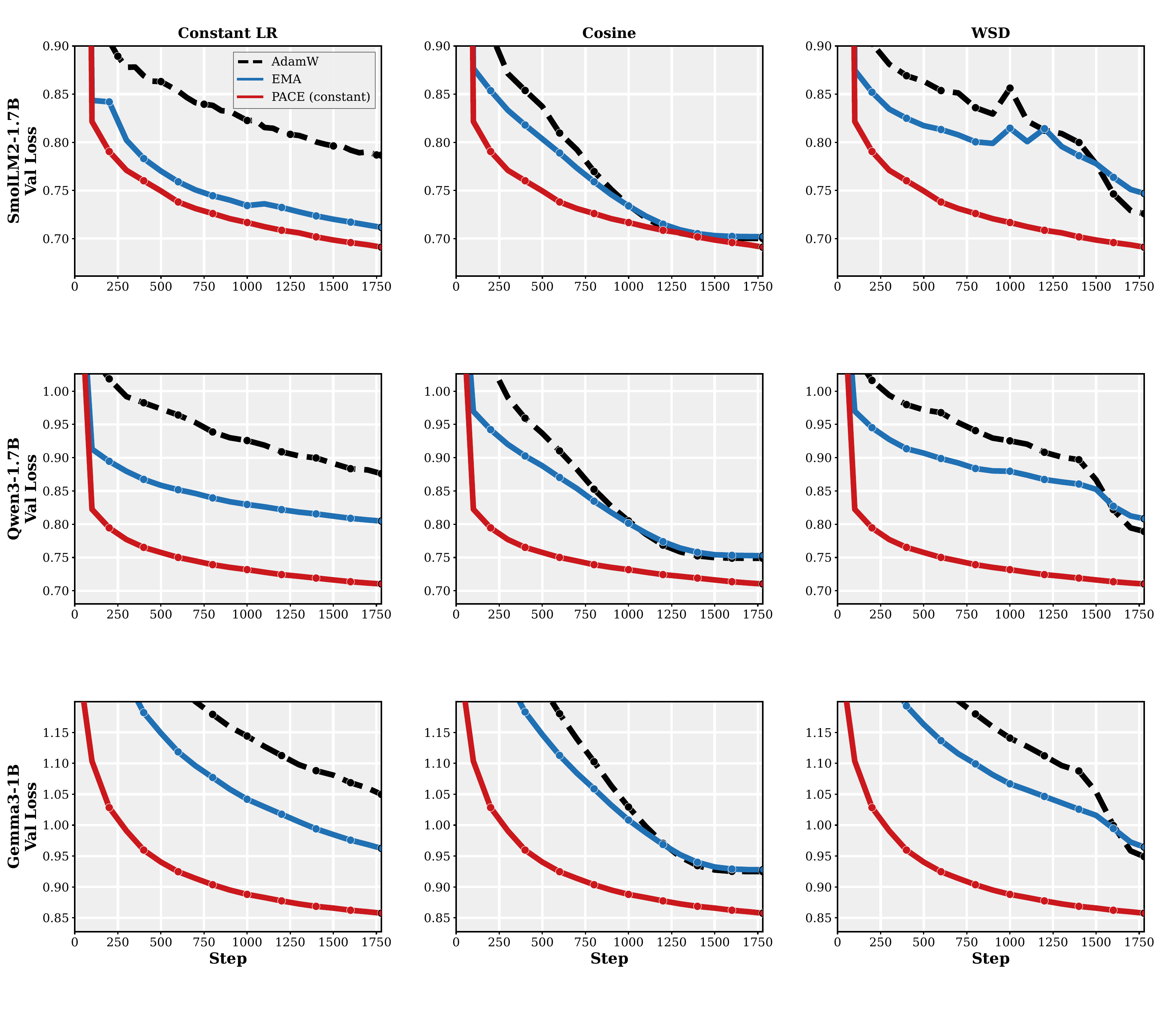}
  \caption{\textbf{\pace\ against the baselines under every learning-rate
  schedule.} Validation cross-entropy on \texttt{smol-smoltalk}; rows are
  models, columns are the baselines' learning-rate schedule (constant, cosine,
  WSD), with the \pace\ constant-learning-rate run as the reference in each
  panel. \pace\ improves on both baselines under every schedule on all three
  models.}
  \label{fig:appendix_fig1_all_schedules}
\end{figure}

\paragraph{Multi-seed robustness.} Across three seeds, the entire \pace\ band
stays below those of AdamW and \ema\ at every step---under both a constant
learning rate and WSD, on all three models---so the improvement exceeds the
seed-to-seed spread (\Cref{fig:appendix_multiseed_robust}).

\begin{figure}[t]
  \centering
  \includegraphics[width=\textwidth]{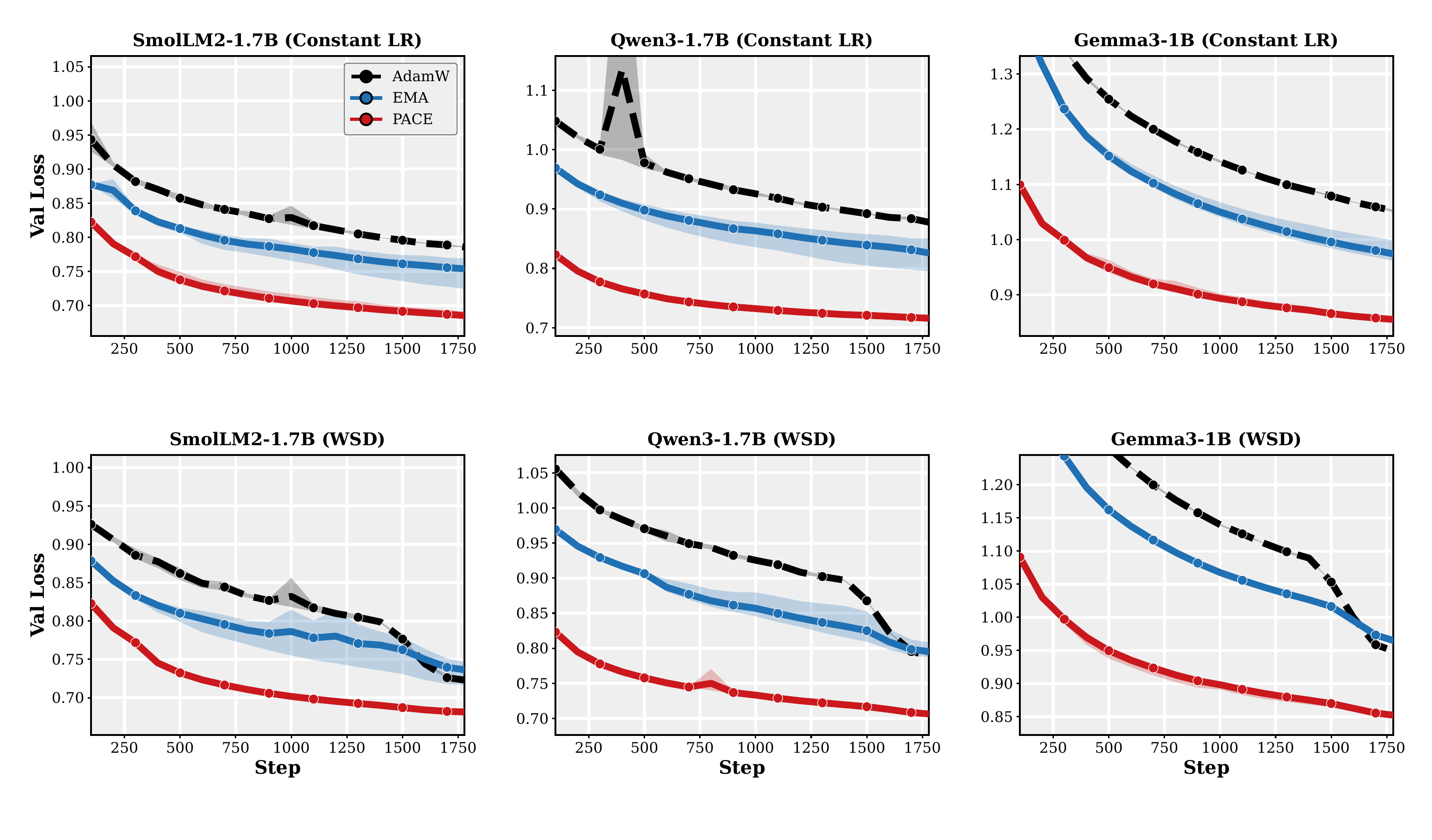}
  \caption{\textbf{Multi-seed robustness of fine-tuning.} Across three seeds,
  \pace\ stays below both AdamW and \ema\ on all three models, under both a
  constant learning rate and WSD, by more than the seed-to-seed variation at
  every step.}
  \label{fig:appendix_multiseed_robust}
\end{figure}

\subsection{Pretraining}
\label{sec:app-pretrain}

In pretraining, \pace\ beats both the AdamW and \ema\ baselines under every
learning-rate schedule, and the best pullback strength is the same under all of
them. The figures below show the full pullback-strength sweep on GPT-2-124M
(trained from scratch on \texttt{FineWeb} at the Chinchilla-optimal budget), as
the mean over three seeds $\{0,1,42\}$.

\paragraph{Multi-seed pullback-strength sweep across schedules.}
\Cref{fig:appendix_pretrain_csweep} sweeps the pullback strength $c$ under all
three schedules (the AdamW and best-$\kappa$ \ema\ baselines are the two leftmost
ticks). Two things stand out. First, at its optimum \pace\ beats both baselines
under every schedule, and the improvement holds across a band of pullback
strengths ($10^{-3}\le c\le 3{\times}10^{-3}$)---wider still under WSD and cosine.
Second, the optimum is the same ($c\approx 3{\times}10^{-3}$) under all three
schedules, so the choice of $c$ transfers. WSD with \pace\ is best overall
($3.507\pm0.006$, below the constant-LR optimum of $3.537\pm0.004$ by several
times the seed spread), and \pace\ at a constant learning rate comes within
$0.006$ of AdamW under WSD ($3.537$ vs.\ $3.531$) with no schedule at all.
\Cref{fig:appendix_pretrain_csweep_kappa} resolves the same grid by the EMA
power $\kappa$ (at seed $42$) and shows a basin around the optimal $c$ at every
$\kappa$.

\begin{figure}[ht]
  \centering
  \includegraphics[width=0.676\textwidth]{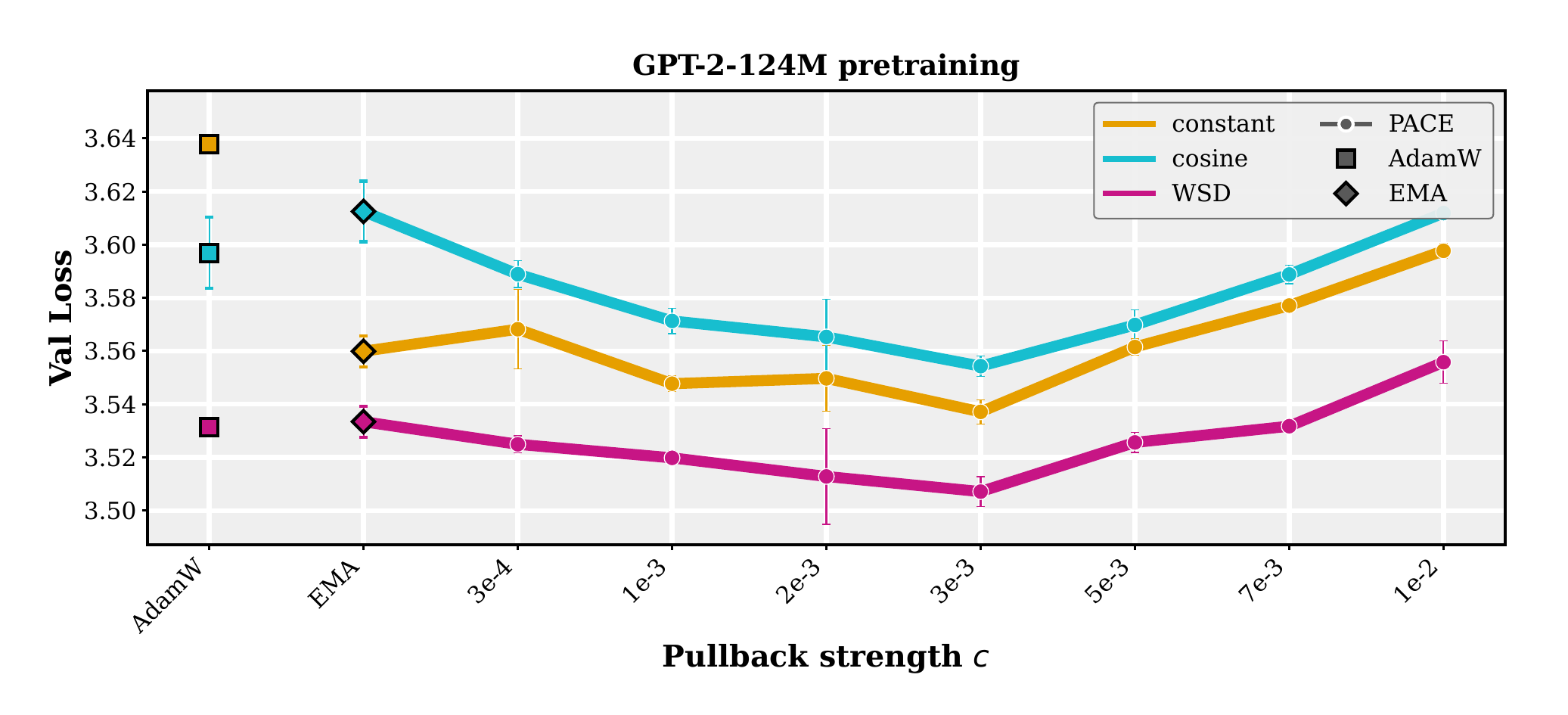}
  \caption{\textbf{Multi-seed pullback-strength sweep across learning-rate
  schedules} on GPT-2-124M pretraining. Mean returned-model validation
  cross-entropy with seed-to-seed standard deviation, comparing \pace\ to the
  AdamW and \ema\ baselines under each schedule. \pace\ improves on both
  baselines under every schedule, by more than the seed-to-seed variation.}
  \label{fig:appendix_pretrain_csweep}
\end{figure}

\begin{figure}[ht]
  \centering
  \includegraphics[width=\textwidth]{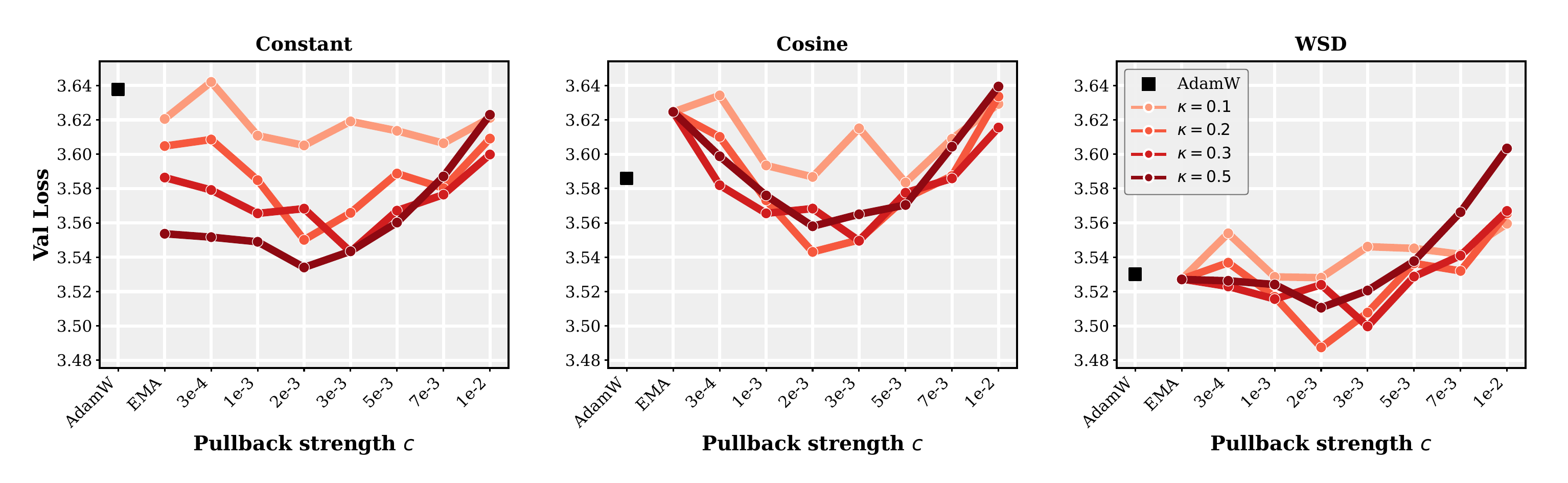}
  \caption{\textbf{Pullback-strength sweep resolved by EMA power} on
  GPT-2-124M pretraining (seed $42$), under a constant learning rate
  (\textbf{left}), cosine decay (\textbf{middle}), and WSD (\textbf{right}). A
  basin spanning $c\approx 2$--$3{\times}10^{-3}$ appears at every $\kappa$
  under every schedule.}
  \label{fig:appendix_pretrain_csweep_kappa}
\end{figure}

\section{Token-Budget Comparisons with WSD}
\label{sec:app-river}

In \Cref{fig:pretrain} (right) and \Cref{fig:bema_sf} (right) we compared
\pace\ at a constant learning rate to AdamW with linear decay to zero. In this
appendix we make that comparison systematic for pretraining and interpret both
regimes. The analysis is inspired by the token-budget comparisons used to
study Schedule-Free methods \citep{defazio2024schedule,song2025river}, and we
read it through the river-valley description of the loss landscape
\citep{wen2024warmup,song2025river}: at a constant learning rate, the iterate
oscillates along the steep ``hill'' directions of a valley while making steady
progress along its flat ``river'' floor, and the decay phase of WSD acts by
suppressing this oscillation and bringing the iterate down to the floor. This
perspective makes clear both the appeal of WSD and its cost: although the
decay phase produces the characteristic sharp drop in loss at the end of
training, until it begins the loss curve overstates the loss the model could
achieve, so that model quality is difficult to assess mid-training, and once
it begins, the remainder of the run is committed to a single token budget. An
optimizer whose returned iterate already lies on the river floor incurs
neither cost: its loss curve reflects post-decay performance at every step of
training, and training may be stopped or continued at any point.

The token-budget comparison tests exactly this property. Because a decay
branch primarily removes oscillation while making little further progress
along the river, the endpoint of a branch launched at a given budget estimates
the position of the river at that budget. Comparing the endpoints of branches
launched at several budgets to a candidate trajectory thus determines whether
that trajectory tracks the river: endpoints that coincide with the trajectory
indicate that it already lies on the floor, while endpoints that fall below it
reveal residual oscillation that a decay phase would have removed. We run
this comparison in both regimes---branches at $50\%/75\%/100\%$ of the
pretraining budget, and at three budgets and two learning rates in
fine-tuning---with all runs sharing data, batch size, and base learning rate.

\paragraph{Pretraining.}
In \Cref{fig:appendix_pretrain_river}, we overlay the AdamW decay branches on
the \pace\ constant-learning-rate trajectory. Every branch lands on the
trajectory: the endpoints ($3.721/3.597/3.530$) match \pace\ at the same budgets
($\approx 3.707/3.595/3.534$) to within $0.014$, and to within $0.004$ at the
full budget---closer than the seed-to-seed spread. At every budget, decaying the
learning rate only recovers the loss that \pace\ already reaches without a
schedule.

\begin{figure}[t]
  \centering
  \includegraphics[width=0.676\textwidth]{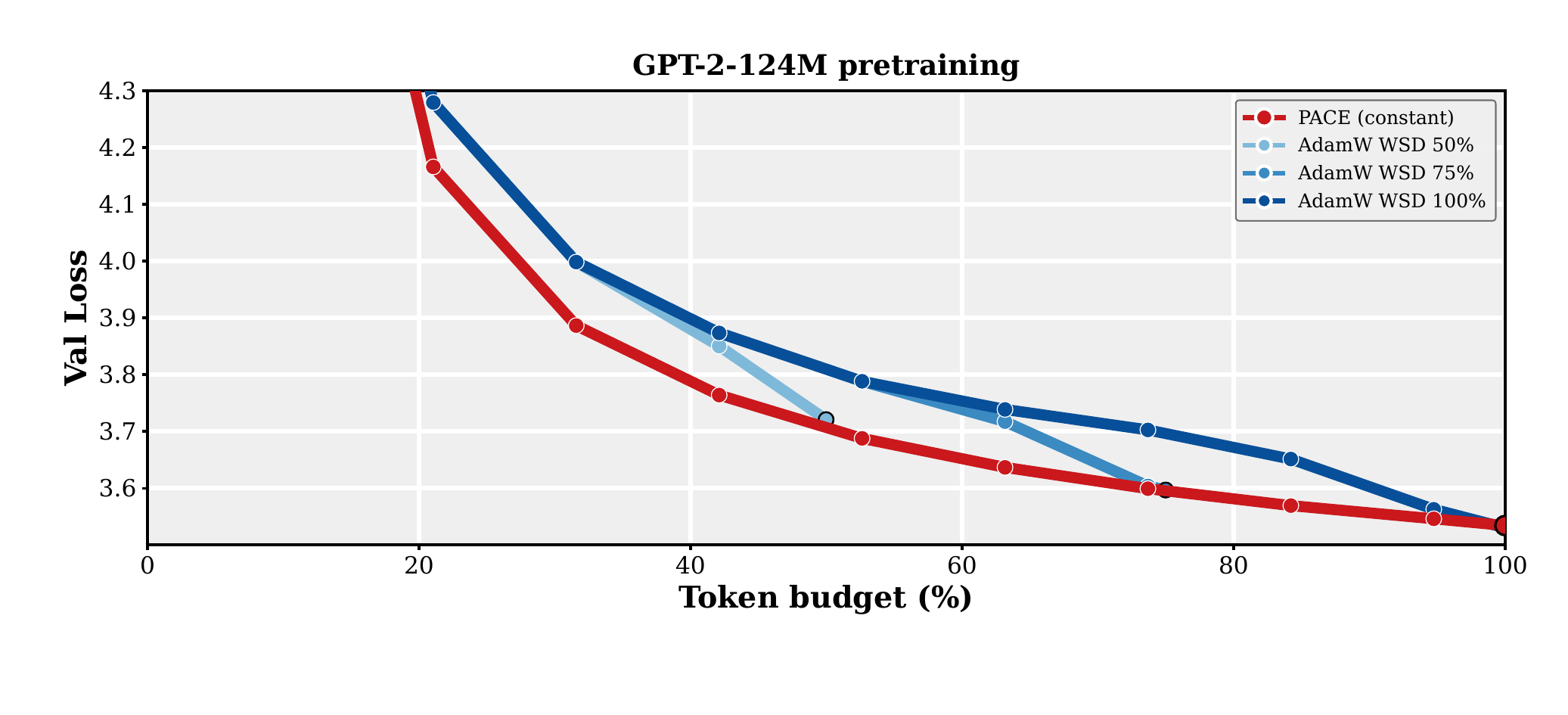}
  \caption{\textbf{Token-budget comparison on GPT-2-124M pretraining.}
  AdamW WSD decay branches at $50\%/75\%/100\%$ of the token budget overlaid on
  the \pace\ constant-learning-rate trajectory; each branch endpoint lands on
  the \pace\ trajectory, so the decay phase only recovers the loss that \pace\
  attains continuously without a schedule.}
  \label{fig:appendix_pretrain_river}
\end{figure}

\paragraph{Fine-tuning.}
The fine-tuning comparison in \Cref{fig:bema_sf} (right) yields a stronger
conclusion: every budget-matched branch ends above the \pace\ trajectory, with
endpoints of $1.277/1.233/1.205$ against \pace\ at $1.153$, and the gap widens
at larger learning rates. In this regime, no decay branch reaches the loss of
the constant-learning-rate \pace\ run at any budget.

\paragraph{Peeling off intermediate checkpoints to decay.}
Across every budget, learning rate, and regime that we tested, no decay
branch ends below the \pace\ trajectory, which is precisely the behavior this
comparison associates with a trajectory that already lies on the river floor.
The practical consequences are those identified above: the validation curve
of \pace\ reflects post-decay performance at every step, so that model
quality is visible throughout training, and the loss that WSD attains only at
the designated endpoint of a branch is available at every budget, without
fixing a training horizon in advance. In fine-tuning, we find the stronger
conclusion that the \pace\ trajectory lies below every budget-matched branch
endpoint, suggesting that the modified trajectory makes faster progress along
the river than the trajectory from which the branches descend.

\FloatBarrier
\section{Technical Preliminaries}

In this appendix, we review some technical preliminaries that will be useful for our analysis.  In \Cref{app:stoch_control}, we review some classical results from the theory of linear quadratic stochastic control, which we will use to derive our algorithm. Then in \Cref{app:ar_processes} we describe some classical results on autoregressive processes, which we will use to analyze the convergence of our algorithm in the quadratic optimization setting.

\subsection{Linear Quadratic Gaussian (LQG) Problems}\label{app:stoch_control}

In this section, we briefly review several classical results from the theory of linear quadratic stochastic control, which will be useful for our analysis.  Classical references for this material include \citet{kwakernaak1972linear,yong1999stochastic} among many others.

A \emph{Linear Quadratic Gaussian} (LQG) control problem is a stochastic control problem where the dynamics of the system are linear, the cost function is quadratic, and the noise is Gaussian.  More particularly, we suppose that $x_t^u \in \rr^d$ is a stochastic process satisfying
\begin{align}\label{eq:lqg-dynamics}
    d x_t^u = \left( \bA x_t^u + \bB u_t \right) d t + \bC W_t,
\end{align}
where $\bA, \bB, \bC$ are matrices of appropriate dimensions, $u_t$ is a control input (and thus progressively measurable with respect to the filtration generated by $W_t$), and $W_t$ is a standard Brownian motion in $\rr^d$.  For fixed time horizon $T$, we then seek to minimize a cost function of the form
\begin{align}\label{eq:lqg-cost}
    J(u) = \ee\left[ x_T^{u \top} \bQ_f x_T^u + \int_0^T \left( x_t^{u \top} \bQ x_t^u + u_t^\top \bR u_t \right) d t \right],
\end{align}
where $\bQ_f, \bQ, \bR$ are positive semidefinite matrices of appropriate dimensions.  The intuition for this cost is that we wish to minimize the expected deviation of the \emph{state} $x_t^u$ from the origin, while also ensuring that the control inputs $u_t$ are not too large; we pay both a \emph{final} cost for the end state $x_T^u$ being far from the origin as well as a \emph{running} cost for both intermediate states being far from the origin and control inputs being large.  This problem is classical and has been well-studied in the control theory literature.  We recall the fundamental result that the optimal control strategy is given by a linear state feedback controller, where the feedback gain is given by the solution to a matrix Riccati differential equation \citep{georgiou2013separation,kwakernaak1972linear}.
\begin{theorem}[\cite{yong1999stochastic} Theorem 6.1]\label{thm:lqg}
    Suppose that $x_t^u$ evolves according to \eqref{eq:lqg-dynamics} for some progressively measurable $u_t$.  Suppose further that $\bQ, \bQ_f, \bR$ are all symmetric and positive semidefinite and that $\bR$ is positive definite.  Let $\bP: [0, T] \to \rr^{d \times d}$ be the solution to the matrix Riccati differential equation
    \begin{align}\label{eq:riccati_general}
        - \bPdot_t = \bA^\top \bP_t + \bP_t \bA - \bP_t \bB \bR^{-1} \bB^\top \bP_t + \bQ, \quad \bP_T = \bQ_f,
    \end{align}
    where $\bPdot_t$ denotes the time derivative of $\bP_t$.  Then it holds that the optimal control $u_t$ that minimizes the cost function $J(u)$ defined in \eqref{eq:lqg-cost} is given by
    \begin{align}\label{eq:lqg-optimal-control}
        u_t = - \bR^{-1} \bB^\top \bP_t x_t^u
    \end{align}
    and the optimal cost is given by
    \begin{align}\label{eq:lqg-optimal-cost}
        J(u) = x_0^\top \bP_0 x_0 + \int_0^T \trace\left( \bP_t \bC \bC^\top \right) d t.
    \end{align}
\end{theorem}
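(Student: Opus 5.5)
The plan is the classical completion-of-squares (verification) argument: show that for \emph{every} admissible control the cost splits into a control-independent part plus a nonnegative quadratic penalty that vanishes exactly at the feedback law \eqref{eq:lqg-optimal-control}. Throughout I read the noise term in \eqref{eq:lqg-dynamics} as $\bC\, dW_t$. I restrict to controls with $\ee\int_0^T \norm{u_t}^2 dt < \infty$, since otherwise $J(u)=\infty$ because $\bR$ is positive definite.

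\textbf{Step 1 (Riccati solution).} First I would establish that \eqref{eq:riccati_general} has a unique symmetric solution $\bP_t$ on all of $[0,T]$, with $\bP_t \succeq 0$. The right-hand side is locally Lipschitz, so a local solution backward from $t=T$ exists, and it is symmetric by uniqueness, since the transpose solves the same equation. The issue is ruling out finite-time blow-up, which I expect to be the main obstacle.
\begin{itemize}
\item On any interval $[s,T]$ where $\bP$ exists, the completion-of-squares identity of Steps 2--3 (run deterministically, with $\bC=0$, from an arbitrary initial point $x_s$) shows that $x_s^\top \bP_s x_s$ is the optimal cost from $(s,x_s)$.
\item This gives the sandwich $0 \preceq \bP_s \preceq \bM_s$, where $x_s^\top \bM_s x_s$ is the cost of the zero control. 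This cost is finite and bounded uniformly in $s$, since it involves only $e^{\bA(t-s)}$.
\item A uniform bound on $\bP$ prevents blow-up, so the local solution extends to all of $[0,T]$.
\end{itemize}

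\textbf{Step 2 (Itô expansion).} Fix an admissible $u$. Applying Itô's formula to $x_t^{u\top}\bP_t x_t^u$ gives
\begin{align}
d\bigl(x_t^\top \bP_t x_t\bigr) = \bigl[x_t^\top \bPdot_t x_t + 2x_t^\top \bP_t(\bA x_t + \bB u_t) + \trace(\bP_t \bC\bC^\top)\bigr]dt + 2x_t^\top \bP_t \bC\, dW_t .
\end{align}
I then integrate from $0$ to $T$ and take expectations. The stochastic integral has mean zero: linear SDEs with square-integrable $u$ have $\ee\sup_t \norm{x_t}^2<\infty$, and $\bP$ is bounded, so the integral is a true martingale. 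Using $\bP_T=\bQ_f$, this expresses $\ee[x_T^\top \bQ_f x_T]$ in terms of $x_0^\top \bP_0 x_0$ and a time integral.

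\textbf{Step 3 (completing the square).} I add the running cost $\ee\int_0^T (x_t^\top \bQ x_t + u_t^\top \bR u_t)\,dt$ to the identity from Step 2 and substitute the Riccati equation for $\bPdot_t$. All terms quadratic in $x_t$ then combine into
\begin{align}
J(u) = x_0^\top \bP_0 x_0 + \int_0^T \trace(\bP_t\bC\bC^\top)\,dt + \ee\int_0^T (u_t + \bR^{-1}\bB^\top\bP_t x_t)^\top \bR\,(u_t + \bR^{-1}\bB^\top \bP_t x_t)\,dt .
\end{align}
Since $\bR\succ 0$, the last term is nonnegative and vanishes if and only if $u_t = -\bR^{-1}\bB^\top \bP_t x_t$ almost everywhere.

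\textbf{Step 4 (admissibility and conclusion).} Finally I check that the feedback law is admissible. The closed-loop equation $dx_t = (\bA - \bB\bR^{-1}\bB^\top\bP_t)x_t\,dt + \bC\,dW_t$ is a linear SDE with bounded continuous coefficients, so it has a unique strong solution with finite second moments. The resulting control is therefore progressively measurable and square-integrable. Hence it is optimal, and the cost formula \eqref{eq:lqg-optimal-cost} follows from Step 3 with the penalty term equal to zero.
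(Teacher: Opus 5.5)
Your completion-of-squares verification argument is correct, and it is the standard proof behind the result the paper cites from Yong and Zhou; the paper itself gives no argument beyond that citation. Your Step 1 proves global existence of $\bP$ via the sandwich $0 \preceq \bP_s \preceq \bM_s$ against the zero-control cost; this goes beyond what the statement needs, since the statement presupposes the Riccati solution exists, but the argument is sound.
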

While the Riccati equation provides the optimal controller in full generality, we will not require this in the sequel.  We thus state two corollaries, corresponding to special cases of the LQG problem, that will be sufficient for our purposes.  In the first, we will ignore any final state cost (so $\bQ_f = 0$), and assume that both $\bQ$ and $\bR$ are scalar multiples of the identity.
\begin{corollary}\label{cor:lqg-scalarcost}
    Suppose that $x_t^u$ evolves according to \eqref{eq:lqg-dynamics} for some progressively measurable $u_t$, and that $\bQ_f = 0$, $\bQ = q \cdot \eye$, and $\bR = r \cdot \eye$ for some $q, r > 0$.  Suppose further that $\bB = \eye$.  Then the optimal control $u_t$ that minimizes the cost function $J(u)$ defined in \eqref{eq:lqg-cost} is given by
    \begin{align}
        u_t = - r^{-1} \bP_t x_t^u, \quad \text{where} \quad - \bPdot_t = \bA^\top P_t + P_t \bA - r^{-1} P_t^2 + q \cdot \eye, \quad P_T = 0.
    \end{align}
\end{corollary}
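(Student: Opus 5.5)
The plan is to obtain \Cref{cor:lqg-scalarcost} by specializing \Cref{thm:lqg} directly; no new argument beyond checking hypotheses and substituting should be needed. First I will verify that the hypotheses of \Cref{thm:lqg} hold for the choice $\bQ_f = 0$, $\bQ = q \cdot \eye$, $\bR = r \cdot \eye$, and $\bB = \eye$:
\begin{itemize}
\item all three cost matrices are symmetric;
\item $\bQ_f = 0$ is trivially positive semidefinite;
\item $\bQ = q\eye$ is positive semidefinite because $q > 0$;
\item $\bR = r\eye$ is positive definite because $r > 0$, and so $\bR^{-1} = r^{-1}\eye$ exists.
\end{itemize}
The dynamics \eqref{eq:lqg-dynamics} with $\bB = \eye$ are of the required form, with $u_t$ taking values in $\rr^d$. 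They are still subject to the same progressive measurability assumption.

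Next I will substitute these matrices into the Riccati equation \eqref{eq:riccati_general}. Since $\bB \bR^{-1} \bB^\top = r^{-1}\eye$, the quadratic term becomes $\bP_t (r^{-1}\eye) \bP_t = r^{-1}\bP_t^2$. The terminal condition $\bP_T = \bQ_f$ becomes $\bP_T = 0$. This yields exactly
\begin{align}
- \bPdot_t = \bA^\top \bP_t + \bP_t \bA - r^{-1} \bP_t^2 + q \cdot \eye, \qquad \bP_T = 0.
\end{align}
Likewise, the feedback law \eqref{eq:lqg-optimal-control} reduces to $u_t = - \bR^{-1}\bB^\top \bP_t x_t^u = - r^{-1}\bP_t x_t^u$, which is the claimed optimal control.

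The only point requiring any care, and the one I expect to be the (mild) obstacle, is that \Cref{thm:lqg} implicitly presupposes that the Riccati equation has a solution on all of $[0,T]$. If this needs justifying, I would invoke the standard fact underlying \Cref{thm:lqg}: with $\bQ \succeq 0$, $\bQ_f \succeq 0$, and $\bR \succ 0$, the backward matrix Riccati equation admits a unique symmetric positive semidefinite solution on $[0,T]$. The argument runs as follows:
\begin{itemize}
\item local existence and uniqueness follow from the locally Lipschitz right-hand side;
\item symmetry is preserved, since the transpose of a solution is also a solution;
\item blow-up is excluded by the two-sided bound $0 \preceq \bP_t$, obtained by comparing with the cost of the zero control;
\item the matching upper bound comes from $x^\top \bP_t x$ being at most the cost of applying $u \equiv 0$ from state $x$ at time $t$, which is finite.
\end{itemize}
With existence in hand, the corollary follows immediately from \Cref{thm:lqg}.
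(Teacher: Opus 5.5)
Your proposal is correct and matches the paper, which presents this corollary as an immediate specialization of \Cref{thm:lqg} (substituting $\bB=\eye$, $\bR=r\eye$, $\bQ=q\eye$, $\bQ_f=0$) with no further argument. Your extra global-existence remark is not needed, since \Cref{thm:lqg} is used as a black box; note also that within it, the lower bound $0 \preceq \bP_t$ comes from nonnegativity of the cost, while comparison with the zero control gives the upper bound.
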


In the second corollary, we will now ignore the \emph{running} state cost (so $\bQ = 0$), and assume that $\bQ_f$ and $\bR$ are scalar multiples of the identity.
\begin{corollary}\label{cor:lqg-finalcost}
    Suppose that $x_t^u$ evolves according to \eqref{eq:lqg-dynamics} for some progressively measurable $u_t$, and that $\bQ = 0$, $\bQ_f = q \cdot \eye$, and $\bR = r \cdot \eye$ for some $q, r > 0$.  Suppose further that $\bB = \eye$.  Then the optimal control $u_t$ that minimizes the cost function $J(u)$ defined in \eqref{eq:lqg-cost} is given by
    \begin{align}
        u_t = - r^{-1} \bP_t x_t^u, \quad \text{where} \quad  \bPdot_t = \bA^\top P_t + P_t \bA - r^{-1} P_t^2, \quad P_T = q \cdot \eye.
    \end{align}
\end{corollary}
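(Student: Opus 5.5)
The plan is to obtain this corollary as a direct specialization of \Cref{thm:lqg}, just as \Cref{cor:lqg-scalarcost} is. I first check the hypotheses of \Cref{thm:lqg}. The matrices $\bQ = 0$, $\bQ_f = q \cdot \eye$ and $\bR = r \cdot \eye$ are symmetric and positive semidefinite, and $\bR$ is positive definite because $r > 0$. With $\bB = \eye$ we have $\bR^{-1} \bB^\top = r^{-1} \eye$ and $\bB \bR^{-1} \bB^\top = r^{-1} \eye$. Substituting these into \eqref{eq:lqg-optimal-control} gives $u_t = - r^{-1} \bP_t x_t^u$ immediately. Substituting into \eqref{eq:riccati_general} gives the Riccati equation
\begin{align}
- \bPdot_t = \bA^\top \bP_t + \bP_t \bA - r^{-1} \bP_t^2, \qquad \bP_T = q \cdot \eye.
\end{align}
This is the displayed equation under the same backward-in-time sign convention as \eqref{eq:riccati_general} and \Cref{cor:lqg-scalarcost}, so I will read the statement with that convention. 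The only change from the general equation is that the constant forcing term $\bQ$ disappears.

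The one substantive point is that \Cref{thm:lqg} presupposes a solution $\bP$ on all of $[0,T]$, and a quadratic Riccati equation could in principle blow up. I would rule this out by linearizing through the inverse. Since $\bP_T = q\eye$ is invertible, set $\bM_t = \bP_t^{-1}$ wherever it exists. Using $\dot{\bM}_t = -\bM_t \bPdot_t \bM_t$, the Riccati equation becomes the linear Lyapunov-type equation
\begin{align}
\dot{\bM}_t = \bM_t \bA^\top + \bA \bM_t - r^{-1} \eye, \qquad \bM_T = q^{-1} \eye.
\end{align}
Its explicit solution is
\begin{align}
\bM_t = q^{-1} e^{-\bA (T-t)} e^{-\bA^\top (T-t)} + r^{-1} \int_t^T e^{-\bA (s-t)} e^{-\bA^\top (s-t)} \, ds .
\end{align}
Both terms are positive semidefinite and the first is positive definite, so $\bM_t$ is symmetric positive definite for every $t \in [0,T]$. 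Therefore $\bP_t = \bM_t^{-1}$ is well defined, positive definite and smooth on $[0,T]$. Reversing the computation shows that this $\bP_t$ solves the Riccati equation, and uniqueness follows from local Lipschitzness of the right-hand side.

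With existence and uniqueness of $\bP$ in hand, \Cref{thm:lqg} applies verbatim and yields both the feedback law and, from \eqref{eq:lqg-optimal-cost}, the optimal cost $x_0^\top \bP_0 x_0 + \int_0^T \trace(\bP_t \bC \bC^\top)\,dt$. I expect the main obstacle to be this well-posedness step rather than the substitution. What makes it tractable is the terminal condition $\bP_T = q\eye$, which allows the inversion trick; the explicit form of $\bM_t$ will also be convenient later when the scalar, diagonal-$\bA$ case is solved in closed form for \Cref{thm:optimal_control_solution}.
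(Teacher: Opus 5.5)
Your proposal is correct and follows the paper's route. The paper gives no separate proof and simply reads this corollary off \Cref{thm:lqg} with $\bQ = 0$, $\bB = \eye$, $\bR = r\eye$. Your $\bM_t = \bP_t^{-1}$ argument for global existence is valid but not needed, since solvability on $[0,T]$ for positive semidefinite $\bQ$, $\bQ_f$ and positive definite $\bR$ is part of the standard result behind \Cref{thm:lqg}.

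You are right to insert the minus sign, but the printed equation has a typo, not a different convention. Read literally with $\bA = 0$, it gives $1/p(t) = 1/q - (T-t)/r$, which blows up at $T-t = r/q$. You should drop your closing aside: the augmented Riccati solution behind \Cref{thm:optimal_control_solution} has a singular terminal value and is in fact rank one for every $t$, so the inversion trick does not apply there.
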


Finally, we specialize to the fully scalar case, where $\bA$ is diagonal and prove here a closed-form solution to the Riccati equation.  While this calculation is standard, we include it here for completeness.
\begin{lemma}\label{lem:scalar-riccati-solution}
    Let $p: [0, T] \to \rr$ be the solution to the scalar Riccati equation
    \begin{align}
        - \dot{p}(t) = 2 a p(t) - r^{-1} p(t)^2 + q, \quad p(T) = q_f.
    \end{align}
    Let
    \begin{align}
        \beta_{\pm} = r ( a \pm \omega), \quad \text{where} \quad \omega = \sqrt{a^2 + r^{-1} q}.
    \end{align}
    Let
    \begin{align}
        \kappa(t) = \frac{q_f - \beta_+}{q_f - \beta_-} \cdot e^{- 2 \omega(T - t)}.
    \end{align}
    Then
    \begin{align}\label{eq:scalar-riccati-solution}
        p(t) = \frac{\beta_+ - \kappa(t) \beta_-}{1 - \kappa(t)}.
    \end{align}
\end{lemma}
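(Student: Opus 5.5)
The plan is to reduce the scalar Riccati equation to a separable ODE by factoring its right-hand side, integrate with partial fractions, and then check the boundary condition and well-posedness. Throughout I take $q, r > 0$ and $q_f \ge 0$, which is the regime in which \Cref{cor:lqg-scalarcost,cor:lqg-finalcost} are applied.

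\textbf{Factoring.} The quadratic $2ap - r^{-1}p^2 + q$ in $p$ has roots $p = r\bigl(a \pm \sqrt{a^2 + r^{-1}q}\bigr) = \beta_\pm$. Since its leading coefficient is $-r^{-1}$, it factors as $-r^{-1}(p-\beta_+)(p-\beta_-)$. The equation therefore becomes
\begin{align}
\dot p(t) = r^{-1}\bigl(p(t)-\beta_+\bigr)\bigl(p(t)-\beta_-\bigr).
\end{align}
Because $q, r > 0$, we have $\omega > |a|$, and hence $\beta_- < 0 < \beta_+$ with $\beta_+ - \beta_- = 2r\omega > 0$.

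\textbf{Integrating.} First suppose $p$ stays away from both $\beta_\pm$. Partial fractions give
\begin{align}
\frac{d}{dt}\log\left|\frac{p-\beta_+}{p-\beta_-}\right| = \dot p\left(\frac{1}{p-\beta_+} - \frac{1}{p-\beta_-}\right) = \frac{\dot p\,(\beta_+-\beta_-)}{(p-\beta_+)(p-\beta_-)} = 2\omega.
\end{align}
Hence the ratio $(p(t)-\beta_+)/(p(t)-\beta_-)$ equals a constant times $e^{2\omega t}$. The ratio keeps its sign along the way, because a solution cannot cross an equilibrium $\beta_\pm$ of this autonomous ODE. Imposing $p(T) = q_f$ fixes the constant, so the ratio is exactly $\kappa(t)$. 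Solving the linear relation $p - \beta_+ = \kappa(t)(p - \beta_-)$ for $p$ gives \eqref{eq:scalar-riccati-solution}.

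\textbf{Special cases and the denominator.} Two points need separate care.
\begin{itemize}
\item If $q_f = \beta_+$, then $\kappa \equiv 0$ and the formula gives the equilibrium $p \equiv \beta_+$, which is the correct solution. The case $q_f = \beta_-$ cannot occur, because $q_f \ge 0 > \beta_-$.
\item The formula requires $1 - \kappa(t) \neq 0$ on $[0,T]$. Since $q_f - \beta_- > 0$ and $q_f - \beta_+ < q_f - \beta_-$, we get $\kappa(T) < 1$. For $t < T$ the factor $e^{-2\omega(T-t)} < 1$ only shrinks $|\kappa|$. So either $\kappa \le 0$, or $0 < \kappa < 1$ throughout $[0,T]$.
\end{itemize}

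\textbf{Rigour.} To avoid the division-by-zero caveats in the integration step, the cleanest route is to verify directly. I would differentiate the explicit formula, using $\dot\kappa = 2\omega\kappa$, and check that it satisfies the ODE. It also satisfies $p(T) = q_f$, since $\kappa(T) = (q_f-\beta_+)/(q_f-\beta_-)$. The right-hand side of the ODE is locally Lipschitz, so the solution is unique, and the explicit formula is therefore the solution.

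The main obstacle is only bookkeeping: the sign and denominator analysis that rules out $1 - \kappa(t) = 0$. The hypotheses $q, r > 0$ and $q_f \ge 0$ handle this.
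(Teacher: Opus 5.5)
Your proof is correct and follows the paper's route: you factor the right-hand side as $r^{-1}(p-\beta_+)(p-\beta_-)$, separate variables to get $(p-\beta_+)/(p-\beta_-)=\kappa(t)$, and solve for $p$. Your extra bookkeeping (the equilibrium case $q_f=\beta_+$, the check that $1-\kappa(t)>0$, and direct verification plus uniqueness) fills in details that the paper's proof leaves implicit; the paper only says ``when $\omega>0$''. Your hypotheses $q,r>0$ and $q_f\ge 0$ are stronger than needed, but they cover the lemma's only use, in \Cref{thm:optimal_control_relaxed} ($r=\lambda$, $q=T^{-1}$, $q_f=0$). One small correction: contrary to your remark, they do not describe the \Cref{cor:lqg-finalcost} setting, where the running-cost coefficient is zero.
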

\begin{proof}
    Note that by factoring we have
    \begin{align}
        \dot{p}(t) = r^{-1} \left( p(t) - \beta_+ \right)\left( p(t) - \beta_- \right).
    \end{align}
    Thus, when $\omega > 0$, we have by a standard separation of variables argument that
    \begin{align}
        \frac{p(t) - \beta_+}{p(t) - \beta_-} = \frac{q_f - \beta_+}{q_f - \beta_-} \cdot e^{- 2 \omega(T - t)}.
    \end{align}
    The result follows by rearranging.
\end{proof}

\subsection{Autoregressive Processes and the Yule-Walker Equations}\label{app:ar_processes}

We now review some classical results on autoregressive processes, which we will use to analyze the convergence of our algorithm in the quadratic optimization setting.  We refer to \citep{shumway2006time,brockwell2002introduction} for a more detailed review of this material.  We begin with a definition of autoregressive processes.
\begin{definition}\label{def:ar_process}
    We say a sequence of random variables $\{X_k\}_{k \geq 0}$ is an autoregressive process of order $p$ (AR($p$)) if it satisfies, for $k \geq p$,
    \begin{align}
        X_k = \sum_{i=1}^p \phi_i X_{k-i} + \epsilon_k,
    \end{align}
    for some $\phi_1, \dots, \phi_p \in \rr$ and some sequence of i.i.d. random variables $\{\epsilon_k\}_{k \geq 0}$ with mean zero and finite variance.  The \emph{characteristic polynomial} of the AR($p$) process is given by
    \begin{align}
        \phi(z) = 1 - \sum_{i=1}^p \phi_i z^i.
    \end{align}
    Moreover, we say that the process is \emph{causal} if it can be represented as
    \begin{align}
        X_k = \sum_{j = 0}^\infty \psi_j \epsilon_{k-j},
    \end{align}
    with the coefficients $\psi_j$ satisfying $\sum_{j=0}^\infty |\psi_j| < \infty$.
\end{definition}
Formally causality is necessary for us to ensure a limiting variance; fortunately there exists a simple characterization of causality in terms of the roots of the characteristic polynomial.
\begin{proposition}[Property 3.1 \cite{shumway2006time}]\label{prop:causality}
    An AR($p$) process is causal if and only if the roots of the characteristic polynomial $\phi(z)$ lie outside the unit circle in the complex plane.  In this case, $X_k$ has a well-defined limiting variance.
\end{proposition}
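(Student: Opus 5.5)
We treat the two directions separately. Both rest on the backshift operator $B$ (with $B X_k = X_{k-1}$) and the identity $\phi(B) X_k = \epsilon_k$. The guiding observation is that causality should correspond to $1/\phi(z)$ having an absolutely summable power series on the closed unit disk. Write $\sigma_\epsilon^2 > 0$ for the common variance of the $\epsilon_k$.

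\emph{Sufficiency.} Suppose all roots $r_1, \dots, r_p$ of $\phi$ satisfy $|r_j| > 1$. Factor
\begin{align}
    \phi(z) = \prod_{j=1}^p \left(1 - \frac{z}{r_j}\right).
\end{align}
Set $\rho = \min_j |r_j| > 1$. Then $1/\phi$ is analytic on the disk $\{|z| < \rho\}$, so it has a power series $\psi(z) = \sum_{j \ge 0} \psi_j z^j$ with radius of convergence at least $\rho$. Alternatively, one can expand each factor $(1 - z/r_j)^{-1}$ as a geometric series and take the Cauchy product. Either way, Cauchy's estimates give $|\psi_j| \le C (\rho')^{-j}$ for any $1 < \rho' < \rho$, so $\sum_j |\psi_j| < \infty$.

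Next define $\tilde X_k = \sum_{j \ge 0} \psi_j \epsilon_{k-j}$. This series converges in $L^2$ and almost surely, because the coefficients are absolutely summable and the $\epsilon$'s have finite variance. Since $\phi(z)\psi(z) = 1$ as formal power series, the coefficients satisfy $\psi_0 = 1$ and $\psi_m - \sum_{i=1}^{\min(m,p)} \phi_i \psi_{m-i} = 0$ for $m \ge 1$. Substituting these relations shows that $\tilde X_k$ satisfies the AR($p$) recursion, which gives the causal representation.

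For the limiting variance, the process in the definition is started at $k = 0$ from some initial values. Let $D_k = X_k - \tilde X_k$. Then $D_k$ satisfies the homogeneous recursion $\phi(B) D_k = 0$, so it is a linear combination of terms $k^m r_j^{-k}$, which decay geometrically. Hence $\mathrm{Var}(X_k) \to \sigma_\epsilon^2 \sum_j \psi_j^2 < \infty$.

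\emph{Necessity.} Suppose the process is causal, $X_k = \sum_j \psi_j \epsilon_{k-j}$ with $\sum_j |\psi_j| < \infty$. Apply $\phi(B)$ to this representation to get $\epsilon_k = \sum_{m \ge 0} c_m \epsilon_{k-m}$, where $c_m$ are the coefficients of $\phi(z)\psi(z)$. Take the covariance of both sides with $\epsilon_{k-m}$. Since the $\epsilon$'s are i.i.d.\ with $\sigma_\epsilon^2 > 0$, this yields $c_0 = 1$ and $c_m = 0$ for $m \ge 1$. So $\phi(z)\psi(z) = 1$ as power series.

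Because $\sum_j |\psi_j| < \infty$, the series $\psi$ converges absolutely and continuously on the closed disk $|z| \le 1$. The identity $\phi \psi = 1$ therefore holds pointwise there. This is impossible at any root $r$ of $\phi$ with $|r| \le 1$, since it would force $0 = 1$. Hence every root lies outside the unit circle.

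The main obstacle is the bookkeeping that identifies coefficients in necessity: one must justify applying $\phi(B)$ to an infinite series and matching coefficients. I would handle it via the $L^2$ covariance argument above rather than through formal operator algebra. A secondary point is reconciling the definition, which starts at $k = 0$, with the two-sided stationary solution. The decaying homogeneous-solution argument used for the limiting variance handles this.
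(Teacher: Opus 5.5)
Your proof is correct. The paper gives no argument of its own here; it defers entirely to Property~3.1 of \citet{shumway2006time}. Your necessity direction is exactly the standard textbook proof behind that citation: you match coefficients via covariances with $\epsilon_{k-m}$ to get $\phi\psi=1$, then evaluate at a root in the closed disk.

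You genuinely differ in the sufficiency direction. In the textbook's two-sided stationary setting, one applies the absolutely summable filter $\psi(B)$ to $\phi(B)X_k=\epsilon_k$ and reads off $X_k=\sum_j\psi_j\epsilon_{k-j}$ for the given process, whose variance is then constant by stationarity. You instead construct a causal solution $\tilde X_k$ and show that $D_k=X_k-\tilde X_k$ solves the homogeneous recursion and decays geometrically. This is the same device the paper itself uses in the proof of Lemma~\ref{lem:ar2-limiting-variance} (the process $Y_k$ there).

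Your route fits the paper's one-sided definition, where $X$ is started from initial values. In that setting exact causality of $X$ can actually fail: a nonzero deterministic $X_0$ gives $\ee[X_k]\neq 0$. So the correct content of the statement is that $X$ converges in $L^2$ to the causal solution and has limiting variance $\sigma_\epsilon^2\sum_j\psi_j^2$. This is also the form used downstream. The citation buys brevity but leaves this one-sided/two-sided mismatch implicit, and the limiting-variance clause is automatic, not substantive, in the cited stationary setting.

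A few small points should be made explicit.
\begin{itemize}
\item Your $\tilde X_k$ uses $\epsilon_n$ with $n<0$, which the paper's definition does not provide. Either adjoin an independent i.i.d.\ copy for negative indices, or set those terms to zero. In the latter case $\tilde X_k=\sum_{j\le k}\psi_j\epsilon_{k-j}$ still satisfies the recursion and $\mathrm{Var}(\tilde X_k)\to\sigma_\epsilon^2\sum_j\psi_j^2$.
\item The initial values must lie in $L^2$ for $D_k\to 0$ in $L^2$; they are deterministic in Lemma~\ref{lem:ar2-limiting-variance}.
\item Under a two-sided stationary reading, your sufficiency argument only produces \emph{a} causal solution. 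You would need one more line to identify it with the given $X$. Note that $(D_k,\dots,D_{k-p+1})$ is obtained from $(D_{k-n},\dots,D_{k-n-p+1})$ by the $n$-th power of the companion matrix, whose norm tends to $0$, while $D$ is $L^2$-bounded on all of $\mathbb{Z}$. Letting $n\to\infty$ forces $D\equiv 0$.
\item Your assumption $\sigma_\epsilon^2>0$ is genuinely needed for necessity. The paper's definition permits degenerate noise, for which $X\equiv 0$ is trivially causal regardless of the roots.
\end{itemize}
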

We now recall the Yule-Walker equations, which relate the coefficients $\phi_i$ of an AR($p$) process to the autocovariance function $\gamma(j) = \ee[X_k X_{k+j}]$.
\begin{theorem}[Section 3 \cite{shumway2006time}]\label{thm:yule-walker}
    Let $\{X_k\}_{k \geq 0}$ be a causal AR($p$) process with coefficients $\phi_1, \dots, \phi_p$ such that $\epsilon_k$ has variance $\sigma^2$. Then the autocovariance function $\gamma(j) = \ee[X_k X_{k+j}]$ satisfies the following equations for all $j \geq 0$:
    \begin{align}
        \gamma(j) = \sum_{i=1}^p \phi_i \gamma(j - i) + \sigma^2 \cdot \delta_{j, 0},
    \end{align}
    In particular, if $p = 2$, it holds that
    \begin{align}
        \gamma(1) &= \phi_1 \cdot \gamma(0) \\
        \gamma(2) &= \phi_1 \cdot \gamma(1) + \phi_2 \cdot \gamma(0) \\
        \gamma(0) &= \phi_1 \cdot \gamma(1) + \phi_2 \cdot \gamma(2) + \sigma^2.
    \end{align}
\end{theorem}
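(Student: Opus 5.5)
The plan is to work entirely from the causal representation $X_k = \sum_{j\ge 0}\psi_j \epsilon_{k-j}$, which \Cref{prop:causality} guarantees when the roots of $\phi(z)$ lie outside the unit circle. This representation gives two things. First, $\gamma(j)=\ee[X_kX_{k+j}] = \sigma^2\sum_{m\ge 0}\psi_m\psi_{m+j}$ depends only on the lag $j$, so $\gamma$ is a well-defined even function of $j$, i.e.\ $\gamma(-i)=\gamma(i)$. Second, $X_{k-j}$ is measurable with respect to $\{\epsilon_{k-j-m}\}_{m\ge 0}$.

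Before using this, I would justify the interchanges of expectation with infinite sums. Since $\sum_j|\psi_j|<\infty$, the coefficients are square summable, so the series converges in $L^2$. Bilinearity of $\ee[\cdot\,\cdot]$ under $L^2$ limits then licenses every expectation of a product of two such series. I would also record that $\psi_0 = 1$. This follows by substituting the causal representation into the recursion and matching the coefficient of $\epsilon_k$: the right-hand side contributes $\epsilon_k$ only through the innovation term, since each $X_{k-i}$ with $i\ge 1$ involves only $\epsilon_{k-i-m}$.

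The main step is to multiply the defining relation $X_{k+j} = \sum_{i=1}^p\phi_i X_{k+j-i} + \epsilon_{k+j}$ by $X_k$ for a fixed $j\ge 0$ and take expectations. The left-hand side is $\gamma(j)$. Each autoregressive term gives $\phi_i\ee[X_kX_{k+j-i}]=\phi_i\gamma(j-i)$, using evenness when $j-i<0$. For the innovation term there are two cases:
\begin{itemize}
\item if $j\ge1$, then $\epsilon_{k+j}$ is independent of, and hence uncorrelated with, $X_k$ because it does not appear in $X_k$'s representation, so $\ee[\epsilon_{k+j}X_k]=0$;
\item if $j=0$, then $\ee[\epsilon_k X_k]=\sum_m\psi_m\ee[\epsilon_k\epsilon_{k-m}]=\psi_0\sigma^2=\sigma^2$.
\end{itemize}
Together these give $\gamma(j)=\sum_i\phi_i\gamma(j-i)+\sigma^2\delta_{j,0}$.

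For $p=2$ I would simply read off $j=0,1,2$. One caution: the general formula at $j=1$ gives $\gamma(1)=\phi_1\gamma(0)+\phi_2\gamma(-1)=\phi_1\gamma(0)+\phi_2\gamma(1)$. So the displayed first equation should be read as this identity, equivalently $\gamma(1)=\phi_1\gamma(0)/(1-\phi_2)$; I would state it in that corrected form when specializing. The $j=2$ and $j=0$ lines follow directly, the latter using $\gamma(-1)=\gamma(1)$ and $\gamma(-2)=\gamma(2)$.

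I expect the only real obstacle to be bookkeeping rather than ideas. It consists of the $L^2$ justification for the interchanges and the stationarity subtlety. The process as defined starts at $k=0$, so strictly it is stationary only in the limit, or when it is initialized from the causal representation. I would handle this by working with the stationary causal version, which is the object whose limiting variance \Cref{prop:causality} refers to.
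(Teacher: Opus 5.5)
Your argument is correct and is the standard one from the Shumway--Stoffer reference that the paper cites in place of a proof: multiply the recursion at index $k+j$ by $X_k$, take expectations, and use the causal representation to get $\ee[\epsilon_{k+j}X_k]=\sigma^2\delta_{j,0}$, working with the stationary two-sided version to resolve the indexing issue in the paper's definition. You are also right that the displayed line $\gamma(1)=\phi_1\gamma(0)$ is a typo (for instance, with $\phi_1=0.5$ and $\phi_2=0.3$ the process is causal but $\gamma(1)/\gamma(0)=\phi_1/(1-\phi_2)\neq\phi_1$); the correct identity $\gamma(1)=\phi_1\gamma(0)+\phi_2\gamma(1)$ is exactly the one the paper uses in the proof of \Cref{prop:improved_convergence_quadratic_tight}, so nothing downstream is affected.
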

Finally, we recall the following result on the limiting variance of an AR($2$) process.
\begin{lemma}\label{lem:ar2-limiting-variance}
    Let $\{X_k\}_{k \geq 0}$ be a causal AR($2$) process with coefficients $\phi_1, \phi_2$ such that $\epsilon_k$ has variance $\sigma^2$ with fixed, deterministic $X_1, X_2$.  If $\phi(z) \neq 0$ for all complex $|z| \leq 1$, then the limiting variance of $X_k$ is given by
    \begin{align}
        \lim_{k \to \infty} \ee[X_k^2] = \gamma(0).
    \end{align}
\end{lemma}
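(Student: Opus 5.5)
The plan is to split $X_k$ into a deterministic transient, driven by the initial conditions, and a stochastic part driven only by the innovations. I will then show that the transient vanishes and that the second moment of the stochastic part converges to the stationary variance $\gamma(0)$ from \Cref{thm:yule-walker}.

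\textbf{Step 1: decomposition.} Define $h_k$ by the homogeneous recursion $h_k = \phi_1 h_{k-1} + \phi_2 h_{k-2}$ for $k \geq 3$, with $h_1 = X_1$ and $h_2 = X_2$. Set $Y_k = X_k - h_k$. By linearity, $Y_k = \phi_1 Y_{k-1} + \phi_2 Y_{k-2} + \epsilon_k$ with $Y_1 = Y_2 = 0$. Unrolling gives
\begin{align}
Y_k = \sum_{j=0}^{k-3} \psi_j \epsilon_{k-j},
\end{align}
where $\psi_0 = 1$, $\psi_1 = \phi_1$, and $\psi_j = \phi_1 \psi_{j-1} + \phi_2 \psi_{j-2}$ for $j \geq 2$. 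I will check this by induction on $k$, matching the coefficient of each $\epsilon_{k-j}$. This recursion is exactly the one satisfied by the power-series coefficients of $1/\phi(z)$. Hence these $\psi_j$ coincide with the causal MA$(\infty)$ coefficients of \Cref{def:ar_process}.

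\textbf{Step 2: decay.} Write $1/z_1$ and $1/z_2$ for the roots of $1 - \phi_1 w^{-1} - \phi_2 w^{-2}$, i.e.\ of $w^2 - \phi_1 w - \phi_2$. Since $\phi(z) \neq 0$ for $|z| \leq 1$, both roots $z_1, z_2$ of $\phi$ satisfy $|z_i| > 1$. Both $h_k$ and $\psi_j$ solve the same linear recurrence. Each is therefore of the form $c_1 z_1^{-k} + c_2 z_2^{-k}$, or $(c_1 + c_2 k) z_1^{-k}$ in the repeated-root case. Consequently $|h_k| \lesssim k \rho^k$ and $|\psi_j| \lesssim j \rho^j$ for some $\rho < 1$. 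In particular $h_k \to 0$ and $\sum_j \psi_j^2 < \infty$.

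\textbf{Step 3: second moment.} Because $h_k$ is deterministic and the $\epsilon$'s have mean zero, the cross term vanishes. Independence of the $\epsilon$'s then gives
\begin{align}
\ee[X_k^2] = h_k^2 + \sigma^2 \sum_{j=0}^{k-3} \psi_j^2 \;\longrightarrow\; \sigma^2 \sum_{j=0}^\infty \psi_j^2 .
\end{align}

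\textbf{Step 4: identification with $\gamma(0)$.} It remains to show that this limit equals $\gamma(0)$. Consider the stationary causal process $\tilde X_k = \sum_{j \geq 0} \psi_j \epsilon_{k-j}$, extending the noise to all indices. It has variance $\sigma^2 \sum_j \psi_j^2$, and its autocovariances satisfy the Yule--Walker equations of \Cref{thm:yule-walker}. For $p = 2$ these are three linear equations in $\gamma(0), \gamma(1), \gamma(2)$. I expect the main (mild) obstacle to be showing that this system has a unique solution under causality, so that $\gamma(0)$ is unambiguous. Eliminating $\gamma(1)$ and $\gamma(2)$ gives
\begin{align}
\gamma(0) = \frac{(1 - \phi_2)\sigma^2}{(1 + \phi_2)\left[(1 - \phi_2)^2 - \phi_1^2\right]}.
\end{align}
The denominator is nonzero exactly under the standard causality conditions $|\phi_2| < 1$ and $\phi_2 \pm \phi_1 < 1$. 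These conditions are equivalent to the root condition on $\phi$ and also give the right signs, so $\gamma(0) > 0$. This pins down $\gamma(0) = \sigma^2 \sum_j \psi_j^2$ and completes the argument.
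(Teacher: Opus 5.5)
Your proof is correct and takes essentially the paper's route. Like the paper, you split $X_k$ into the deterministic solution of the homogeneous recursion with the given initial values, which the paper kills via the companion-matrix eigenvalues and you kill via the equivalent explicit root form, plus the zero-initialized process, whose second moment you explicitly show tends to $\sigma^2\sum_{j\ge 0}\psi_j^2=\gamma(0)$ where the paper merely asserts this ``by causality''. One small overstatement in Step 4: causality does imply that the Yule--Walker determinant $(1+\phi_2)\bigl[(1-\phi_2)^2-\phi_1^2\bigr]$ is positive, which is all you need, but this quantity is not nonzero \emph{exactly} under causality (e.g.\ $\phi_1=0,\ \phi_2=2$ gives a nonzero value for a non-causal process).
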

\begin{proof}
    This follows immediately from Definition 3.7 and Property 3.1 of \citet{shumway2006time}. Indeed, let $X_k'$ denote the AR($2$) process with the same coefficients $\phi_1, \phi_2$ and noise $\epsilon_k$ but with zero initial conditions $X_1' = X_2' = 0$.  By causality, $X_k'$ has a well-defined limiting variance given by $\gamma(0)$.  Thus it suffices to show that $Y_k = X_k - X_k'$ has zero variance in the limit.  Observe that
    \begin{align}
        Y_k = \phi_1 Y_{k-1} + \phi_2 Y_{k-2},
    \end{align}
    so it remains to show that $Y_k \to 0$ as $k \to \infty$.  Writing out the transition matrix, we see that the eigenvalues thereof are the reciprocals of the roots of $\phi(z)$.  Because $\phi(z) \neq 0$ for all $|z| \leq 1$, the eigenvalues of the transition matrix are strictly less than $1$ in magnitude, so $Y_k \to 0$ as $k \to \infty$.
\end{proof}

\section{Mathematical Derivation of \pace}\label{app:stoch_control_derivation}\label{sec:app-stoch-control}

In this appendix we provide a complete formal derivation of our proposed algorithm.  Recall that we abstract the master optimization problem as the following stochastic control problem:
\begin{align}\label{eq:stoch_control_problem}
    \min_{u} J_T(u) = \ee\left[ \norm{\thetema[T] - \mustar}^2 + \lambda \cdot \int_0^T \norm{u_t}^2 d t \right], \quad  d \theta_t^u = \left[ \bA \left( \mustar - \theta_t^u \right) + u_t \right] d t + \bSigma^{1/2} d W_t.
\end{align}
In other words, we are running (a continuous time limit of) stochastic gradient descent on a quadratic objective $f(\theta) = \frac 12 \left(\theta - \mustar  \right)^\top \bA \left( \theta - \mustar\right)$ perturbed by some control input $u_t$, and we wish to choose $u_t$ so as to minimize the mean squared error of the returned iterate $\thetema[T]$ as an estimator of $\mustar$, plus a regularization term ensuring that the control inputs $u_t$ are not too large.  We provide a complete characterization of the optimal control strategy for this problem, and then show how to derive a simple approximation to this optimal control strategy that is both practical and empirically effective.

In \Cref{app:relaxed_objective} we first consider a relaxed objective, which leads to a simpler optimal control strategy that helps illustrate the main ideas of the analysis.  Then, in \Cref{app:original_objective}, we derive the optimal control strategy for the actual objective defined in \eqref{eq:stoch_control_problem}, subject to the assumption that $\mustar$ is known to the learner.  Because this is, of course, an unrealistic assumption, we then show how the principle of \emph{certainty equivalence} can be used to derive the optimal controller even when $\mustar$ is unknown, and how this optimal controller can be approximated to yield a simple and practical algorithm.  Finally, in \Cref{app:unknown_mustar}, we show how the optimal controller can be approximated even when $\mustar$ is unknown, and how this approximation leads to the form of the controller used in our proposed algorithm.

\subsection{Optimal Control for the Relaxed Objective}\label{app:relaxed_objective}

We begin with the simplest variation of the problem, where we assume $\mustar$ is known to the controller and we wish to minimize the relaxed objective
\begin{align}\label{eq:relaxed_objective}
    J_T'(u) = \ee\left[ T^{-1} \cdot\int_0^T \norm{\theta_t^u - \mustar}^2 d t + \lambda \cdot \int_0^T \norm{u_t}^2 d t \right].
\end{align}
While we actually want to minimize the $J_T(u)$ defined in \eqref{eq:stoch_control_problem}, which we will do in the sequel, but observe that (i) $J_T(u) \leq J_T'(u)$ for all $u$ by Jensen's inequality and (ii) minimizing $J_T'(u)$ leads to a more analytically tractable problem.

Recall that $\theta_t^u$ evolves according to the SDE \eqref{eq:sde},
\begin{align}
    d \theta_t^u = \left[ \bA \left( \mustar - \theta_t^u \right) + u_t \right] d t + \bSigma^{1/2} d W_t.
\end{align}
We claim the following result.
\begin{theorem}\label{thm:optimal_control_relaxed}
    Suppose that $\theta_t^u$ evolves according to \eqref{eq:sde} for some progressively measurable $u_t$.  Then the optimal control $u_t$ that minimizes the relaxed objective $J_T'(u)$ defined in \eqref{eq:relaxed_objective} is given by
    \begin{align}\label{eq:optimal_control_relaxed}
        u_t = \frac{\bP_t \left( \mustar - \theta_t^u \right)}{\lambda}, \quad \text{where} \quad \bPdot_t = \bA^\top P_t + P_t \bA + \lambda^{-1} P_t^2 - T^{-1} \eye, \quad \text{and} \quad P_T = 0.
    \end{align}
    In the special case that $\bA = \diag(\alpha_1, \dots, \alpha_d)$ is diagonal, we have that
    \begin{align}\label{eq:optimal_control_relaxed_diagonal}
        u_i(t) = \frac{1}{\lambda T} \cdot \frac{1 - e^{- 2 \sqrt{\alpha_i^2 + \nicefrac{1}{\lambda T}} (T -t)}}{\left( \sqrt{\alpha_i^2 + \nicefrac{1}{\lambda T}} + \alpha_i \right) + \left( \sqrt{\alpha_i^2 + \nicefrac{1}{\lambda T}} - \alpha_i \right) e^{- 2 \sqrt{\alpha_i^2 + \nicefrac{1}{\lambda T}} (T - t)}} \left( \mustar_i - \theta_i(t) \right).
    \end{align}
\end{theorem}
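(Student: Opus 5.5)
We will reduce the problem to the LQG setting of \Cref{cor:lqg-scalarcost} by centering at the optimum. Since $\mustar$ is assumed known to the controller, define the error process $x_t := \theta_t^u - \mustar$. By \eqref{eq:sde} it satisfies
\begin{align}
d x_t = \left( -\bA x_t + u_t \right) d t + \bSigma^{1/2} d W_t.
\end{align}
This is exactly \eqref{eq:lqg-dynamics} with drift matrix $-\bA$, $\bB = \eye$ and diffusion matrix $\bSigma^{1/2}$. The relaxed objective \eqref{eq:relaxed_objective} becomes
\begin{align}
J_T'(u) = \ee\left[\int_0^T \left( T^{-1} \norm{x_t}^2 + \lambda \norm{u_t}^2 \right) d t\right],
\end{align}
which is \eqref{eq:lqg-cost} with $\bQ_f = 0$, $\bQ = T^{-1}\eye$ and $\bR = \lambda \eye$. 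The diffusion matrix plays no role in the form of the feedback law in \Cref{thm:lqg}; it only enters the optimal cost.

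Applying \Cref{cor:lqg-scalarcost} with $q = T^{-1}$, $r = \lambda$ and drift $-\bA$ gives the optimal control
\begin{align}
u_t = -\lambda^{-1}\bP_t x_t = \lambda^{-1}\bP_t(\mustar - \theta_t^u).
\end{align}
Here $\bP$ solves $-\bPdot_t = -\bA^\top \bP_t - \bP_t \bA - \lambda^{-1}\bP_t^2 + T^{-1}\eye$ with $\bP_T = 0$. Multiplying through by $-1$ yields precisely the Riccati equation in \eqref{eq:optimal_control_relaxed}. This establishes the general statement.

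For the diagonal case $\bA = \diag(\alpha_1,\dots,\alpha_d)$, we first argue that $\bP_t$ is diagonal. The diagonal ansatz closes the ODE: a diagonal $\bP$ produces a diagonal right-hand side. The terminal condition $\bP_T = 0$ is diagonal, and Riccati solutions are unique. Hence $\bP_t = \diag(p_1(t),\dots,p_d(t))$, where each $p_i$ solves the scalar equation of \Cref{lem:scalar-riccati-solution} with $a = -\alpha_i$, $r = \lambda$, $q = T^{-1}$ and $q_f = 0$.

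The lemma then gives the following quantities:
\begin{itemize}
\item $\omega_i = \sqrt{\alpha_i^2 + \nicefrac{1}{\lambda T}}$;
\item $\beta_\pm = \lambda(-\alpha_i \pm \omega_i)$;
\item $\kappa(t) = \frac{\beta_+}{\beta_-}\, e^{-2\omega_i(T-t)} = -\frac{\omega_i - \alpha_i}{\omega_i + \alpha_i}\, e^{-2\omega_i(T-t)}$.
\end{itemize}
Writing $E = e^{-2\omega_i(T-t)}$, the lemma yields $p_i = \beta_+(1-E)/(1-\kappa)$. Therefore
\begin{align}
\frac{p_i}{\lambda} = \frac{(\omega_i - \alpha_i)(1-E)}{1 + \frac{\omega_i-\alpha_i}{\omega_i+\alpha_i}E} = \frac{(\omega_i^2 - \alpha_i^2)(1-E)}{(\omega_i+\alpha_i) + (\omega_i-\alpha_i)E}.
\end{align}
Since $\omega_i^2 - \alpha_i^2 = \nicefrac{1}{\lambda T}$, this is exactly the coefficient in \eqref{eq:optimal_control_relaxed_diagonal}.

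The main point requiring care is not conceptual but bookkeeping. We must track the sign flip from $\bA$ to $-\bA$ and the resulting sign in the Riccati equation. We must also check that $\omega_i > 0$, so that the separation-of-variables step in \Cref{lem:scalar-riccati-solution} is valid; this holds since $\lambda, T > 0$. Finally, we should verify that $1 - \kappa(t) > 0$ on $[0,T]$, so that $p_i$ has no blow-up. This holds because $\kappa(t) \leq 0$, as $\beta_- < 0 < \beta_+$. With these checks, both the positivity of the gain and the well-posedness of the feedback law (linear, hence admissible and progressively measurable) follow.
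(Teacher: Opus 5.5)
Your proposal is correct and follows the same route as the paper: center at $\mustar$, apply \Cref{cor:lqg-scalarcost} with drift $-\bA$, $q = T^{-1}$, $r = \lambda$, and then solve each decoupled coordinate via \Cref{lem:scalar-riccati-solution} with $a = -\alpha_i$ and $q_f = 0$. Your version is cleaner than the paper's write-up, whose displayed intermediate $p_i(t)$ carries a spurious leading minus sign: you obtain $p_i(t) = \beta_+\bigl(1 - e^{-2\omega_i(T-t)}\bigr)/(1-\kappa(t)) \geq 0$, and you add the checks the paper skips (the diagonal ansatz plus ODE uniqueness, and $1-\kappa(t) > 0$, so there is no blow-up on $[0,T]$).
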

\begin{proof}
    The proof proceeds by first centering the dynamics and then applying the standard Riccati equation given in \Cref{thm:lqg} to solve for the optimal control.  Indeed, let
    \begin{align}\label{eq:centered-dynamics}
        x_t = \theta_t^u - \mustar, \quad \text{so} \quad d x_t = \left[ - \bA x_t + u_t \right] d t + \bSigma^{1/2} d W_t.
    \end{align}
    We can then rewrite $J_T'(u)$ as
    \begin{align}
        J_T'(u) = \ee\left[ \frac 1T \int_0^T \norm{x_t}^2 d t + \lambda \int_0^T \norm{u_t}^2 d t \right].
    \end{align}
    Now, writing
    \begin{align}
        \bQ = T^{-1} \cdot \eye \quad \text{and} \quad \bR = \lambda \cdot \eye,
    \end{align}
    we can again rewrite $J_T'(u)$ as
    \begin{align}
        J_T'(u) = \ee\left[ \int_0^T \left( x_t^\top \bQ x_t + u_t^\top \bR u_t \right) d t \right],
    \end{align}
    We now observe that combining \eqref{eq:centered-dynamics} with the above expression for $J_T'(u)$, we have reduced the problem of minimizing $J_T'(u)$ to a standard linear-quadratic-Gaussian (LQG) control problem as in \Cref{thm:lqg}.  Applying the Riccati equation from \Cref{thm:lqg} yields the desired result, in particular the case found in \Cref{cor:lqg-scalarcost}.

    For the special case, we use \Cref{lem:scalar-riccati-solution}.  Indeed, setting
    \begin{align}
        r = \lambda, \quad q = T^{-1}, \quad q_f = 0, \quad \text{and} \quad a = - \alpha_i,
    \end{align}
    that result tells us that we can solve the scalar Riccati equation to get
    \begin{align}
        p_i(t) = -\frac{\lambda \left( \sqrt{\alpha_i^2 + \nicefrac{1}{\lambda T}} - \alpha_i \right)\left( 1 - e^{- 2 \sqrt{\alpha_i^2 + \nicefrac{1}{\lambda T}} (T - t)} \right)}{1 + \frac{\sqrt{\alpha_i^2 + \nicefrac{1}{\lambda T}} - \alpha_i}{\sqrt{\alpha_i^2 + \nicefrac{1}{\lambda T}} + \alpha_i} e^{- 2 \sqrt{\alpha_i^2 + \nicefrac{1}{\lambda T}} (T - t)}}.
    \end{align}
    Thus,
    \begin{align}
        u_i(t) &= - \lambda^{-1} p_i(t) x_t^{(i)} \\
        &=\frac{ \left( \sqrt{\alpha_i^2 + \nicefrac{1}{\lambda T}} - \alpha_i \right)\left( 1 - e^{- 2 \sqrt{\alpha_i^2 + \nicefrac{1}{\lambda T}} (T - t)} \right)}{1 + \frac{\sqrt{\alpha_i^2 + \nicefrac{1}{\lambda T}} - \alpha_i}{\sqrt{\alpha_i^2 + \nicefrac{1}{\lambda T}} + \alpha_i} e^{- 2 \sqrt{\alpha_i^2 + \nicefrac{1}{\lambda T}} (T - t)}} (\mustar_i - \theta_t^{(i)}) \\
        &= \frac{1}{\lambda T} \cdot \frac{1 - e^{- 2 \sqrt{\alpha_i^2 + \nicefrac{1}{\lambda T}} (T - t)}}{\sqrt{\alpha_i^2 + \nicefrac{1}{\lambda T}} + \alpha_i + \left(\sqrt{\alpha_i^2 + \nicefrac{1}{\lambda T}} - \alpha_i\right)   e^{- 2 \sqrt{\alpha_i^2 + \nicefrac{1}{\lambda T}} (T - t)}}.
    \end{align}
    The result follows.
\end{proof}

We now construct an approximation to the above optimal control where we assume that $t \ll T$.  Indeed, we suppose that
\begin{align}
    T - t \gg \frac{1}{2 \sqrt{\alpha_i^2 + \nicefrac{1}{\lambda T}}} \geq \frac{1}{2 \alpha_i},
\end{align}
so that we can approximate $e^{- 2 \sqrt{\alpha_i^2 + \nicefrac{1}{\lambda T}} (T - t)} \approx 0$.  Then we have that
\begin{align}\label{eq:approx-optimal-control}
    u_i(t) \approx \frac{1}{\lambda T} \cdot \frac{1}{\sqrt{\alpha_i^2 + \nicefrac{1}{\lambda T}} + \alpha_i},
\end{align}
which is the form of the control used in our proposed algorithm.

\subsection{Optimal Algorithm for the Original Objective}\label{app:original_objective}

We now continue by considering the simplified setting where the controller is aware of the true minimizer $\mustar$, but where we wish to minimize the original objective $J_T(u)$ defined in \eqref{eq:stoch_control_problem} rather than the relaxed objective $J_T'(u)$ defined in \eqref{eq:relaxed_objective}.

\begin{theorem}\label{thm:original_objective}
    Suppose that $\theta_t^u$ evolves according to \eqref{eq:sde} for some progressively measurable $u_t$.  Then the optimal control $u_t$ that minimizes the original objective $J_T(u)$ defined in \eqref{eq:stoch_control_problem} is given by
    \begin{align}
        \ustar_t =  \lambda^{-1} \left( \bM_t (\mustar - \theta_t) + \bN_t \int_0^t (\mustar - \theta_s) d s \right),
    \end{align}
    where
    \begin{align}
        - \dot{\bM_t} &= - \bA^\top \bM_t - \bM_t \bA + \bN_t + \bN_t^\top - \lambda^{-1} \bM_t^2, \quad \bM_T = 0, \\
        - \dot{\bN_t} &= - \bA^\top \bN_t + \bS_t - \lambda^{-1} \bM_t \bN_t, \quad \bN_T = 0, \\
        - \dot{\bS_t} &= - \lambda^{-1} \bN_t^\top \bN_t, \quad \bS_T = T^{-2} \eye.
    \end{align}
    Moreover, in the special case that $\bA = \diag(\alpha_1, \dots, \alpha_d)$ is diagonal, we have that for $\alpha_i \neq 0$, 
    \begin{align}\label{eq:optimal_controller_diagonal_nonzero}
        \ustar_i(t) = \frac{1 - e^{- \alpha_i (T - t)}}{\alpha_i \lambda T^2 + \nicefrac{(T - t)}{\alpha_i} - \frac{2}{\alpha_i^2} \cdot \left( 1 - e^{- \alpha_i (T - t)} \right) + \frac{1 - e^{- 2 \alpha_i (T - t)}}{2 \alpha_i^2}} \left( \frac{1 - e^{- \alpha_i (T -t)}}{\alpha_i} \left( \mustar - \theta_t \right) + \int_0^t \left( \mustar - \theta_s \right) d s\right)
    \end{align}
    whereas for $\alpha_i = 0$, we have
    \begin{align}\label{eq:optimal_controller_diagonal_zero}
        \ustar_i(t) = \frac{(T - t)^2}{\lambda T^2 + \nicefrac{(T - t)^3}{3}} \cdot \left( \mustar - \theta_t \right) + \frac{(T - t)}{\lambda T^2 + \nicefrac{(T - t)^3}{3}} \cdot \int_0^t \left( \mustar - \theta_s \right) d s.
    \end{align}

\end{theorem}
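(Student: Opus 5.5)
We convert the problem to a standard LQG problem by augmenting the state with the running integral. Set $x_t = \theta_t - \mustar$ and $y_t = \int_0^t x_s\, ds$, so that
\begin{align}
    d x_t = (-\bA x_t + u_t)\, dt + \bSigma^{1/2} dW_t, \qquad d y_t = x_t\, dt, \qquad \thetema[T] - \mustar = T^{-1} y_T .
\end{align}
The cost becomes
\begin{align}
    J_T(u) = \ee\left[ T^{-2} \norm{y_T}^2 + \lambda \int_0^T \norm{u_t}^2 dt \right].
\end{align}
With $z_t = (x_t, y_t) \in \rr^{2d}$ this is exactly the setting of \Cref{thm:lqg}. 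The drift matrix is $\begin{pmatrix} -\bA & 0 \\ \eye & 0 \end{pmatrix}$ and the control matrix is $\bB = \begin{pmatrix} \eye \\ 0 \end{pmatrix}$. The diffusion is $\begin{pmatrix}\bSigma^{1/2} \\ 0\end{pmatrix}$. The costs are $\bQ = 0$, $\bR = \lambda \eye$, and $\bQ_f = \diag(0, T^{-2}\eye)$. This is a final-cost-only problem, in the spirit of \Cref{cor:lqg-finalcost}, but with a non-scalar block structure.

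We write the Riccati solution in blocks as $\bP_t = \begin{pmatrix} \bM_t & \bN_t \\ \bN_t^\top & \bS_t \end{pmatrix}$. The optimal control is then $u_t = -\lambda^{-1}\bB^\top \bP_t z_t = -\lambda^{-1}(\bM_t x_t + \bN_t y_t)$. Since $x_t = -(\mustar - \theta_t)$ and $y_t = -\int_0^t(\mustar - \theta_s)ds$, this gives the claimed form $\lambda^{-1}(\bM_t(\mustar - \theta_t) + \bN_t \int_0^t (\mustar - \theta_s)ds)$.

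To get the block ODEs, we expand $-\bPdot = \mathcal{A}^\top \bP + \bP \mathcal{A} - \lambda^{-1}\bP \bB\bB^\top \bP$ blockwise, where $\bB\bB^\top = \diag(\eye, 0)$. This gives:
\begin{itemize}
    \item the $(1,1)$ block: $-\bA^\top \bM - \bM\bA + \bN + \bN^\top - \lambda^{-1}\bM^2$;
    \item the $(1,2)$ block: $-\bA^\top \bN + \bS - \lambda^{-1}\bM\bN$;
    \item the $(2,2)$ block: $-\lambda^{-1}\bN^\top\bN$.
\end{itemize}
The terminal conditions are read off from $\bQ_f$. This part is routine bookkeeping.

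The main work is solving this coupled matrix Riccati system in closed form in the diagonal case. Every block is then diagonal, so the system decouples into scalar systems $(m, n, s)$ per coordinate with parameter $\alpha$. Attacking the nonlinear ODEs directly is awkward. Instead we use a linearization: for a final-cost-only LQ problem, $\bP_t$ is determined by the inverse of $\bP$. Equivalently, we can solve the deterministic problem by completing the square.

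For a single coordinate, the terminal quantity $y_T$ depends linearly on the current state $(x_t, y_t)$ and the future control:
\begin{align}
    y_T = y_t + \frac{1 - e^{-\alpha(T-t)}}{\alpha} x_t + \int_t^T \frac{1 - e^{-\alpha(T-s)}}{\alpha} u_s\, ds + \text{noise}.
\end{align}
Write $\phi(s) = (1 - e^{-\alpha(T-s)})/\alpha$. Minimizing $T^{-2} y_T^2 + \lambda \int_t^T u_s^2\, ds$ is a one-dimensional least-squares problem with a rank-one Gram quantity $\int_t^T \phi(s)^2 ds$. Its value function is
\begin{align}
    \frac{(y_t + \phi(t) x_t)^2}{T^2 + \lambda^{-1} T^2 \cdot \ldots}.
\end{align}
More precisely, the value function is $(y_t + \phi(t) x_t)^2 / \bigl(T^2 + \lambda^{-1}\int_t^T \phi^2\bigr)$ up to scaling, and the noise only adds a state-independent constant by certainty equivalence. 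This yields $m = c\,\phi^2$, $n = c\,\phi$ and $s = c$, where $c(t)^{-1} = T^2 + \lambda^{-1}\int_t^T \phi(s)^2 ds$.

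Evaluating $\int_t^T \phi^2 = \alpha^{-2}\bigl[(T-t) - \tfrac{2}{\alpha}(1 - e^{-\alpha(T-t)}) + \tfrac{1}{2\alpha}(1 - e^{-2\alpha(T-t)})\bigr]$ produces exactly the denominator in \eqref{eq:optimal_controller_diagonal_nonzero} after multiplying through by $\lambda$. As a check, we will verify directly that these $(m, n, s)$ satisfy the three scalar ODEs.

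The $\alpha = 0$ case follows identically with $\phi(s) = T - s$ and $\int_t^T \phi^2 = (T-t)^3/3$, giving \eqref{eq:optimal_controller_diagonal_zero}. It can also be obtained as the $\alpha \to 0$ limit.

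I expect the main obstacle to be justifying this value-function ansatz and the matching normalization constants cleanly. I would first guess it via the deterministic least-squares argument and then confirm it by substituting it into the Riccati system.
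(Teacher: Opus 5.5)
Your proposal is correct and follows essentially the same route as the paper. The shared steps are: the augmented state $(x_t, y_t)$ with terminal cost $T^{-2}\norm{y_T}^2$, the same block Riccati system, and in the diagonal case the same closed form $m = c\,\phi^2$, $n = c\,\phi$, $s = c$ with $c^{-1} = T^2 + \lambda^{-1}\int_t^T \phi(s)^2\,ds$, confirmed by substitution into the scalar ODEs.

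The only difference is that you derive this ansatz from the deterministic completing-the-square computation, which checks out including the exact normalization, whereas the paper simply posits it. Your aside about linearizing via $\bP_t^{-1}$ does not apply literally here: $ms = n^2$ in each coordinate, so $\bP_t$ is singular.
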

\begin{proof}
    We introduce the accumulated centered process
    \begin{align}
        y_t^u = \int_0^t \left( \theta_s^u - \mustar \right) d s.
    \end{align}
    Then we observe that
    \begin{align}
        J_T(u) = \ee\left[ \frac 1{T^2} \norm{y_T}^2 + \lambda \int_0^T \norm{u_t}^2 d t \right], \quad \text{and} \quad d y_t^u = x_t d t.
    \end{align}
    This is not quite in the form of an LQG problem, but suppose that we augment the state to be $z_t^u = (x_t^u, y_t^u)$.  Then we have that
    \begin{align}
        d z_t^u = \left( \bF z_t + \bG u_t \right) d t + \bE d W_t, \quad \text{where} \quad \bF = \begin{bmatrix} -\bA & 0 \\ \eye & 0 \end{bmatrix}, \quad \bG = \begin{bmatrix} \eye \\ 0 \end{bmatrix}, \quad \bE = \begin{bmatrix} \bSigma^{1/2} \\ 0 \end{bmatrix}.
    \end{align}
    We then may take $\bQ$ to be zero and
    \begin{align}
        \bQ_f = \begin{bmatrix} 0 & 0 \\ 0 & T^{-2} \eye \end{bmatrix}, \quad \bR = \lambda \eye.
    \end{align}
    This then again becomes an LQG problem, and we can apply the Riccati equation from \Cref{thm:lqg} to solve for the optimal control.  Indeed, the Riccati equation in this case is given by
    \begin{align}
        - \dot{\bP_t} = \bF^\top \bP_t + \bP_t \bF - \lambda^{-1} \bP_t \bG \bG^\top \bP_t, \quad \text{and} \quad \bP_T = \bQ_f.
    \end{align}
    Writing
    \begin{align}\label{eq:riccati-decomposition}
        \bP_t = \begin{bmatrix}
            \bM_t & \bN_t \\ \bN_t^\top & \bS_t
        \end{bmatrix},
    \end{align}
    we get that
    \begin{align}
        \ustar_t = - \lambda^{-1} \left( \bM_t x_t + \bN_t y_t \right)
    \end{align}
    or, alternatively
    \begin{align}\label{eq:riccati_augmented_optimal_control}
        \ustar_t = - \lambda^{-1} \left( \bM_t (\theta_t - \mustar) + \bN_t \int_0^t \left( \theta_s - \mustar \right) d s \right),
    \end{align}
    where
    \begin{align}
        - \dot{\bM_t} &= - \bA^\top \bM_t - \bM_t \bA + \bN_t + \bN_t^\top - \lambda^{-1} \bM_t^2, \quad \bM_T = 0, \\
        - \dot{\bN_t} &= - \bA^\top \bN_t + \bS_t - \lambda^{-1} \bM_t \bN_t, \quad \bN_T = 0, \label{eq:decomposed_riccati} \\
        - \dot{\bS_t} &= - \lambda^{-1} \bN_t^\top \bN_t, \quad \bS_T = T^{-2} \eye
    \end{align}
    is derived by plugging in the decomposition in \eqref{eq:riccati-decomposition} into the Riccati equation in \eqref{eq:riccati_general} and rearranging.
    The first result follows.

    For the special case that $\bA$ is diagonal, the coordinates decouple and we derive the solution in \Cref{lem:original_objective_solution_scalar}.
\end{proof}

\begin{lemma}\label{lem:original_objective_solution_scalar}
    Suppose that $\bA = \diag(\alpha_1, \dots, \alpha_d)$ is diagonal.  Then the above system of ODEs for $\bM_t$, $\bN_t$, and $\bS_t$ decouples into $d$ independent systems of ODEs, one for each coordinate.  In particular, for each $i \in [d]$, we have that \eqref{eq:decomposed_riccati} decouples and can be solved to yield a closed-form solution for $\ustar_i(t)$, given in \eqref{eq:optimal_controller_diagonal_nonzero} and \eqref{eq:optimal_controller_diagonal_zero}.
\end{lemma}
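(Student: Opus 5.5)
The plan is to exploit the fact that the terminal cost depends only on $y_T$, so each coordinate's two-dimensional Riccati system collapses to a \emph{scalar} Riccati equation along one direction. Since $\bA$ is diagonal, $\bF$, $\bG$, $\bE$, $\bQ_f$ and $\bR$ all decompose coordinatewise. So I first note that $\bM_t,\bN_t,\bS_t$ remain diagonal for all $t$ (the right-hand sides of \eqref{eq:decomposed_riccati} preserve diagonality, and the terminal data are diagonal), and uniqueness of ODE solutions then gives decoupling. From here on I fix a coordinate $i$, write $\alpha=\alpha_i$, and work with scalars $m,n,s$ and the scalar states $x,y$ with $dx=(-\alpha x+u)dt+\sigma dW$ and $dy=x\,dt$.

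\textbf{Step 1: reduce to one dimension.} Set
\begin{align}
\varphi(t)=\frac{1-e^{-\alpha(T-t)}}{\alpha}
\end{align}
(for $\alpha=0$ this is $\varphi(t)=T-t$). Define $\hat y_t = y_t+\varphi(t)x_t$, which is the conditional mean of $y_T$ under zero future control. Itô's formula together with $\dot\varphi=\alpha\varphi-1$ gives
\begin{align}
d\hat y_t=\varphi(t)u_t\,dt+\varphi(t)\sigma\,dW_t .
\end{align}
Moreover $y_T=\hat y_T$, since $\varphi(T)=0$. The problem therefore becomes: minimize $\ee[T^{-2}\hat y_T^2+\lambda\int_0^T u_t^2dt]$ for a scalar linear system with input gain $\varphi(t)$.

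\textbf{Step 2: solve the scalar problem.} By \Cref{thm:lqg}, with time-varying $B=\varphi(t)$ (the cited Riccati result extends verbatim to time-varying coefficients), the Riccati equation is $\dot p=\lambda^{-1}\varphi^2p^2$ with $p(T)=T^{-2}$. This integrates to
\begin{align}
p(t)^{-1}=T^2+\lambda^{-1}\int_t^T\varphi(s)^2ds .
\end{align}
The optimal control is
\begin{align}
u_t=-\lambda^{-1}\varphi p\,\hat y_t=-\frac{\varphi(t)\,(y_t+\varphi(t)x_t)}{\lambda T^2+\int_t^T\varphi^2}.
\end{align}

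\textbf{Step 3: connect back to $(m,n,s)$.} Set $m=p\varphi^2$, $n=p\varphi$ and $s=p$, i.e.\ $\bP_t=p(t)vv^\top$ with $v=(\varphi,1)$. I will check directly that these satisfy the three scalar equations in \eqref{eq:decomposed_riccati}. Each check uses only $\dot p=\lambda^{-1}\varphi^2p^2$ and $\dot\varphi=\alpha\varphi-1$; the terminal values are $m(T)=n(T)=0$ and $s(T)=T^{-2}$. This step is routine but the one most prone to sign slips, so I will do it carefully; uniqueness then identifies this solution with the one in \Cref{thm:original_objective}.

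\textbf{Step 4: compute the integral and simplify.} For $\alpha\neq0$,
\begin{align}
\int_t^T\varphi^2=\alpha^{-2}\Bigl[(T-t)-\tfrac{2}{\alpha}(1-e^{-\alpha(T-t)})+\tfrac{1-e^{-2\alpha(T-t)}}{2\alpha}\Bigr].
\end{align}
Substituting into the control from Step 2 and multiplying the numerator and denominator by $\alpha$ gives \eqref{eq:optimal_controller_diagonal_nonzero}, after rewriting $-x=\mustar-\theta$ and $-y=\int_0^t(\mustar-\theta_s)ds$. For $\alpha=0$, $\int_t^T(T-s)^2ds=(T-t)^3/3$, which gives \eqref{eq:optimal_controller_diagonal_zero} immediately.

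I expect the main obstacle to be spotting the reduction in Step 1, i.e.\ guessing the rank-one ansatz $\bP_t=p\,vv^\top$. Once that ansatz is in hand, everything else is bookkeeping.
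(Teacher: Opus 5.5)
Your proposal is correct and is essentially the paper's proof: with $\varphi(t)=h(T-t)$ and $p=1/D$, where $D=T^2+g/\lambda$, your rank-one solution $\bP_t=p\,vv^\top$ with $v=(\varphi,1)$ is exactly the paper's claimed $M_t=h^2/D$, $N_t=h/D$, $S_t=1/D$; it is checked by the same direct substitution into the scalar Riccati system and then plugged into $\ustar=\lambda^{-1}\bigl(M_t(\mustar-\theta_t)+N_t\int_0^t(\mustar-\theta_s)\,ds\bigr)$. The main addition is your change of variables $\hat y_t=y_t+\varphi(t)x_t$, which explains where the ansatz comes from (the paper simply asserts it), and your value of $\int_t^T\varphi^2$ is correct, whereas the paper's displayed $g(\tau)$ has a typo ($\tfrac{2}{2\alpha^3}$ in place of $\tfrac{2}{\alpha^3}$), although its final controller agrees with yours.
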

\begin{proof}
    In the case where $\bA$ is diagonal, it is immediate that the system decouples into $d$ independent systems of ODEs, one for each coordinate because now every matrix is diagonal (note that $\bSigma$ may be taken to be diagonal without loss of generality by the rotational invariance of Brownian motion).  Thus, we can solve for each coordinate separately.  
    
    In particular, we know that the optimal controller is given by \eqref{eq:riccati_augmented_optimal_control}, i.e. for the $i^{th}$ coordinate, 
    \begin{align}
        \ustar_i(t) = \lambda^{-1} \left( M_t (\mustar_i - \theta_t^{(i)}) + N_t \int_0^t (\mustar_i - \theta_s^{(i)}) d s \right),
    \end{align}
    where $M_t$ and $N_t$ solve the system of ODEs from \eqref{eq:decomposed_riccati}  in the scalar setting, i.e.
    \begin{align}
        - \dot{M}_t = - 2 \alpha_i M_t + 2 N_t - \lambda^{-1} M_t^2, \quad M_T = 0, \\
        - \dot{N}_t = - \alpha_i N_t + S_t - \lambda^{-1} M_t N_t, \quad N_T = 0,\label{eq:scalar_riccati_odes} \\
        - \dot{S}_t = - \lambda^{-1} N_t^2, \quad S_T = T^{-2}.
    \end{align}
    Note that we are now in the scalar setting, so we are not bolding the variables; for notational simplicity, we omit the subscript $i$ in the remainder of this proof.  Now, let $\tau = T - t$ be the time to go and define the following functions:
    \begin{align}
        h(\tau) = \int_0^\tau e^{- \alpha s} d s = \begin{cases}
            \frac{1 - e^{- \alpha \tau}}{\alpha}, & \alpha \neq 0, \\
            \tau, & \alpha = 0,
        \end{cases}
    \end{align}
    and
    \begin{align}
        g(\tau) = \int_0^\tau h(s)^2 d s = \begin{cases}
            \frac{\tau}{\alpha^2} - \frac{2(1 - e^{- \alpha \tau})}{2\alpha^3} + \frac{1 - e^{- 2 \alpha \tau}}{2 \alpha^3}, & \alpha \neq 0, \\
            \frac{\tau^3}{3}, & \alpha = 0.
        \end{cases}
    \end{align}
    We claim that the solution to the above system of ODEs is given by
    \begin{align}\label{eq:scalar_riccati_solution_augmented}
        M_t = \frac{h(\tau)^2}{T^2 + \nicefrac{g(\tau)}{\lambda}}, \quad N_t = \frac{h(\tau)}{T^2 + \nicefrac{g(\tau)}{\lambda}}, \quad S_t = \frac{1}{T^2 + \nicefrac{g(\tau)}{\lambda}}.
    \end{align}
    Let $D(\tau) = T^2 + \nicefrac{g(\tau)}{\lambda}$.  We verify that the above is indeed a solution by plugging into the ODEs.  First, we have that
    \begin{align}
        h'(\tau) = e^{- \alpha \tau} = 1 - \alpha h(\tau), \quad g'(\tau) = h(\tau)^2, \quad \text{and} \quad D'(\tau) = \lambda^{-1} h(\tau)^2.
    \end{align}
    Observing that $\nicefrac{d}{d \tau} = - \nicefrac{d}{d t}$, can plug in \eqref{eq:scalar_riccati_solution_augmented} into \eqref{eq:scalar_riccati_odes} to observe that this is indeed the solution.

    Thus, plugging back in, we see that the optimal control is given by
    \begin{align}
        \ustar(t) &= \frac{h(T - t)}{\lambda T^2 + g(T - t)} \cdot \left( h(T -t) (\mustar - \theta_t) + \int_0^t \left( \mustar - \theta_s \right) ds \right).
    \end{align}
    In particular, for $\alpha = 0$, we have that
    \begin{align}
        \ustar(t) = \frac{(T - t)^2}{\lambda T^2 + \nicefrac{(T - t)^3}{3}} \cdot \left( \mustar - \theta_t \right) + \frac{(T - t)}{\lambda T^2 + \nicefrac{(T - t)^3}{3}} \cdot \int_0^t \left( \mustar - \theta_s \right) d s.
    \end{align}
    For $\alpha \neq 0$, we have that
    \begin{align}
        \ustar(t) = \frac{1 - e^{- \alpha (T - t)}}{\alpha \lambda T + \nicefrac{(T - t)}{\alpha} - \frac{2}{ \alpha^2} \cdot \left( 1 - e^{- \alpha (T - t)} \right) + \frac{1 - e^{- 2 \alpha_i (T - t)}}{2 \alpha_i^2}} \left( \frac{1 - e^{- \alpha (T -t)}}{\alpha} \left( \mustar - \theta_t \right) + \int_0^t \left( \mustar - \theta_s \right) d s\right).
    \end{align}
    The result follows.
\end{proof}

\subsection{Generalizing to the Case Where $\mustar$ is Unknown}\label{app:unknown_mustar}

The above results are a promising start but all rely on the assumption that $\mustar$ is known to the controller, which is certainly not the case in practice.  Indeed, the entire goal of optimization is to \emph{find} $\mustar$; thus, if the learner had access to $\mustar$ from the start there would be no need to run any optimization algorithm at all.  In this section we show how to modify the above optimal controllers to the case where $\mustar$ is unknown, and in particular we show that the same form of controller is still optimal, but with $\mustar$ replaced by an estimate of $\mustar$.

The key factor in generalizing to the case where $\mustar$ is unknown is the \emph{separation principle} from control theory, which is also known as \emph{certainty equivalence} \citep{georgiou2013separation,van1981certainty,kwakernaak1972linear}.  This principle essentially states that for an LQG control problem where the state is not directly observed, the optimal control can be obtained by first estimating the state as the conditional expectation of the state given the observations and then plugging this estimate into the optimal control for the fully observed case.  More formally, we have the following result.
\begin{theorem}\label{thm:separation_principle}
    Consider an LQG problem as in \eqref{eq:lqg-dynamics} and \eqref{eq:lqg-cost} where the state $x_t^u$ is not directly observed, but rather we have access to an observation process $y_t = \phi(x_t^u, \mustar)$ for some unknown $\mustar$ drawn from a known prior distribution with $\phi$ invertible in the first argument.  Suppose that the corresponding full-information LQG problem where the state is directly observed has an optimal control given by $u_t = \bK_t x_t$ for some matrix $\bK_t$.  Then the optimal control for the partially observed problem is given by $u_t = \bK_t \hat{x}_t$, where $\hat{x}_t = \ee\left[ \phi^{-1}(y_t, \mustar) \mid y_s, s \leq t \right]$ is the conditional expectation of the state given the observations.  
\end{theorem}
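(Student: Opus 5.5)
We prove \Cref{thm:separation_principle} by a completion-of-squares argument: decompose the cost into a part that depends on the control only through the conditional mean $\hat x_t$, and a part that the control cannot affect. The proof has three steps.

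\textbf{Step 1: estimation error is control-independent.} Let $\cF_t = \sigma(y_s, s \le t)$. Since $\phi$ is invertible in its first argument, $\cF_t$ is generated by the state trajectory once $\mustar$ is fixed. Because $\mustar$ is unknown, however, $x_t = \phi^{-1}(y_t,\mustar)$ is only known up to the posterior on $\mustar$. Write $x_t = \hat x_t + e_t$ with $e_t = x_t - \hat x_t$. The dynamics \eqref{eq:lqg-dynamics} are linear and $u_t$ is $\cF_t$-measurable. Hence the controlled trajectory equals the uncontrolled one plus a deterministic linear functional of $\{u_s\}_{s\le t}$, and that functional is known to the controller. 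It follows that:
\begin{itemize}
\item the sigma-algebras $\cF_t$ for different admissible controls coincide with those of the uncontrolled system;
\item the error $e_t$ has the same law for every admissible control.
\end{itemize}
This is the step I expect to be the main obstacle. It is where the Gaussian/linear structure matters, and we must argue carefully that applying a known control shifts both $x_t$ and $\hat x_t$ by the same $\cF_t$-measurable amount. In our application $\phi(x,\mustar) = x + \mustar$ (the observation is $\theta_t$), so this shift argument is transparent.

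\textbf{Step 2: dynamics of the conditional mean.} Taking conditional expectations in \eqref{eq:lqg-dynamics} shows that $\hat x_t$ satisfies $d\hat x_t = (\bA \hat x_t + \bB u_t)\,dt + d\nu_t$, where $\nu_t$ is an $\cF_t$-martingale (the innovation). Its quadratic variation does not depend on $u$, by Step 1.

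\textbf{Step 3: completion of squares.} By Itô's formula applied to $\hat x_t^\top \bP_t \hat x_t$, with $\bP_t$ the Riccati solution from \Cref{thm:lqg}, we obtain
\begin{align}
J(u) = \ee\left[\int_0^T (u_t - \bK_t \hat x_t)^\top \bR (u_t - \bK_t \hat x_t)\,dt\right] + \text{(terms independent of } u),
\end{align}
where $\bK_t = -\bR^{-1}\bB^\top \bP_t$. Two facts make this decomposition work:
\begin{itemize}
\item the cross terms $\ee[e_t^\top \bQ \hat x_t]$ vanish, because $e_t \perp \cF_t$ in $L^2$;
\item the terms $\ee[e_t^\top \bQ e_t]$ and $\ee[e_T^\top\bQ_f e_T]$ are control-free, by Step 1.
\end{itemize}
The first term is minimized, and equals zero, exactly when $u_t = \bK_t \hat x_t$. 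This choice is admissible because $\hat x_t$ is $\cF_t$-measurable, which proves the theorem.

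Applying this to the augmented state from \Cref{thm:original_objective} gives $\hat x_t = \theta_t - \muhat_t$ and $\hat y_t = \int_0^t\theta_s\,ds - t\muhat_t$. Substituting into \eqref{eq:optimal_controller_diagonal_nonzero} then yields \Cref{thm:optimal_control_solution}.
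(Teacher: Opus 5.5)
Your argument is sound in substance but takes a genuinely different route from the paper. The paper does not prove the result itself. It augments the state with the constant process $\mu_t \equiv \mustar$, notes that $\theta_t = \mu_t + x_t$ is then a linear observation of the augmented state, and invokes \citet[Theorem 14]{georgiou2013separation} as a black box. You instead give the classical self-contained innovations argument: split $x_t = \hat x_t + e_t$, show that the error and the innovation are unaffected by $u$, and complete the square along $\hat x_t^\top \bP_t \hat x_t$ using the Riccati equation of \Cref{thm:lqg}.

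Your route buys transparency. Only the $L^2$-orthogonality of $e_t$ to $\cF_t$ is used, so you need neither a Gaussian prior nor a Kalman filter; any prior on $\mustar$ with finite second moment works. The proof also shows exactly why certainty equivalence holds here, namely the absence of a dual effect. It exposes a restriction that both proofs tacitly impose: your Step 1 requires the observation to be shifted by a known, $\mustar$-independent functional of $u$. That holds for $\phi(x,\mu) = x + \mu$ (or $\phi$ affine in $x$) but not for a general invertible $\phi$. For instance, with $\phi(x,\mu) = \mu x$ and $\mu = \pm 1$, the size of $u$ governs how fast the sign of $\mu$ is learned, and certainty equivalence can fail. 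The statement should therefore be read with $\phi$ affine in $x$, which is all the application needs.

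The one step you should not leave as an assertion is the first bullet of Step 1. In the additive case, $y^u_t = y^0_t + \int_0^t e^{\bA(t-s)}\bB u_s\,ds$, and since $u$ is $\cF^u$-adapted this gives $\cF^0_t \subseteq \cF^u_t$ for free. The reverse inclusion does not follow from linearity. Since $u$ is itself a functional of $y^u$, for an arbitrary measurable feedback law the closed-loop equation need not be solvable nonanticipatively from $y^0$. It does hold, for example, for Lipschitz feedback laws, or if the admissible class is defined as controls adapted to the open-loop filtration $\cF^0_t$.

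The point is not cosmetic. If $\cF^u_t$ could be strictly larger than $\cF^0_t$, the terms $\ee[e_t^\top \bQ e_t]$, $\ee[e_T^\top \bQ_f e_T]$ and the innovation term would depend on $u$. A filtration-enlarging control could then lower them, which is precisely the dual effect your decomposition must rule out. This circularity is the technical issue that the Georgiou--Lindquist result cited by the paper is designed to handle.

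With the restriction built into the admissible class, your Steps 2 and 3 go through as written. You in fact get the pathwise identities $e^u_t = e^0_t$ and $\nu^u = \nu^0$, rather than mere equality in law. You should also assume $\ee\int_0^T \norm{u_t}^2\,dt < \infty$, so that the stochastic integral in the It\^o expansion has mean zero.
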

\begin{proof}
    This is a classical result in control theory, and we refer the reader to \citet{georgiou2013separation,van1981certainty,kwakernaak1972linear} for a complete proof.  In order to translate those works to our setting, we may apply \citet[Theorem 14]{georgiou2013separation} to the original control problem augmented with the state $\mu_t$ such that $\mu_0 = \mustar$ and $d \mu_t = 0$ for all $t$; the observation process $\theta_t = \mustar + x_t$ is then a linear function of the state, and the optimal control for the fully observed problem is given by $u_t = \bK_t x_t$ where $\bK_t$ is the solution to the Riccati equation as in \Cref{thm:lqg}.  Applying \citet[Theorem 14]{georgiou2013separation} then yields the desired result.
\end{proof}
We thus have the following two corollaries, for the relaxed and original objectives respectively.
\begin{corollary}
    Let $J_T'(u)$ denote the relaxed objective in \eqref{eq:relaxed_objective} and suppose that $\theta_t^u$ evolves according to \eqref{eq:sde} for some progressively measurable $u_t$.  Let $\muhat_t = \ee\left[ \mustar \mid \theta_s^u, s \leq t \right]$ be the conditional expectation of $\mustar$ given the trajectory of $\theta_t^u$ up to time $t$.  Then the optimal control $u_t$ that minimizes $J_T'(u)$ is given by
    \begin{align}
        \ustar(t) = \frac{\bP_t \left( \muhat_t - \theta_t^u \right)}{\lambda}
    \end{align}
    where $\bP_t$ is the solution to the same Riccati equation as in \eqref{eq:optimal_control_relaxed}.  In the special case that $\bA = \diag(\alpha_1, \dots, \alpha_d)$ is diagonal, we have that
    \begin{align}
        \ustar_i(t) = \frac{1}{\lambda T} \cdot \frac{1 - e^{- 2 \sqrt{\alpha_i^2 + \nicefrac{1}{\lambda T}} (T -t)}}{\left( \sqrt{\alpha_i^2 + \nicefrac{1}{\lambda T}} + \alpha_i \right) + \left( \sqrt{\alpha_i^2 + \nicefrac{1}{\lambda T}} - \alpha_i \right) e^{- 2 \sqrt{\alpha_i^2 + \nicefrac{1}{\lambda T}} (T - t)}} \left( \muhat_i(t) - \theta_i(t) \right)
    \end{align}
    for each $i \in [d]$.
\end{corollary}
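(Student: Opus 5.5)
The plan is to obtain this corollary by combining \Cref{thm:optimal_control_relaxed} with the separation principle of \Cref{thm:separation_principle}. In the full-information problem with the relaxed objective $J_T'(u)$, the centered state is $x_t = \theta_t^u - \mustar$. By \Cref{thm:optimal_control_relaxed}, the optimal control is a linear state feedback $u_t = \bK_t x_t$ with $\bK_t = -\lambda^{-1}\bP_t$, where $\bP_t$ solves the Riccati equation in \eqref{eq:optimal_control_relaxed}. When $\mustar$ is unknown, we cast the problem as partially observed. We augment the state with the constant process $\mu_t \equiv \mustar$ (so $d\mu_t = 0$ and $\mu_0 = \mustar$ is drawn from the known prior). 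The observation is then $\theta_t^u = \mustar + x_t$, which is of the form $y_t = \phi(x_t, \mustar)$ with $\phi$ invertible in its first argument: $\phi^{-1}(y_t,\mustar) = y_t - \mustar$.

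Applying \Cref{thm:separation_principle}, the optimal partially observed control is $u_t = \bK_t \hat{x}_t$ with
\begin{align}
\hat{x}_t = \ee\left[\theta_t^u - \mustar \mid \theta_s^u, s\le t\right] = \theta_t^u - \muhat_t,
\end{align}
since $\theta_t^u$ is measurable with respect to its own past. Substituting gives
\begin{align}
\ustar(t) = -\lambda^{-1}\bP_t(\theta_t^u - \muhat_t) = \lambda^{-1}\bP_t(\muhat_t - \theta_t^u),
\end{align}
which is the first claim.

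For the diagonal case, the Riccati equation decouples coordinatewise, as in the proof of \Cref{thm:optimal_control_relaxed}. That proof supplies the scalar gain $-\lambda^{-1}p_i(t)$ via \Cref{lem:scalar-riccati-solution} with $r=\lambda$, $q = T^{-1}$, $q_f = 0$, $a = -\alpha_i$. We reuse this gain unchanged and replace $\mustar_i - \theta_i(t)$ by $\muhat_i(t) - \theta_i(t)$, which yields the displayed formula.

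The main obstacle is justifying that \Cref{thm:separation_principle} applies here. First, the control enters the dynamics, so the filtration generated by $\theta^u$ depends on $u$. Second, the prior on $\mustar$ need not be Gaussian. The standard resolution, which I would invoke, is that the dynamics are linear with additive noise and the cost is quadratic. Hence the cost decomposes as the cost of the certainty-equivalent controller plus a term equal to the posterior estimation error of $\mustar$. That estimation-error term does not depend on $u$, because the innovation process is control-independent: one can subtract the known drift contribution of $u$ from the observations. I would verify this decomposition with a completion-of-squares argument based on $\bP_t$: apply It\^o's formula to $x_t^\top \bP_t x_t$ and take conditional expectations. This also confirms that the reduction to \citet[Theorem 14]{georgiou2013separation} is legitimate.
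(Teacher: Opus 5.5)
Your proposal is correct and follows the same route as the paper. The paper's proof is exactly the one-line combination of \Cref{thm:optimal_control_relaxed} with \Cref{thm:separation_principle}, through the augmented constant state $\mu_t \equiv \mustar$ giving $\hat x_t = \theta_t^u - \muhat_t$, with the diagonal formula inherited unchanged from the scalar Riccati solution; your extra completion-of-squares check (It\^o on $x_t^\top \bP_t x_t$, then conditionally splitting $\norm{u_t + \lambda^{-1}\bP_t x_t}^2$ into $\norm{u_t + \lambda^{-1}\bP_t(\theta_t^u - \muhat_t)}^2$ plus a control-independent estimation-error term) is a valid, self-contained justification of the separation step that the paper simply delegates to \citet[Theorem 14]{georgiou2013separation}.
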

\begin{proof}
    This follows immediately by combining \Cref{thm:optimal_control_relaxed} with \Cref{thm:separation_principle}.
\end{proof}
For the original objective, we have the following result.
\begin{corollary}\label{cor:optimal_control_unknown_mustar}
    Let $J_T(u)$ denote the original objective in \eqref{eq:stoch_control_problem} and suppose that $\theta_t^u$ evolves according to \eqref{eq:sde} for some progressively measurable $u_t$.  Let $\muhat_t = \ee\left[ \mustar \mid \theta_s^u, s \leq t \right]$ be the conditional expectation of $\mustar$ given the trajectory of $\theta_t^u$ up to time $t$.  Then the optimal control $u_t$ that minimizes $J_T(u)$ is given by
    \begin{align}
        \ustar_t =  \lambda^{-1} \left( \bM_t (\muhat_t - \theta_t) + \bN_t \left(t \cdot \muhat_t  - \int_0^t \theta_s d s \right) \right),
    \end{align}
    where $\bM_t$, $\bN_t$, and $\bS_t$ are the solutions to the same system of ODEs as in \Cref{thm:original_objective}.  In the special case that $\bA = \diag(\alpha_1, \dots, \alpha_d)$ is diagonal, we have that for $\alpha_i \neq 0$,
    \begin{align}
        \ustar_i(t) = \frac{1 - e^{- \alpha_i (T - t)}}{\alpha_i \lambda T^2 + \nicefrac{(T - t)}{\alpha_i} - \frac{3}{2 \alpha_i^2} \cdot \left( 1 - e^{- \alpha_i (T - t)} \right)} \left( \frac{1 - e^{- \alpha_i (T -t)}}{\alpha_i} \left( \muhat_i(t) - \theta_t^{(i)} \right) + \left(t \cdot \muhat_i(t) - \int_0^t  \theta_s^{(i)}  d s\right)\right)
    \end{align}
    whereas for $\alpha_i = 0$, we have
    \begin{align}
        \ustar_i(t) = \frac{(T - t)^2}{\lambda T^2 + \nicefrac{(T - t)^3}{3}} \cdot \left( \muhat_i(t) - \theta_t^{(i)} \right) + \frac{(T - t)}{\lambda T^2 + \nicefrac{(T - t)^3}{3}} \cdot \left(t \cdot \muhat_i(t) - \int_0^t \theta_s^{(i)}  d s \right).
    \end{align}
\end{corollary}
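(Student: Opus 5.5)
The plan is to combine \Cref{thm:original_objective} with the separation principle of \Cref{thm:separation_principle}, applied to the augmented LQG problem from the proof of \Cref{thm:original_objective}. Recall that there the state is $z_t = (x_t, y_t)$, with $x_t = \theta_t - \mustar$ and $y_t = \int_0^t (\theta_s - \mustar)\, ds$. The full-information optimal feedback is linear,
\begin{align}
    \ustar_t = -\lambda^{-1}\left( \bM_t x_t + \bN_t y_t \right),
\end{align}
with $\bM_t, \bN_t, \bS_t$ solving the decomposed Riccati system \eqref{eq:decomposed_riccati}. In the partially observed problem the learner sees only the path $\theta_s$, $s \le t$. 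As in the proof of \Cref{thm:separation_principle}, I would further augment the state with the constant process $\mu_t \equiv \mustar$. The observation $\theta_t = \mustar + x_t$ is then a linear function of the state, and \citet[Theorem 14]{georgiou2013separation} applies. Certainty equivalence then says the optimal control is the same feedback gain applied to the conditional mean of $z_t$ given $\{\theta_s : s \le t\}$.

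The key computation is this conditional mean. Since $\theta_s$ for $s \le t$ is measurable with respect to the observation filtration, and $\muhat_t = \ee[\mustar \mid \theta_s, s \le t]$, linearity of conditional expectation gives
\begin{align}
    \ee[x_t \mid \theta_{s \le t}] = \theta_t - \muhat_t, \qquad \ee[y_t \mid \theta_{s \le t}] = \int_0^t \theta_s\, ds - t\, \muhat_t .
\end{align}
Substituting these into the feedback law yields the claimed form
\begin{align}
    \ustar_t = \lambda^{-1}\left( \bM_t (\muhat_t - \theta_t) + \bN_t \left( t\, \muhat_t - \int_0^t \theta_s\, ds \right) \right).
\end{align}
The point to stress is that only $\muhat_t$ at the current time enters, and not a smoothed estimate at earlier times. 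This holds because $\mustar$ is time-invariant, so the whole history contributes through the single random quantity $\mustar$, and the estimate of $y_t$ uses the current posterior mean.

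For the diagonal case, the system decouples coordinatewise as in \Cref{lem:original_objective_solution_scalar}. I would then substitute the closed forms $M_t = h(\tau)^2/D(\tau)$ and $N_t = h(\tau)/D(\tau)$, where $\tau = T - t$ and $D(\tau) = T^2 + g(\tau)/\lambda$. Multiplying through by $\lambda$ gives
\begin{align}
    \ustar_i(t) = \frac{h(\tau)}{\lambda T^2 + g(\tau)} \left( h(\tau)\left( \muhat_i(t) - \theta_t^{(i)} \right) + t\, \muhat_i(t) - \int_0^t \theta_s^{(i)}\, ds \right).
\end{align}
The case $\alpha_i = 0$ follows immediately from $h(\tau) = \tau$ and $g(\tau) = \tau^3/3$.

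The main obstacle is not conceptual but the bookkeeping in the $\alpha_i \neq 0$ case. I would normalize numerator and denominator by a common power of $\alpha_i$ and simplify $g(\tau)$ explicitly, checking the result against the expression in \Cref{thm:original_objective}. I would be careful here because the denominators stated in \Cref{thm:optimal_control_solution}, \Cref{thm:original_objective} and the corollary do not obviously agree. I would therefore re-derive $g$ directly from $\int_0^\tau h(s)^2\, ds$ and present whichever normalization that computation produces.

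A secondary technical point is justifying the separation principle here. The prior on $\mustar$ need not be Gaussian, so the filter need not be finite-dimensional. For the cited result it suffices that the dynamics are linear and the cost is quadratic, so the cost decomposes into the full-information cost plus an estimation-error term independent of $u$. I would note that this term is independent of $u$ because $\theta_t - \mustar$ under control $u$ differs from the uncontrolled process by an observation-measurable drift.
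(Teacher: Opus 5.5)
Your argument is the paper's. The paper's proof is the one line ``combine \Cref{thm:original_objective} with \Cref{thm:separation_principle}.'' Your computation $\ee[x_t\mid\theta_{s\le t}]=\theta_t-\muhat_t$, $\ee[y_t\mid\theta_{s\le t}]=\int_0^t\theta_s\,ds-t\muhat_t$ is exactly the step that one line leaves implicit. The matrix-form claim and the $\alpha_i=0$ case follow verbatim as you describe.

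You should resolve the hedge in your bookkeeping paragraph rather than leave it open: the inconsistency you suspect is real, and it is not a normalization issue.

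\textbf{The two earlier denominators agree.} Computing directly with $\tau=T-t$,
\[
\alpha_i g(\tau)=\frac{\tau}{\alpha_i}-\frac{2}{\alpha_i^2}\bigl(1-e^{-\alpha_i\tau}\bigr)+\frac{1-e^{-2\alpha_i\tau}}{2\alpha_i^2}.
\]
The paper's own display of $g$ in \Cref{lem:original_objective_solution_scalar} has two typos: a spurious factor $2$ in the denominator of its middle term, and $\alpha\lambda T$ in the lemma's last display where $\alpha\lambda T^2$ is meant. With the correct $g$, your formula reproduces exactly the denominator of \Cref{thm:original_objective}. \Cref{thm:optimal_control_solution} is the same expression with numerator and denominator multiplied by $\alpha_i$.

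\textbf{The corollary's denominator does not agree.} Its denominator differs from the correct one by
\[
\Bigl(-\tfrac{2}{\alpha_i^2}\bigl(1-e^{-\alpha_i\tau}\bigr)+\tfrac{1-e^{-2\alpha_i\tau}}{2\alpha_i^2}\Bigr)-\Bigl(-\tfrac{3}{2\alpha_i^2}\bigl(1-e^{-\alpha_i\tau}\bigr)\Bigr)=\frac{e^{-\alpha_i\tau}\bigl(1-e^{-\alpha_i\tau}\bigr)}{2\alpha_i^2},
\]
which is nonzero for every $\tau\in(0,\infty)$. The stated form is what you get by wrongly replacing $1-e^{-2\alpha_i\tau}$ with $1-e^{-\alpha_i\tau}$.

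\textbf{Consequence.} The $\alpha_i\neq 0$ display in the corollary cannot be proved as written. What your argument correctly proves is the version with the denominator of \Cref{thm:original_objective}. That version coincides with the stated one only in the regime $e^{-\alpha_i\tau}\approx 0$ used in \Cref{app:practical_algorithm}, so the downstream practical controller is unaffected.
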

\begin{proof}
    This follows immediately by combining \Cref{thm:original_objective} with \Cref{thm:separation_principle}.
\end{proof}

Thus, at least in theory, we have a complete answer to the question of how to control optimization algorithms in order to minimize the mean squared error of the returned iterate average as an estimator of $\mustar$.  Unfortunately, the above optimal controllers are not implementable in practice because they require computing $\muhat_t = \ee\left[ \mustar \mid \theta_s^u, s \leq t \right]$, even in the simplified diagonal setting where the coordinates decouple.  In the following subsection, we show how to derive a practical algorithm by approximating $\muhat_t$ with a simple estimator of $\mustar$ that can be computed from the trajectory of $\theta_t^u$ up to time $t$.

\subsection{Deriving a Practical Algorithm through Approximation}\label{app:practical_algorithm}

As remarked above, the optimal controllers derived in the previous subsections are not implementable in practice.  For the general setting, solving the Riccati equations is already highly nontrivial in high dimensions and certainly beyond reach at the scale of language models.  Even in the diagonal setting, the optimal controllers require computing $\muhat_t = \ee\left[ \mustar \mid \theta_s^u, s \leq t \right]$, which is not computable in closed form and would require solving a high-dimensional integral at every time step.  Moreover, the updates themselves are quite complicated.  We thus propose three simplifications:
\begin{enumerate}
    \item We will consider only the diagonal setting, where the coordinates decouple and we can solve for the optimal controller in closed form.
    \item We will approximate $\muhat_t = \frac 1t \int_0^t \theta_s^u d s$, which is a natural estimator of $\mustar$ given the trajectory of $\theta_t^u$ up to time $t$.
    \item We will approximate the optimal controller by its form when $t \ll T$, which is the regime we are in for most of the optimization trajectory.
\end{enumerate}
The first approximation allows for the closed-form solution to the optimal controller, which we derived above.  The second approximation is a natural one, as $\muhat_t$ becomes simple to estimate and is naturally aligned with the core motivation of this work: using iterate averaging to get better estimates of $\mustar$.  The third approximation is also natural and can be used to greatly simplify the form of the optimal controller.  In particular we will make the following approximations:
\begin{align}
    T -t \approx T \quad \text{and} \quad e^{- \alpha_i(T - t)} \approx 0,
\end{align}
where the second of these relies on the assumption that $T - t \gg \frac{1}{2 \sqrt{\alpha_i^2 + \nicefrac{1}{\lambda T}}}$.  Critically, by replacing $\muhat_t$ with the iterate average, the second term in \Cref{cor:optimal_control_unknown_mustar} drops out, which is a significant simplification and makes the resulting controller much more practical to implement.

With these approximations, we have that the optimal controller in \Cref{cor:optimal_control_unknown_mustar} is approximated by
\begin{align}
    \ustar_i(t) = \frac{1}{T \left(1 + \lambda \alpha_i^2 T\right)} \left( \muhat_i(t) - \theta_i(t) \right),
\end{align}
which is a simple and practical controller that can be empirically implemented.  In particular, we can write the above in vector form as
\begin{align}\label{eq:approx_optimal_control_unknown_mustar}
    \ustar(t) = \frac{1}{T} \cdot \left( \eye + \lambda \bA^2 T \right)^{-1} \left(\muhat(t) - \theta(t) \right).
\end{align}
This is the form of the controller used in our proposed algorithm, albeit with a discrete time approximation and additional simplifications.

\section{Proofs of Convergence Results}\label{app:convex_proofs}

In this section we provide rigorous proofs of the convergence results stated in \Cref{sec:theory}.  We begin with a proof of \Cref{thm:convex_convergence} in \Cref{app:convex_convergence}, before proceeding to prove our two propositions about the improvement over SGD in the quadratic setting in \Cref{app:improved_convergence_quadratic,app:arbitrarily_large_improvement}.

\subsection{Proof of Theorem \ref{thm:convex_convergence}}\label{app:convex_convergence}

We now prove that the update strategy given in \eqref{eq:stylized_update} converges to the minimum of $F$ when $F$ is convex.  In particular, we instantiate the update with a diagonal $\bC$ and a fixed EMA parameter $\beta$.  More precisely, we suppose
\begin{align}\label{eq:convex_update_params}
    \bC = \diag(c_1, \dots, c_d), \quad \text{where} \quad 0 \leq c_i \leq 1, \quad \text{and} \quad \beta \in (0, 1).
\end{align}
We denote by $\cmin = \min_{1 \leq i \leq d} c_i$ the smallest diagonal entry of $\bC$.
We will further make the following assumptions about the objective $F$ and the stochastic gradients $g_k$:
\begin{assumption}\label{ass:convexity}
    The objective $F: \rr^d \to \rr$ is convex and $G$-Lipschitz with minimum $\mustar$.
\end{assumption}
\begin{assumption}\label{ass:stochastic_gradients}
    At any iteration $k$, the gradient estimate $g_k \in \rr^d$ satisfies $\ee_{k-1}\left[ g_k \right] = \nabla F(\theta_k)$ and $\ee_{k-1}\left[ \norm{g_k - \nabla F(\theta_k)}^2 \right] \leq \sigma^2$ for some $\sigma > 0$, where $\ee_{k}\left[ \cdot \right]$ denotes expectation conditional on the filtration generated by $\theta_j, g_j$ for $j \leq k$.  We denote $V^2 = G^2 + \sigma^2$.
\end{assumption}
Recall that given parameter $\theta_k$, EMA parameter $\thetema[k]$, and stochastic gradient $g_k$, for learning rate $\eta_k$ we update
\begin{align}\label{eq:update}
    \theta_{k+1} = \theta_k - \eta_k \cdot g_k + \bC\left( \thetema[k] - \theta_k \right) \quad \text{and} \quad \thetema[k+1] = (1 - \beta) \cdot \thetema[k] + \beta \cdot \theta_{k+1}.
\end{align}

We now prove a more general result for arbitrary non-increasing learning rate schedules from which \Cref{thm:convex_convergence} will follow.
\begin{theorem}\label{thm:convex_convergence_general}
    Suppose that \Cref{ass:convexity,ass:stochastic_gradients} hold and that $\theta_k$ is updated as in \eqref{eq:update} for some non-increasing sequence $\{\eta_k\}_{k=0}^\infty$.  Furthermore, suppose that $\bC$ and $\beta$ are as in \eqref{eq:convex_update_params}.  Let
    \begin{align}
        \thetabar_T = \frac{\sum_{k  =1}^T \eta_k \cdot \theta_k}{\sum_{k  =1}^T \eta_k}
    \end{align}
    denote the average.  Let
    \begin{align}\label{eq:dc_kc}
        D_{\bC}^2 =  \cdot \left( \theta_1 - \mustar \right)^\top \left(  \eye + \nicefrac{1-\beta}{\beta} \cdot \bC\right) \left( \theta_1 - \mustar \right) \quad \text{and} \quad K_\bC = \max_{1 \leq i \leq d} \frac{c_i}{\beta + (1 - \beta) c_i}.
    \end{align}
    If $V$ is as in \Cref{ass:stochastic_gradients}, then
    \begin{align}
        \ee\left[ F(\thetabar_T) - F(\mustar) \right] \leq \frac{D_{\bC}^2 + \left( V^2 + \frac{2 G(1 - \beta) K_{\bC} V}{\beta + (1 - \beta) c_{\min}} \right) \cdot \sum_{k=1}^T \eta_k^2}{2 \cdot \sum_{k = 1}^T \eta_k}.
    \end{align}
\end{theorem}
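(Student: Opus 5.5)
The plan is to run the standard online-gradient-descent potential argument, but on a Lyapunov function that couples the live iterate with its EMA. Two extra ingredients are needed: a separate recursion showing that the gap between $\theta_k$ and $\thetema[k]$ is $O(\eta V)$, and a careful choice of weights so the EMA contribution telescopes.

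\textbf{Step 1 (rewrite the update).} I will write $\theta_{k+1} = y_k - \eta_k g_k$, where $y_k = (\eye - \bC)\theta_k + \bC\,\thetema[k]$. Since each $c_i \in [0,1]$, every coordinate of $y_k$ is a convex combination of $\theta_k$ and $\thetema[k]$. Expanding the square gives
\begin{align}
\norm{\theta_{k+1}-\mustar}^2 = \norm{y_k - \mustar}^2 - 2\eta_k \inprod{g_k}{\theta_k - \mustar} - 2\eta_k\inprod{g_k}{\bC(\thetema[k]-\theta_k)} + \eta_k^2\norm{g_k}^2 .
\end{align}
Coordinatewise convexity of the square gives $\norm{y_k-\mustar}^2 \le \sum_i (1-c_i)(\theta_i(k)-\mustar_i)^2 + c_i(\thetema[k]_i - \mustar_i)^2$. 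Likewise $(\thetema[k+1]_i-\mustar_i)^2 \le (1-\beta)(\thetema[k]_i-\mustar_i)^2 + \beta(\theta_i(k+1)-\mustar_i)^2$.

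\textbf{Step 2 (potential).} I will take
\begin{align}
\Phi_k = \sum_i a_i(\theta_i(k)-\mustar_i)^2 + b_i(\thetema[k]_i-\mustar_i)^2
\end{align}
and choose $a_i,b_i$ so that the two inequalities of Step 1 combine into $\Phi_{k+1} \le \Phi_k - 2\eta_k\inprod{g_k}{\theta_k-\mustar} + (\text{error terms})$.

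Matching coefficients suggests a ratio $b_i/a_i$ of order $\tfrac{1-\beta}{\beta}c_i$. This is exactly what produces $D_\bC^2 = (\theta_1-\mustar)^\top(\eye + \tfrac{1-\beta}{\beta}\bC)(\theta_1-\mustar)$ as the initial potential, using $\thetema[1]=\theta_1$. The error terms are $\eta_k^2\norm{g_k}^2$ and the cross term $-2\eta_k\inprod{g_k}{\bC(\thetema[k]-\theta_k)}$.

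For the main term, I take conditional expectations and use $\ee_{k-1}[g_k]=\nabla F(\theta_k)$ together with convexity: $\inprod{\nabla F(\theta_k)}{\theta_k-\mustar}\ge F(\theta_k)-F(\mustar)$. I also use $\ee\norm{g_k}^2 \le V^2$. Summing over $k$, telescoping $\Phi$ (dropping $\Phi_{T+1}\ge 0$), and applying Jensen to the $\eta$-weighted average $\thetabar_T$ then yields the bound, provided the cross term is controlled.

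\textbf{Step 3 (drift recursion; the main obstacle).} Let $d_k = \theta_k - \thetema[k]$. From the update,
\begin{align}
d_{k+1} = (1-\beta)\bigl(\theta_{k+1}-\thetema[k]\bigr) = (1-\beta)\bigl[(\eye-\bC)d_k - \eta_k g_k\bigr].
\end{align}
Coordinatewise this is a linear recursion with contraction factor $(1-\beta)(1-c_i) \le (1-\beta)(1-\cmin)$, started from $d_1 = 0$.

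Unrolling, using $\ee\norm{g_j}\le V$, and using that $\eta_j$ is non-increasing (which lets me pull out $\eta_k$ after reindexing), I expect
\begin{align}
\ee\norm{d_k} \le \frac{(1-\beta)\eta_k V}{\beta+(1-\beta)\cmin}.
\end{align}
I then bound the cross term by $2\eta_k\,\ee\bigl[\norm{g_k}\,\norm{\bC d_k}\bigr]$. The delicate points are two. First, the gradient term should be bounded by $G$ rather than $V$: I condition on $\cF_{k-1}$, under which $d_k$ is measurable, so that $g_k$ can be replaced by $\nabla F(\theta_k)$, whose norm is at most $G$. Second, the factor $\bC$ together with the potential weights should produce exactly $K_\bC = \max_i c_i/(\beta+(1-\beta)c_i)$. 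Getting these constants to line up with the chosen $a_i, b_i$ is where I expect most of the work.

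The cross term then contributes $\tfrac{2G(1-\beta)K_\bC V}{\beta+(1-\beta)\cmin}\eta_k^2$. Summing everything and dividing by $2\sum_k\eta_k$ gives the stated bound.
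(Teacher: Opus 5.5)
\textbf{The gap is in Step 3.} Unrolling $d_{k+1}=(1-\beta)\bigl[(\eye-\bC)d_k-\eta_k g_k\bigr]$ from $d_1=0$ gives
\[
\ee\norm{d_k}\le(1-\beta)V\sum_{j=1}^{k-1}\rho^{\,k-1-j}\eta_j,\qquad \rho=(1-\beta)(1-\cmin).
\]
Every step size here has index $j<k$. For a non-increasing schedule this means $\eta_j\ge\eta_k$, so monotonicity runs the wrong way for pulling out $\eta_k$. Concretely, take $\eta_1=1$ and $\eta_k=\epsilon$ for $k\ge 2$. Then $d_2=-(1-\beta)g_1$, so $\ee\norm{d_2}=(1-\beta)\ee\norm{g_1}\ge(1-\beta)\norm{\nabla F(\theta_1)}$, which does not shrink as $\epsilon\to0$. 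This contradicts your claimed bound $\ee\norm{d_k}\le(1-\beta)\eta_kV/(\beta+(1-\beta)\cmin)$. So the per-step assertion that the cross term costs $O(\eta_k^2)$ is false.

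The telescoped inequality only needs the weighted sum, and there monotonicity helps. Bound $\eta_k\eta_j\le\eta_j^2$ for $j<k$; alternatively $\eta_k\eta_j\le\tfrac12(\eta_k^2+\eta_j^2)$ works with no monotonicity at all. Then swap the order of summation and sum the geometric series:
\[
\sum_{k=1}^T\eta_k\,\ee\norm{d_k}\le(1-\beta)V\sum_{j=1}^{T-1}\eta_j^2\sum_{k=j+1}^{T}\rho^{\,k-1-j}\le\frac{(1-\beta)V}{\beta+(1-\beta)\cmin}\sum_{k=1}^T\eta_k^2 .
\]
This is exactly how the paper handles the same recursion (its $r_k$ is your $d_k$).

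\textbf{Smaller corrections.}

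\emph{Order of operations in the cross term.} Take the conditional expectation of $\inprod{g_k}{\bC d_k}$ \emph{before} applying Cauchy--Schwarz. If you first pass to $\ee\bigl[\norm{g_k}\norm{\bC d_k}\bigr]$ as written, conditioning only yields $V$, not $G$.

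\emph{The Step 2 weights.} These are forced to be $a_i=1-c_i$ and $b_i=c_i/\beta$. Taking $b_i/a_i=\tfrac{1-\beta}{\beta}c_i$ literally breaks the coefficient matching on the $\thetema[k]$ term. With the correct choice, $a_i+\beta b_i=1$, so after the EMA convexity bound the $\theta_{k+1}$ terms recombine into exactly $\norm{\theta_{k+1}-\mustar}^2$. The coefficients on $(\theta_i(k)-\mustar_i)^2$ and $\bigl((\thetema[k])_i-\mustar_i\bigr)^2$ then come out to exactly $a_i$ and $b_i$, giving
\[
\Phi_{k+1}\le\Phi_k-2\eta_k\inprod{g_k}{\theta_k-\mustar}+2\eta_k\inprod{g_k}{\bC d_k}+\eta_k^2\norm{g_k}^2,\qquad \Phi_1=D_{\bC}^2 .
\]
The identity $\Phi_1=D_{\bC}^2$ holds since $a_i+b_i=1+\tfrac{1-\beta}{\beta}c_i$ and $\thetema[1]=\theta_1$.

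\emph{The constant $K_{\bC}$.} You need not force it to appear. Your cross term carries $\normop{\bC}=\max_i c_i$, and since $\beta+(1-\beta)c_i\le 1$ we have $\max_i c_i\le K_{\bC}$. So you obtain the theorem with a slightly better constant.

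\textbf{Relation to the paper's proof.} With these repairs your argument is correct, but it takes a different route.

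\emph{The paper's route.} It sets $x_k=\thetema[k]+\beta\bB^{-1}(\eye-\bC)r_k$ with $\bB=\beta\eye+(1-\beta)\bC$. Under this change of variables the method is exactly preconditioned SGD, $x_{k+1}=x_k-\beta\eta_k\bB^{-1}g_k$, and the potential $\beta^{-1}(x_k-\mustar)^\top\bB(x_k-\mustar)$ telescopes with no slack. Because $x_k=\theta_k-\bB^{-1}\bC r_k$, its error term carries $\normop{\bB^{-1}\bC}=K_{\bC}$.

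\emph{How the two potentials relate.} Coordinatewise, $x_k$ is a convex combination of $\theta_k$ and $\thetema[k]$ with weights proportional to $\beta(1-c_i)$ and $c_i$. By Jensen, your separable $\Phi_k$ therefore dominates the paper's potential, with equality at $k=1$.

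\emph{Trade-off.} Your route replaces the exact change of variables with two elementary convexity inequalities and gets the sharper constant. The paper's route exhibits the scheme as SGD on an auxiliary sequence. The drift recursion is common to both.
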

\begin{proof}
    We introduce the notation
    \begin{align}
        r_k = \theta_k - \thetema[k].
    \end{align}
    From \eqref{eq:update}, we have
    \begin{align}
        \theta_{k+1} &= \thetema[k] + \left( \eye - \bC \right) r_k - \eta_k \cdot g_k.
    \end{align}
    On the other hand, by the definition of the EMA update, we have
    \begin{align}
        \thetema[k+1] = \thetema[k] + \beta \left( (\eye - \bC) r_k - \eta_k \cdot g_k \right).
    \end{align}
    Combining the preceding two displays, we have
    \begin{align}\label{eq:r_update}
        r_{k+1} = (1 - \beta)\left( \left( \eye - \bC \right) r_k - \eta_k \cdot g_k \right).
    \end{align}
    We now introduce the notation\footnote{This will never be confused with the $\bB$ in the definition of LQG control problems in \eqref{eq:lqg-dynamics}, as this notation is only used in this section.}
    \begin{align}\label{eq:B_def}
        \bB = \beta \cdot \eye + (1 - \beta) \cdot \bC.
    \end{align}
    Now, letting
    \begin{align}\label{eq:x_def}
        x_k = \thetema[k] + \beta \bB^{-1} (\eye - \bC) r_k,
    \end{align}
    we claim that
    \begin{align}\label{eq:x_update}
        x_{k+1} = x_k - \beta \eta_k \cdot \bB^{-1} g_k.
    \end{align}
    To see this, note that
    \begin{align}
        x_{k+1} &= \thetema[k+1] + \beta \bB^{-1}(\eye - \bC) r_{k+1} \\
        &= \thetema[k] + \left( \beta \cdot \eye + (1 - \beta) \beta \bB^{-1} (\eye - \bC) \right)\left( (\eye - \bC) r_k - \eta_k \cdot g_k \right) \\
        &= \thetema[k] + \beta \cdot \bB^{-1} (\eye - \bC) r_k - \beta \eta_k \cdot \bB^{-1} g_k \\
        &= x_k - \beta \eta_k \cdot \bB^{-1} g_k.
     \end{align} 

     We now claim that
     \begin{align}\label{eq:gk_bound}
        \ee_k\left[ g_k^\top \bB^{-1} g_k \right] \leq \beta^{-1} \cdot V^2.
     \end{align}
     To see this, note first that because $\bC$ is diagonal, it holds that $\bB^{-1}$ is diagonal with entries
     \begin{align}
        0 \leq b_i = \frac{1}{\beta + (1 - \beta) c_i} \leq \beta^{-1}.
     \end{align}
     Thus it holds that
     \begin{align}
        g_k^\top \bB^{-1} g_k \leq \beta^{-1} \norm{g_k}^2.
     \end{align}
    Thus, \eqref{eq:gk_bound} then follows by \Cref{ass:stochastic_gradients}.

     We now use the convexity of $F$ to bound control  the instantatneous suboptimality of $\theta_k$ by a function of $x_k$.  First, note that
     \begin{align}
        \eye - \beta \bB^{-1} (\eye - \bC) = \left( \beta \eye + (1 - \beta) \bC \right)^{-1} \bC = \bB^{-1} \bC.
     \end{align}
     Thus,
     \begin{align}\label{eq:beta_B_bound}
        \normop{\eye - \beta \bB^{-1} (\eye - \bC)} = \normop{\bB^{-1} \bC} = K_{\bC}.
     \end{align}
     Now, using \eqref{eq:x_def}, we see that
     \begin{align}
        \inprod{\nabla F(\theta_k)}{x_k - \mustar} &= \inprod{\nabla F(\theta_k)}{\theta_k - \mustar} - \inprod{\nabla F(\theta_k)}{\left( \eye - \beta \bB^{-1}(\eye - \bC) r_k \right)} \\
        &\geq F(\theta_k) - F(\mustar) - \inprod{\nabla F(\theta_k)}{\left( \eye - \beta \bB^{-1}(\eye - \bC) r_k \right)} \\
        &\geq F(\theta_k) - F(\mustar) - G K_{\bC} \norm{r_k},
     \end{align}
     where the first inequality follows from convexity of $F$ and the second inequality follows from Cauchy-Schwarz coupled with the $G$-Lipschitzness of $F$ and \eqref{eq:beta_B_bound}.

     Now, expanding the square, we see that
     \begin{align}
        \ee_k\left[ \beta^{-1} \left( x_{k+1} - \mustar \right)^\top  \bB \left( x_{k+1} - \mustar \right) \right] &= \ee_k\left[ \beta^{-1} \left( x_k  - \mustar - \beta \eta_k \cdot \bB^{-1} g_k \right)^\top \bB \left( x_k  - \mustar - \beta \eta_k \cdot \bB^{-1} g_k \right) \right] \\
        &= \beta^{-1} \left( x_k - \mustar \right)^\top \bB \left( x_k - \mustar \right) \\
        &\quad - 2 \eta_k \cdot \inprod{\nabla F(\theta_k)}{x_k - \mustar} + \beta \eta_k^2 \cdot \ee_k\left[ g_k^\top \bB^{-1} g_k \right] \\
        &\leq \beta^{-1} \left( x_k - \mustar \right)^\top \bB \left( x_k - \mustar \right) \\
        &\quad - 2 \eta_k \cdot \inprod{\nabla F(\theta_k)}{x_k - \mustar} + \beta \eta_k^2 \cdot V^2.
     \end{align}
     Plugging this into the preceding display and rearranging, we see that
     \begin{align}
        2 \eta_k \cdot \ee\left[ F(\theta_k) - F(\mustar) \right] &\leq \ee\left[  \beta^{-1} \left( x_k - \mustar \right)^\top \bB \left( x_k - \mustar \right) \right] - \ee\left[ \beta^{-1} \left( x_{k+1} - \mustar \right)^\top  \bB \left( x_{k+1} - \mustar \right) \right] \\
        &\quad + 2 \eta_k G K_{\bC} \cdot \ee\left[ \norm{r_k} \right] +  V^2 \cdot \eta_k^2.
     \end{align}
     Now, summing over $k$ and telescoping, we see that
     \begin{align}
        2 \cdot \sum_{k  =1}^T \eta_k \cdot \ee\left[ F(\theta_k) - F(\mustar) \right] &\leq \beta^{-1} \cdot \ee\left[ \left( x_1 - \mustar \right)^\top \bB \left( x_1 - \mustar \right) \right] \\
        &\quad + V^2 \cdot \sum_{k  =1}^T \eta_k^2 + 2 G K_{\bC} \cdot \sum_{k=1}^T \eta_k \cdot \ee\left[ \norm{r_k} \right].
     \end{align}
     Because $\theta_1 = \thetema[1]$ we know that $r_1 = 0$ and thus $x_1 = \theta_1$.  Thus the first term on the right hand side can be bounded by $D_{\bC}^2$ by the definition of $D_{\bC}^2$ in \eqref{eq:dc_kc}.  For the left hand side, observe that by convexity of $F$ we have that
     \begin{align}
        2 \cdot \sum_{k  =1}^T \eta_k \cdot \ee\left[ F(\theta_k) - F(\mustar) \right] \geq \left( 2 \cdot \sum_{k  =1}^T \eta_k \right) \cdot \ee\left[ F(\thetabar_T) - F(\mustar) \right].
     \end{align}
     Thus, all that remains is to bound the final term on the right hand side.  Applying the convexity of the norm to \eqref{eq:r_update}, we have that
     \begin{align}
        \norm{r_{k+1}} \leq (1 - \beta) \cdot \normop{\eye - \bC} \cdot \norm{r_k} + (1 - \beta) \cdot \eta_k \cdot \norm{g_k}.
     \end{align} 
     By the construction of $\bC$, we know that $\normop{\eye - \bC} = 1 - \cmin$.  Thus,
     \begin{align}
        \ee\left[ \norm{r_{k+1}} \right] \leq (1 - \beta)(1 - \cmin) \cdot \ee\left[ \norm{r_k} \right] + (1 - \beta) \cdot \eta_k \cdot V.
     \end{align}
     By induction, it thus holds that
     \begin{align}
        \ee\left\{ \norm{r_k} \right\} \leq (1 - \beta) V \sum_{j=0}^{k-1} \left( (1 - \beta)(1 - \cmin) \right)^j \cdot \eta_{k-1-j}.
     \end{align}
     Summing, and using the fact that $\eta_k$ is non-increasing, we then have that
     \begin{align}
        \sum_{k  = 1}^T \eta_k \cdot \ee\left[ \norm{r_k} \right] &\leq (1 - \beta) V \cdot \sum_{k = 1}^{T - 1}  \sum_{j = 0}^{k-2} \eta_{k-1-j}^2\left( (1 - \beta)(1 - \cmin) \right)^j \\
        &\leq \frac{(1 - \beta) V}{1 - (1 - \beta)(1 - \cmin)} \cdot \sum_{k=1}^T \eta_k^2.
     \end{align}
     The result concludes by plugging this into the preceding display and rearranging.
\end{proof}
We now prove \Cref{thm:convex_convergence} as a corollary of \Cref{thm:convex_convergence_general}.

We do this by stating a more explicit convergence guarantee for the special case of a fixed learning rate.

\begin{corollary}\label{cor:convex_convergence_fixed_lr}
    Suppose that \Cref{ass:convexity,ass:stochastic_gradients} hold and $\theta_k$ is updated as in \eqref{eq:update} for $\eta_k = T^{-\nicefrac 12} \cdot \eta$ for some fixed $\eta$.  Then,
    \begin{align}
        \thetabar_T = \frac{1}{T} \sum_{k=1}^T \theta_k
    \end{align}
    and
    \begin{align}
        \ee\left[ F(\thetabar_T) - F(\mustar) \right] \leq \frac{D_{\bC}^2 + \eta^2 \left( V^2 + \frac{2 G (1 - \beta) K_{\bC} V}{\beta + (1 - \beta)\cmin} \right)}{2 \eta \sqrt{T}}.
    \end{align}
    In particular, for the optimal choice of $\eta$,
    \begin{align}
        \ee\left[ F(\thetabar_T) - F(\mustar) \right] \leq \frac{D_{\bC} \sqrt{V^2 + \frac{2 G (1 - \beta) K_{\bC} V}{\beta + (1 - \beta)\cmin} }}{\sqrt{T}} \leq  \frac{\sqrt{2 - \beta}}{\beta} \cdot \frac{\norm{\theta_1 - \mustar} V}{\sqrt{T}}
    \end{align}
\end{corollary}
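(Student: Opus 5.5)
The plan is to specialize \Cref{thm:convex_convergence_general} to the constant schedule $\eta_k = \eta/\sqrt{T}$, then optimize over $\eta$, and finally bound the $\bC$-dependent constants uniformly over $\bC$. With a constant schedule the weighted average $\thetabar_T = \sum_k \eta_k \theta_k / \sum_k \eta_k$ is exactly the flat average $T^{-1}\sum_{k=1}^T \theta_k$, which gives the first claim. Moreover $\sum_{k=1}^T \eta_k = \eta\sqrt{T}$ and $\sum_{k=1}^T \eta_k^2 = \eta^2$. Substituting these into the bound of \Cref{thm:convex_convergence_general} gives the first display immediately:
\begin{align}
    \frac{D_{\bC}^2 + \eta^2 W}{2\eta\sqrt{T}}, \quad \text{where} \quad W = V^2 + \frac{2G(1-\beta)K_{\bC}V}{\beta + (1-\beta)\cmin}.
\end{align}

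Next we minimize $D_{\bC}^2/\eta + \eta W$ over $\eta > 0$. The minimizer is $\eta = D_{\bC}/\sqrt{W}$, with value $2 D_{\bC}\sqrt{W}$. This yields the bound $D_{\bC}\sqrt{W}/\sqrt{T}$.

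For the final inequality we bound $D_{\bC}$ and $W$ separately.

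\textbf{Bounding $D_{\bC}$.} Since $0 \le c_i \le 1$, the matrix $\eye + \frac{1-\beta}{\beta}\bC$ has operator norm at most $1 + \frac{1-\beta}{\beta} = \frac{1}{\beta}$. Hence $D_{\bC}^2 \le \norm{\theta_1 - \mustar}^2/\beta$.

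\textbf{Bounding $W$.} The map $c \mapsto c/(\beta + (1-\beta)c)$ is increasing, so $K_{\bC} \le 1$. Also $\beta + (1-\beta)\cmin \ge \beta$. Using $G \le V$, we get
\begin{align}
    W \le V^2 + \frac{2(1-\beta)V^2}{\beta} = \frac{(2-\beta)V^2}{\beta}.
\end{align}

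Multiplying the two bounds gives $D_{\bC}\sqrt{W} \le \frac{\sqrt{2-\beta}}{\beta}\norm{\theta_1 - \mustar}V$, which is the claim.

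This corollary then yields \Cref{thm:convex_convergence}, since $V = \sqrt{G^2 + \sigma^2}$.

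The main thing to watch is the bookkeeping rather than any real obstacle. First, "optimal $\eta$" depends on $D_{\bC}$, which is fine because the statement is for an optimally tuned rate. Second, we should check that $W$ is applied uniformly, so that the worst case $K_{\bC} = 1$ and $\cmin = 0$ can occur simultaneously only as an upper bound. Third, the index conventions ($\theta_1$ versus $\theta_0$, sums from $1$) must match \Cref{thm:convex_convergence_general}.
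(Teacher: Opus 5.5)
Your proposal is correct and follows the paper's own proof. You substitute the constant schedule into \Cref{thm:convex_convergence_general}, optimize over $\eta$, and then apply the bounds $D_{\bC} \le \norm{\theta_1 - \mustar}/\sqrt{\beta}$, $K_{\bC} \le 1$, and $\beta + (1-\beta)\cmin \ge \beta$, with the additional step $G \le V$ (from $V^2 = G^2 + \sigma^2$) stated explicitly, which the paper uses silently in the final inequality.
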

\begin{proof}
    The first statement on $\thetabar_T$ is immediate from the definition.  The final inequaltiy for the tuned bound follows from the fact that
    \begin{align}
        D_{\bC} \leq \frac{\norm{\theta_1 - \mustar}}{\sqrt{\beta}}, \quad K_{\bC} \leq 1, \quad \text{and} \quad \beta + (1 - \beta) \cmin \geq \beta.
    \end{align}
    The main bound for arbitrary fixed $\eta$ then follows by plugging the fixed learning rate schedule into \Cref{thm:convex_convergence_general}.
\end{proof}

Note that \Cref{thm:convex_convergence} follows immediately from \Cref{cor:convex_convergence_fixed_lr}.

\subsection{Proof of Proposition \ref{prop:improved_convergence_quadratic}}\label{app:improved_convergence_quadratic}

We now consider the special case of a diagonal quadratic objective, where we can obtain a more explicit convergence guarantee.  Suppose that
\begin{align}\label{eq:quadratic_objective}
    F(\theta) = \frac 12 (\theta - \mustar)^\top \bA (\theta - \mustar), \quad \text{where} \quad \bA = \diag(\alpha_1, \dots, \alpha_d) \quad \text{with} \quad \alpha_i > 0 .
\end{align}
Suppose that at each iteration we have access to a stochastic gradient of the form
\begin{align}\label{eq:quadratic_stochastic_gradient}
    g_k = \bA (\theta_k - \mustar) + \xi_k, \quad \text{where} \quad \ee\left[ \xi_k \right] = 0 \quad \text{and} \quad \Cov\left( \xi_k \right) = \diag\left( \sigma_1^2, \dots, \sigma_d^2 \right).
\end{align}

We will consider the update in \eqref{eq:update} for an arbitrary diagonal $\bC$ with nonnegative entries and a fixed EMA parameter $\beta$ and learning rate $\eta$.  We make the following assumption on the learning rate, which is standard in the analysis of stochastic optimization algorithms for quadratic objectives \citep{zhang2019algorithmic} and ensures that the algorithm is stable.
\begin{assumption}\label{ass:stable_learning}
    We assume that the learning rate $\eta$ satisfies $0 < \eta \alpha_i < 1$ for all $1 \leq i \leq d$.
\end{assumption}
Note that in the special case the $\bC = 0$, we recover the dynamics of an EMA on SGD precisely.
\begin{proposition}\label{prop:improved_convergence_quadratic_tight}
    Suppose that $F$ is as in \eqref{eq:quadratic_objective} and gradient estimates $g_k$ are as in \eqref{eq:quadratic_stochastic_gradient}.  Let $\thetahat_T$ denote the update at time $T$ given by \eqref{eq:update} for fixed $\eta$ satisfying \Cref{ass:stable_learning}.  Let $\thetasgd$ evolve such that $\thetasgd_{k+1} = \thetasgd_k - \eta \cdot g_k$ and let $\thetema[k+1] = (1 - \beta) \thetema[k] + \beta \theta_k$.  Then there exists some $\bC$ such that
    \begin{align}
        \lim_{T \to \infty} \ee\left[ \norm{\thetahat_T - \mustar}^2 \right] < \lim_{T \to \infty} \ee\left[ \norm{\thetema[T] - \mustar}^2 \right] < \lim_{T \to \infty} \ee\left[ \norm{\thetasgd_T - \mustar}^2 \right].
    \end{align}
    Thus, in particular,
    \begin{align}
        \lim_{T \to \infty} \ee\left[ F(\thetahat_T) \right] < \lim_{T \to \infty} \ee\left[ F(\thetema[T]) \right] < \lim_{T \to \infty} \ee\left[ F(\thetasgd_T) \right].
    \end{align}
\end{proposition}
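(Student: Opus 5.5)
Since $\bA$, $\bC$ and $\Cov(\xi_k)$ are all diagonal, the recursion \eqref{eq:update} decouples across coordinates. I will therefore prove both strict inequalities for a single coordinate and then sum over $i$. I take the $\bC$ in the statement to be $\bC = \diag(c,\dots,c)$ with $c>0$ small.

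Fix a coordinate with curvature $\alpha$ and noise variance $\sigma^2>0$. Write $a = 1-\eta\alpha \in (0,1)$ (by \Cref{ass:stable_learning}), $b = 1-\beta$, and use centered variables $x_k = \theta_k - \mustar$ and $e_k = \thetema[k] - \mustar$. The dynamics are
\[
x_{k+1} = (a-c)x_k + c\,e_k - \eta\xi_k, \qquad e_{k+1} = b\,e_k + \beta x_{k+1}.
\]
I will use the second relation to substitute $x_k = (e_k - b e_{k-1})/\beta$ into the first. This shows that $e_k$ is an AR($2$) process,
\[
e_{k+1} = \phi_1 e_k + \phi_2 e_{k-1} + \epsilon_k,\qquad \phi_1 = a+b-bc,\quad \phi_2 = -b(a-c),\quad \epsilon_k = -\beta\eta\xi_k .
\]
At $c=0$ the characteristic polynomial factors as $(1-az)(1-bz)$, whose roots $1/a$ and $1/b$ lie outside the unit disk. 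By continuity the process stays causal for all small $c\ge 0$, so by \Cref{prop:causality} and \Cref{lem:ar2-limiting-variance} its limiting variance is $\gamma(0)$. Solving the three Yule--Walker equations of \Cref{thm:yule-walker} gives the closed form
\[
\gamma_c(0) = \frac{(1-\phi_2)\,\beta^2\eta^2\sigma^2}{(1+\phi_2)\big((1-\phi_2)^2-\phi_1^2\big)} .
\]

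For the second inequality (EMA beats SGD), I set $c=0$. Then $x_k$ is AR($1$) with limiting variance $v = \eta^2\sigma^2/(1-a^2)$, and the formula above simplifies to
\[
\gamma_0(0) = \frac{\beta}{2-\beta}\cdot\frac{1+ab}{1-ab}\, v .
\]
The inequality $\gamma_0(0) < v$ is equivalent to $\beta(1+ab) < (2-\beta)(1-ab)$, which rearranges to $2ab < 2b$, i.e.\ $a<1$. This holds because $\eta\alpha > 0$.

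For the first inequality (PACE beats EMA), I will show $\frac{d}{dc}\gamma_c(0)\big|_{c=0} < 0$. Then some small $c>0$ gives a strictly smaller limiting variance in every coordinate, and summing over coordinates gives the claim. To compute the derivative, I use $\partial_c\phi_1 = -b$ and $\partial_c\phi_2 = b$ and differentiate the logarithm of $\gamma_c(0)$. This gives
\[
\partial_c \log\gamma = -\frac{b}{1-\phi_2} - \frac{b}{1+\phi_2} + \frac{2b(1-\phi_2)-2b\phi_1}{(1-\phi_2)^2-\phi_1^2},
\]
evaluated at $\phi_1 = a+b$ and $\phi_2 = -ab$. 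Here $(1-\phi_2)^2-\phi_1^2 = (1-a)(1-b)(1+a)(1+b)$ and $1-\phi_2-\phi_1 = (1-a)(1-b)$, so the last term equals $2b/\big((1+a)(1+b)\big)$. The expression therefore becomes
\[
b\left[\frac{2}{(1+a)(1+b)} - \frac{1}{1+ab} - \frac{1}{1-ab}\right].
\]
Showing this is negative is the step I expect to need the most care. Clearing denominators, it reduces to $(1-a^2b^2) < (1+a)(1+b)$, which holds because $a,b\in(0,1)$. I will double-check this sign, and if needed confirm it by directly expanding the $2\times 2$ stationary Lyapunov equation for $(x_k,e_k)$.

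Finally, because $F$ is a quadratic with positive weights $\alpha_i$, applying the coordinatewise strict variance inequalities with these weights yields the corresponding statement for $F$.
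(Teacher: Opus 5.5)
Your proposal is correct and follows essentially the same route as the paper: coordinatewise reduction, an AR($2$) representation of the EMA error, the Yule--Walker stationary variance, a strictly negative derivative in $c$ at $c=0$ (your sign check is right, since $1-a^2b^2<1<(1+a)(1+b)$), and a direct comparison with the AR($1$) variance of SGD; after the change of variables $a\mapsto 1-\eta\alpha$, $b=1-\beta$, your $\gamma_c(0)$ and its derivative agree exactly with the paper's expressions for $\gamma_0(c)$ and $\gamma_0'(0)$. Your handling of the $F$-inequality is slightly more careful than the paper's remark that it follows immediately from the norm inequality: you prove strict coordinatewise inequalities using one common small $c$ for all coordinates, then weight them by $\alpha_i>0$.
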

\begin{proof}
    Note first that we may set $\mustar = 0$ without loss of generality; if $\mustar \neq 0$, we can simply shift the objective by $\mustar$ to arrive at the same setting and the same convergence guarantee. Second, note that the ultimate statement follows immediately from the penultimate statement by the definition of $F$ in \eqref{eq:quadratic_objective}.  Third, note that due to the diagonal structure of the objective and $\bC$, the dynamics of each coordinate are decoupled, and thus it suffices to prove the result for the one-dimensional case.  Thus, we will now assume that $d = 1$ and write $\alpha = \alpha_1$, $\sigma^2 = \sigma_1^2$, and $c = c_1$.  For the sake of notational convenience, let $a = \eta \cdot \alpha$.  The update then becomes
    \begin{align}\label{eq:decoupled_update}
        \theta_{k+1} = (1 - a - c) \theta_k + c \thetahat_k 0 - \eta \xi_k \quad \text{and} \quad \thetahat_{k+1} = (1 - \beta) \thetahat_k + \beta \theta_{k+1}.
    \end{align}
    We may write $\theta_t$ as a function of $\thetahat$ through the second equation in \eqref{eq:decoupled_update} and, plugging this into the first equation and rearranging, we get
    \begin{align}\label{eq:ar_update}
        \thetahat_{k+1} = \left( 2 - a - \beta - c(1 - \beta) \right) \thetahat_k - \left( 1 - \beta \right)(1 - a - c) \thetahat_{k-1} - \beta \eta \xi_k.
    \end{align}
    We may rewrite \eqref{eq:ar_update} as an explicity AR(2) process with noise by observing that
    \begin{align}
        \begin{bmatrix}
            \thetahat_{k+1} \\
            \thetahat_k
        \end{bmatrix} = \begin{bmatrix}
            2 - a - \beta - c(1 - \beta) & -(1 - \beta)(1 - a - c) \\
            1 & 0
        \end{bmatrix} \begin{bmatrix}
            \thetahat_k \\
            \thetahat_{k-1}
        \end{bmatrix} + \begin{bmatrix}
            -\beta \eta \xi_k \\
            0
        \end{bmatrix}.
    \end{align}
    An elementary shows that the eigenvalues of the transition matrix at $c=0$ are $1 - \beta$ and $1 - a$, which are both in the unit disk for sufficiently small $c$ by continuity.  Letting $\gamma_j(c) = \ee\left[ \thetahat_k \thetahat_{k - j} \right]$ denote the autocovariance of the process, the Yule-Walker equations for this AR(2) process \Cref{thm:yule-walker} are given by
    \begin{align}
        \gamma_1(c) &= \left( 2 - a - \beta - c(1 - \beta) \right) \gamma_0(c) - \left( 1 - \beta \right)(1 - a - c) \gamma_1(c) \\
        \gamma_2(c) &= \left( 2 - a - \beta - c(1 - \beta) \right) \gamma_1(c) - \left( 1 - \beta \right)(1 - a - c) \gamma_0(c) \\
        \gamma_0(c) &= \left( 2 - a - \beta - c(1 - \beta) \right) \gamma_1(c) - \left( 1 - \beta \right)(1 - a - c) \gamma_2(c) + \beta^2 \eta^2 \sigma^2.
    \end{align}
    Solving for $\gamma_0(c)$ yields
    \begin{align}\label{eq:gamma_0}
        \gamma_0(c) = \frac{\beta \eta^2 \sigma^2 \left( 2 - a - \beta + a \beta - c (1 - \beta) \right)}{a \left( a + \beta - a \beta + c (1 - \beta) \right)\left( 4 - 2a - 2 \beta + a \beta - 2 c (1 - \beta) \right)}.
    \end{align}
    By \Cref{lem:ar2-limiting-variance}, it holds that
    \begin{align}
        \lim_{T \to \infty} \ee\left[ \norm{\thetahat_T - \mustar}^2 \right] = \gamma_0(c).
    \end{align}
    Taking the derivative of $\gamma_0(c)$ with respect to $c$ at $c = 0$, we see that
    \begin{align}
        \gamma_0'(0) = - \frac{2 \beta(1 - \beta) \eta^2 \sigma^2 \left( (1 - \beta)^2 (1 - a)^2 + (1 - \beta)(2 - a) + (1 - a) \right)}{a(2-a)^2(2-\beta)^2(a + \beta - a \beta)^2} < 0.
    \end{align}
    Thus it holds that for sufficiently small $c > 0$,
    \begin{align}
        \lim_{T \to \infty} \ee\left[ \norm{\thetahat_T - \mustar}^2 \right] = \gamma_0(c) < \gamma_0(0) = \lim_{T \to \infty} \ee\left[ \norm{\thetema[T] - \mustar}^2 \right].
    \end{align}
    On the other hand, because
    \begin{align}
        \thetasgd_{k+1} = (1 - a) \thetasgd_k - \eta \xi_k,
    \end{align}
    it holds that
    \begin{align}
        \lim_{T \to \infty} \ee\left[ \norm{\thetasgd_T - \mustar}^2 \right] = \frac{\eta^2 \sigma^2}{1 - (1 - a)^2} = \frac{\eta^2 \sigma^2}{a(2 - a)} > \gamma_0(0),
    \end{align}
    where the inequality follows immediately from an elementary computation.
\end{proof}

\subsection{Proof of Proposition \ref{prop:arbitrarily_large_improvement}}\label{app:arbitrarily_large_improvement}

We now further specialize applying \eqref{eq:stylized_update} to the problem outlined \eqref{eq:quadratic_objective} and \eqref{eq:quadratic_stochastic_gradient} in the setting where $\bC = \eye$.  We state a more explicit convergence guarantee in this setting, which will allow us to show that the improvement over EMA on SGD can be arbitrarily large.
\begin{proposition}
    Suppose that $F$ is as in \eqref{eq:quadratic_objective} and gradient estimates $g_k$ are as in \eqref{eq:quadratic_stochastic_gradient}.  Let $\bA = \diag(\alpha_1, \dots, \alpha_d)$ let $\eta, \beta$ such that
    \begin{align}
        \beta < \eta \cdot \alpha_i < \nicefrac 18
    \end{align}
    for all $1 \leq i \leq d$.  Let $\thetahat_T$ denote the update at time $T$ given by \eqref{eq:update} for $\bC = \eye$ and fixed $\eta$.  Let $\thetasgd$ evolve such that $\thetasgd_{k+1} = \thetasgd_k - \eta \cdot g_k$ and let $\thetema[k+1] = (1 - \beta) \thetema[k] + \beta \theta_k$.  Then
    \begin{align}
        \lim_{T \to \infty}\ee\left[ \norm{\thetahat_T - \mustar}^2 \right] \leq 8 \lim_{T \to \infty} \eta \cdot \ee\left[ \norm{\bA^{\nicefrac 12}\left( \thetasgd_T - \mustar \right)}^2 \right].
    \end{align}
\end{proposition}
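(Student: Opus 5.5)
The plan is to reuse the scalar AR(2) reduction from the proof of \Cref{prop:improved_convergence_quadratic_tight}, now evaluated at $c = 1$ rather than for small $c$. As there, I will set $\mustar = 0$ without loss of generality. Since $\bA$, $\bC = \eye$ and $\Cov(\xi_k)$ are all diagonal, the coordinates decouple, and both sides of the claimed inequality are sums over coordinates. It therefore suffices to prove, for each $i$, the scalar inequality
\[
\lim_{T\to\infty}\ee[\thetahat_i(T)^2] \leq 8\,\eta\alpha_i \lim_{T\to\infty}\ee[\thetasgd_i(T)^2],
\]
and then sum over $i$. Fix a coordinate and write $a = \eta\alpha$ and $\sigma^2 = \sigma_i^2$. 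The hypothesis then reads $\beta < a < \nicefrac18$.

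First I will specialize \eqref{eq:ar_update} to $c = 1$. The AR(2) coefficients become $\phi_1 = 1 - a$ and $\phi_2 = a(1-\beta)$, with noise $-\beta\eta\xi_k$. The earlier proof only guaranteed stability for small $c$, so causality must be checked directly before \Cref{lem:ar2-limiting-variance} can be invoked. I will check it through the characteristic polynomial of the companion matrix, $p(\lambda) = \lambda^2 - (1-a)\lambda - a(1-\beta)$:
\begin{itemize}
\item $p(1) = a\beta > 0$;
\item $p(-1) = 2 - 2a + a\beta > 0$;
\item $p(0) = -a(1-\beta) < 0$.
\end{itemize}
Hence both eigenvalues are real and lie in $(-1,1)$. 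Equivalently, $\phi(z) \neq 0$ for $|z| \leq 1$, so \Cref{lem:ar2-limiting-variance} applies and the limiting second moment equals $\gamma_0(1)$ from \eqref{eq:gamma_0}. This verification is the step I expect to need the most care, because the earlier continuity argument does not extend to $c = 1$. The sign checks above are the route I would take.

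Next, substituting $c = 1$ into \eqref{eq:gamma_0} gives
\[
\gamma_0(1) = \frac{\beta\eta^2\sigma^2\,\bigl(1 - a(1-\beta)\bigr)}{a\,\bigl(1 + a(1-\beta)\bigr)\bigl(2 - a(2-\beta)\bigr)}.
\]
I will confirm this closed form against the Yule--Walker equations of \Cref{thm:yule-walker}, since the earlier display should be double-checked. Using $a < \nicefrac18$, the factor $1 - a(1-\beta)$ is at most $1$, the factor $1 + a(1-\beta)$ is at least $1$, and $2 - a(2-\beta) \geq \nicefrac74$. This gives $\gamma_0(1) \leq \frac{4\beta\eta^2\sigma^2}{7a}$, and $\beta < a$ then yields $\gamma_0(1) \leq \frac{4}{7}\eta^2\sigma^2$.

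Finally, the SGD computation at the end of the proof of \Cref{prop:improved_convergence_quadratic_tight} gives the limiting second moment $\frac{\eta^2\sigma^2}{a(2-a)}$. Multiplying by $\eta\alpha = a$ gives $\frac{\eta^2\sigma^2}{2-a} \geq \frac{\eta^2\sigma^2}{2}$. Eight times this is at least $4\eta^2\sigma^2 \geq \gamma_0(1)$, which proves the scalar inequality; summing over coordinates completes the proof.

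The intermediate bound $\gamma_0(1) \lesssim (\beta/a)\,\eta^2\sigma^2$ is also what drives \Cref{prop:arbitrarily_large_improvement}. Taking $\beta/a \to 0$ makes the ratio of the limiting errors of the controlled and SGD (or EMA) estimators arbitrarily small.
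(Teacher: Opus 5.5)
Your proof is correct. It shares the paper's skeleton: reduce to one coordinate, specialize \eqref{eq:ar_update} to $c=1$ so that $\phi_1 = 1-a$ and $\phi_2 = a(1-\beta)$, and read off $\gamma_0(1)$ from \eqref{eq:gamma_0}. The closed form you wanted to double-check is right, since at $c=1$ one has $(1-\phi_2)^2-\phi_1^2 = a\beta\,(2-2a+a\beta)$.

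Where you differ is the final comparison.
\begin{itemize}
\item The paper bounds the ratio $\gamma_0(1)/\gamma_0(0) \le 8a$, i.e.\ it compares the controlled average to the EMA of SGD. It then needs $\gamma_0(0) < \eta^2\sigma^2/(a(2-a))$ from the proof of \Cref{prop:improved_convergence_quadratic_tight}, left implicit in ``the result then follows'', to reach the stated comparison with SGD.
\item You instead bound $\gamma_0(1)$ and $a$ times the SGD variance separately against $\eta^2\sigma^2$, bypassing the EMA entirely. Your estimates in fact give $\gamma_0(1) \le \tfrac{8}{7}\beta \cdot \eta^2\sigma^2/(a(2-a))$, a ratio to SGD of order $\beta$ rather than order $a$, and for the stated inequality they only need $\beta \le 7a$.
\item The paper's route buys the intermediate fact that the controlled average beats the uncontrolled EMA by a factor $8a$, which is the comparison its narrative emphasizes.
\end{itemize}

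Your explicit causality check at $c=1$ is a genuine improvement in rigor. It uses $p(1)=a\beta>0$, $p(-1)=2-2a+a\beta>0$ and $p(0)=-a(1-\beta)<0$. The paper only argues stability for small $c$ by continuity, and never verifies it at $c=1$ before invoking \Cref{lem:ar2-limiting-variance}.

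One small inaccuracy is in your closing remark, which is not needed for this statement. Relative to the EMA of SGD, sending $\beta/a \to 0$ at fixed $a$ does not make the ratio arbitrarily small, since $\gamma_0(1)/\gamma_0(0) \to a/(1+a)$ as $\beta \to 0$. What drives that ratio to zero is $a \to 0$. Relative to SGD your remark is fine, because that ratio is $O(\beta)$.
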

\begin{proof}
    As in the proof of \Cref{prop:improved_convergence_quadratic_tight}, we may reduce to a one-dimensional setting, writing $\alpha = \alpha_1$, $\sigma^2 = \sigma_1^2$, and $a = \eta \alpha$.  Observe that for $c = 1$, the coefficients are
    \begin{align}
        \phi_1 = 1 - a \quad \text{and} \quad \phi_2 = a (1 - \beta).
    \end{align}
    Thus, in this setting, by \eqref{eq:gamma_0}, it holds that
    \begin{align}
        \lim_{T \to \infty}\ee\left[ \norm{\thetahat_T - \mustar}^2 \right] = \gamma_0(1) = \frac{\beta \eta^2 \sigma^2 \left( 1 - a + a \beta \right)}{a (1 + a - a \beta)(2 - 2a + a \beta)},
    \end{align}
    whereas
    \begin{align}
        \lim_{T \to \infty} \ee\left[ \norm{\thetema[T] - \mustar}^2 \right] = \gamma_0(0) = \frac{\beta \eta^2 \sigma^2\left( 2 - a - \beta + a\beta \right)}{a(a + \beta - a \beta)(4 - 2a - 2\beta + a \beta)}.
    \end{align}
    Thus,
    \begin{align}
        \frac{\lim_{T \to \infty}\ee\left[ \norm{\thetahat_T - \mustar}^2 \right]}{\lim_{T \to \infty} \ee\left[ \norm{\thetema[T] - \mustar}^2 \right]} &= \frac{(2 - a)(2 - \beta)(a + \beta - a \beta)(1 - a + a \beta)}{(1  + a - a \beta)(2 - 2 a + a \beta)(2 - a - \beta + a \beta)} \\
        &\leq \frac{8 a}{(1  + a - a \beta)(2 - 2 a + a \beta)(2 - a - \beta + a \beta)} \\
        &\leq 8 a,
    \end{align}
    where in the first inequality we used the fact that
    \begin{align}
        2 -a \leq 2, \quad 2 - \beta \leq 2, \quad a + \beta - a \beta \leq a + \beta \leq 2 a, \quad \text{and} \quad 1 - a + a \beta \leq 1.
    \end{align}
    In the second inequality, we used the fact that
    \begin{align}
        1 + a - a \beta \geq 1 \quad \text{and} \quad 2 - 2a + a \beta \geq 2 - 2 a \geq 1.
    \end{align}
    The result then follows.
\end{proof}

The proof of \Cref{prop:arbitrarily_large_improvement} then follows immediately from the preceding proposition.

\end{document}